\newcommand{\eg}{e.g., }
\newcommand{\ie}{i.e., }
\DeclareAcronym{cli} {
    short = CLI,
    long = Command Line Interface,
    class = abbrev
}
\definecolor{my-full-blue}{HTML}{1F77B4}
\definecolor{my-full-orange}{HTML}{FF7F0E}
\definecolor{my-full-green}{HTML}{2CA02C}
\definecolor{my-full-red}{HTML}{d62728}
\definecolor{my-full-purple}{HTML}{9467bd}
\colorlet{my-blue}{my-full-blue!30}
\colorlet{my-orange}{my-full-orange!30}
\colorlet{my-green}{my-full-green!30}
\colorlet{my-red}{my-full-red!30}
\colorlet{my-purple}{my-full-purple!30}
\definecolor{ckeyword}{HTML}{7F0055}
\definecolor{ccomment}{HTML}{3F7F5F}
\definecolor{cstring}{HTML}{2A0099}
\lstdefinestyle{numbers}{
	numbers=left,
	%
	framexleftmargin=20pt,
	%
	numberstyle=\tiny,
	%
	firstnumber=auto,
	%
	numbersep=1em,
	%
	xleftmargin=2em
}
\lstdefinestyle{layout}{
	frame=none,
	%
	captionpos=b,
}
\lstdefinestyle{comment-style}{
	morecomment=[l]//,
	%
	morecomment=[s]{/*}{*/},
	%
	commentstyle={\color{ccomment}\itshape},
}
\lstdefinestyle{string-style}{
	%
	morestring=[b]",%
	%
	morestring=[b]',%
	%
	stringstyle={\color{cstring}},
	%
	showstringspaces=false,%
}
\lstdefinestyle{keyword-style}{
	%
	keywordstyle={\ttfamily\bfseries},
	%
	morekeywords={
		function,
		constructor,
		int,
		bool,
		return,
		returns,
		uint
	},
	%
	morekeywords = [2]{},
	keywordstyle = [2]{\text},
	%
	%
	sensitive=true,
}
\lstdefinestyle{input-encoding}{
	inputencoding=utf8,
	%
	%
	extendedchars=false,
	%
	%
	literate=
	{ℝ}{$\reals$}1%
	{→}{$\rightarrow$}1%
	{α}{$\alpha$}1%
	{β}{$\beta$}1%
	{λ}{$\lambda$}1%
	{θ}{$\theta$}1%
	{ϕ}{$\phi$}1%
}
\lstdefinestyle{escaping}{
	%
	moredelim={**[is][\color{blue}]{\%}{\%}},
	%
	%
	escapechar=|,
	%
	%
	mathescape=true
}
\lstdefinestyle{default-style}{
	%
	basicstyle=\fontencoding{T1}\ttfamily\footnotesize,
	style=numbers,
	style=layout,
	style=comment-style,
	style=string-style,
	style=keyword-style,
	style=input-encoding,
	style=escaping,
	%
	%
	%
	tabsize=2,
	%
	upquote=true
}
\lstdefinelanguage{BASIC}{
	language=C++,
	style=default-style
}[keywords,comments,strings]%
\theoremstyle{definition}
\definecolor{myblue}{HTML}{3399CC}
\definecolor{mygreen}{HTML}{00AA00}
\def\1{\bm{1}}
\DeclareMathAlphabet{\mathsfit}{\encodingdefault}{\sfdefault}{m}{sl}
\SetMathAlphabet{\mathsfit}{bold}{\encodingdefault}{\sfdefault}{bx}{n}
\def\sR{{\mathbb{R}}}
\DeclareMathOperator*{\argmin}{arg\,min}
\newcommand{\app}[1]{%
\ifbool{includeappendix}{\cref{#1}}{the appendix}%
}
\newcommand{\App}[1]{%
\ifbool{includeappendix}{\cref{#1}}{The appendix}%
}
\Crefname{lem}{Lemma}{Lemmas}
\title{Efficient Certification of Spatial Robustness}
\author{
    Anian Ruoss, Maximilian Baader, Mislav Balunovi\'{c}, Martin Vechev \\
}
\begin{document}

    \maketitle


    \begin{abstract}
        Recent work has exposed the vulnerability of computer vision models to vector
field attacks.
Due to the widespread usage of such models in safety-critical applications, it
is crucial to quantify their robustness against such spatial transformations.
However, existing work only provides empirical robustness quantification against
vector field deformations via adversarial attacks, which lack provable
guarantees.
In this work, we propose novel convex relaxations, enabling us, for the
first time, to provide a certificate of robustness against vector field
transformations.
Our relaxations are model-agnostic and can be leveraged by a wide range
of neural network verifiers.
Experiments on various network architectures and different datasets demonstrate
the effectiveness and scalability of our method.

    \end{abstract}

    \section{Introduction}
\label{sec:introduction}

It was recently shown that neural networks are susceptible not only to standard
noise-based adversarial perturbations~\cite{szegedy2014intriguing,
goodfellow2015explaining, carlini2016towards, madry2018towards} but also to
\emph{spatially transformed} images that are visually indistinguishable from
the original~\cite{kanbak2018geometric, alaifari2018adef, xiao2018spatially,
engstrom2019exploring}.
Such spatial attacks can be modeled by smooth \emph{vector fields} that
describe the displacement of every pixel.
Common geometric transformations, \eg rotation and translation, are particular
instances of these smooth vector fields, which indicates that they capture
a wide range of naturally occurring image transformations.

Since the vulnerability of neural networks to spatially transformed adversarial
examples can pose a security threat to computer vision systems relying on such
models, it is critical to quantify their robustness against spatial
transformations.
A common approach to estimate neural network robustness is to measure the
success rate of strong attacks~\cite{carlini2016towards, madry2018towards}.
However, many networks that are indeed robust against these attacks were later
broken using even more sophisticated attacks~\cite{athalye2018robustness,
athalye2018obfuscated, engstrom2018evaluating, tramer2020adaptive}.
The key issue is that such attacks do not provide provable robustness guarantees.

\begin{figure}
    \begin{center}
        \begin{subfigure}[t]{.19\linewidth}
            \centering
            \includegraphics[width=\textwidth]{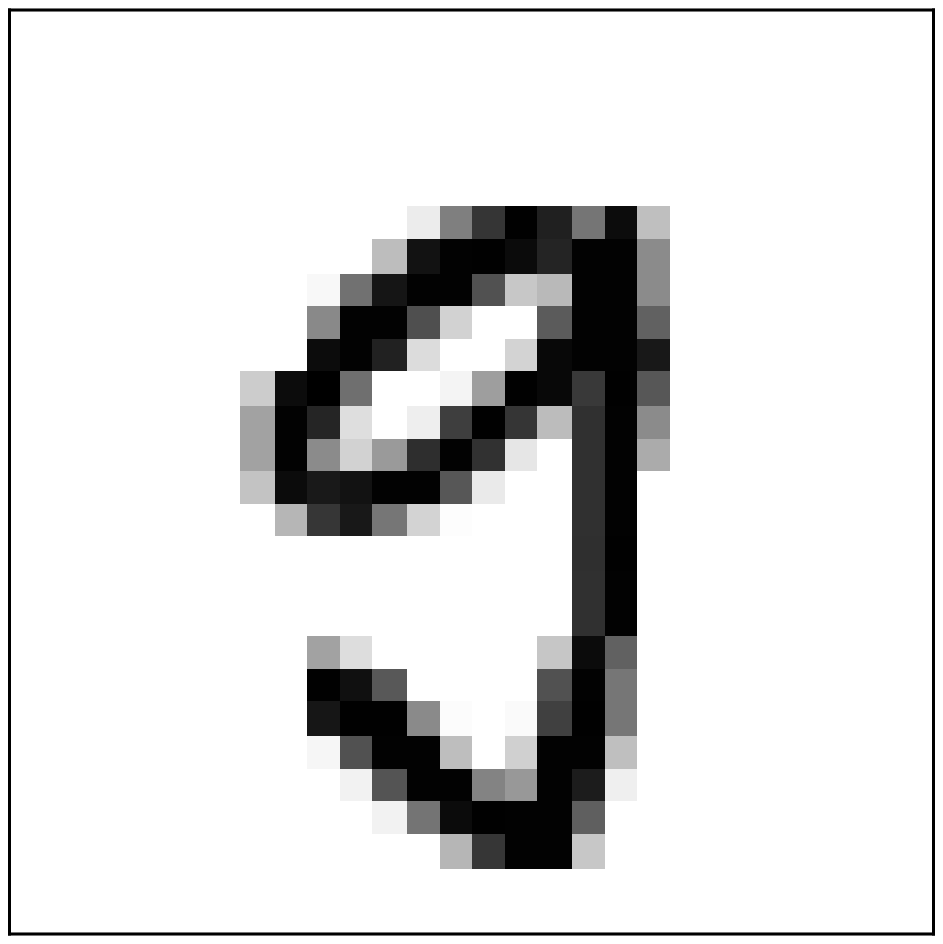}
            \subcaption{Original}
            \label{fig:intro:orig}
        \end{subfigure}
        \begin{subfigure}[t]{.38\linewidth}
            \centering
            \includegraphics[width=0.5\textwidth]{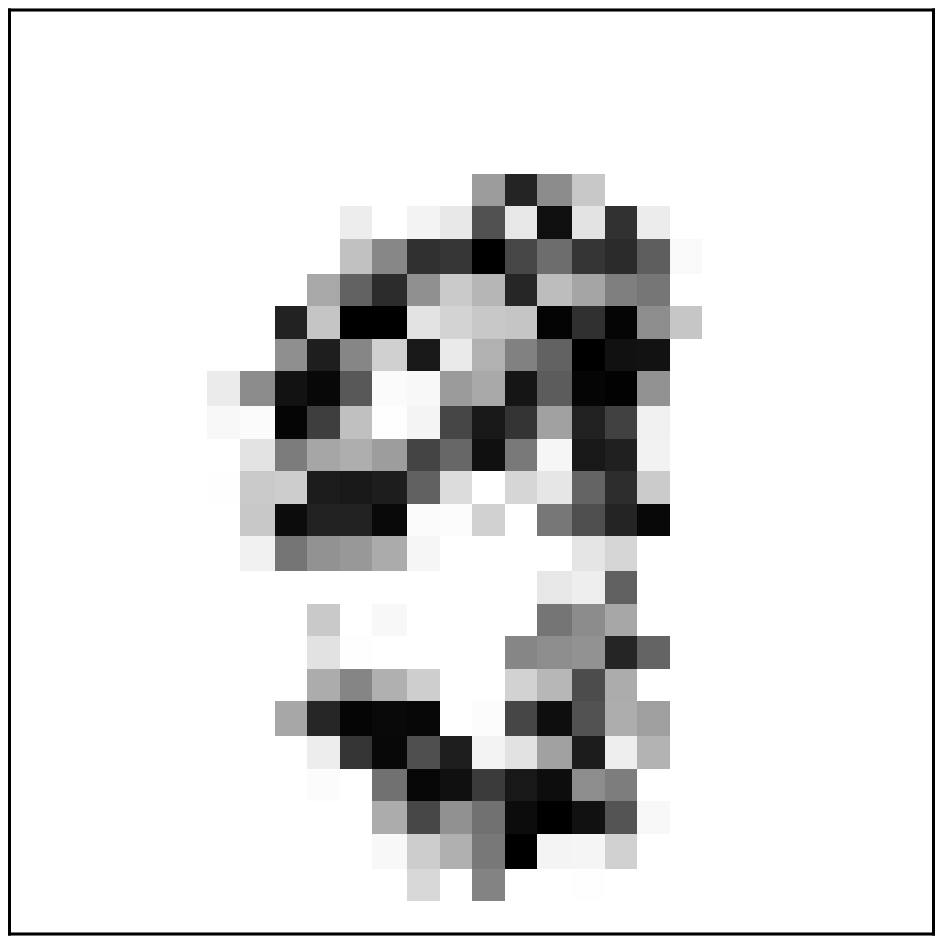}
            \hspace{-0.2cm}
            \includegraphics[width=0.5\textwidth]{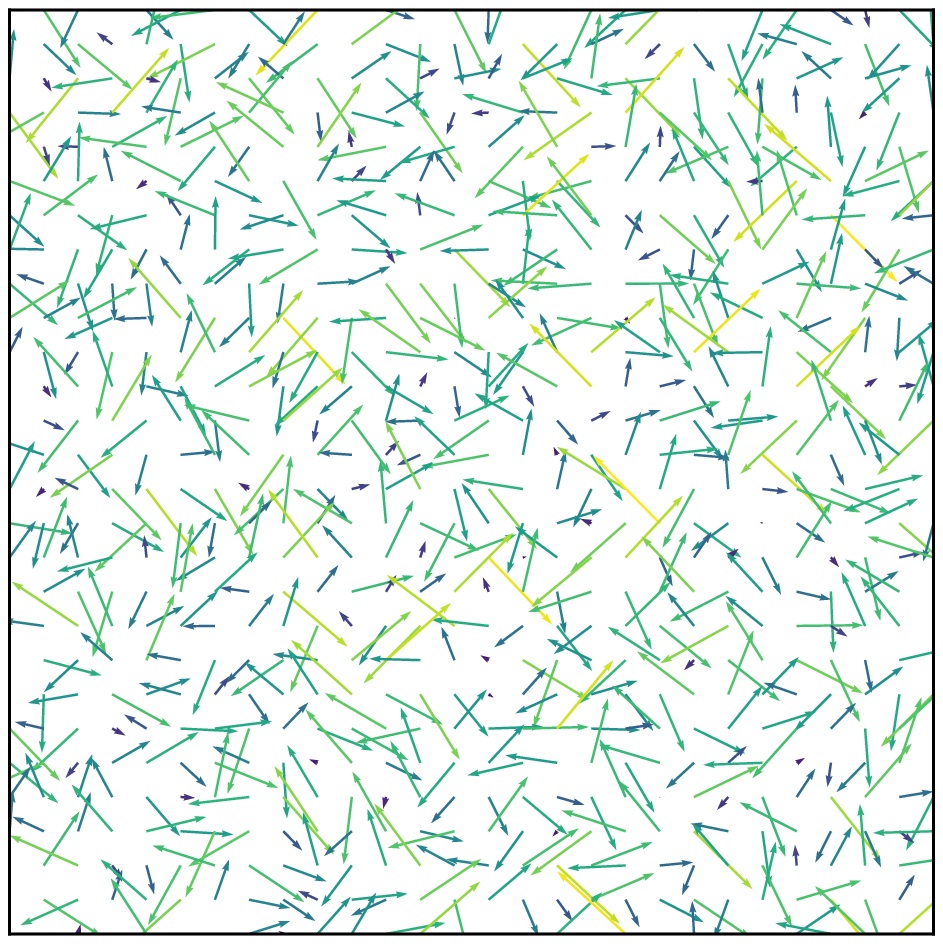}
            \subcaption{$\delta = 1$, $\gamma = \infty$}
            \label{fig:intro:non-smooth}
        \end{subfigure}
        \begin{subfigure}[t]{.38\linewidth}
            \centering
            \includegraphics[width=0.5\textwidth]{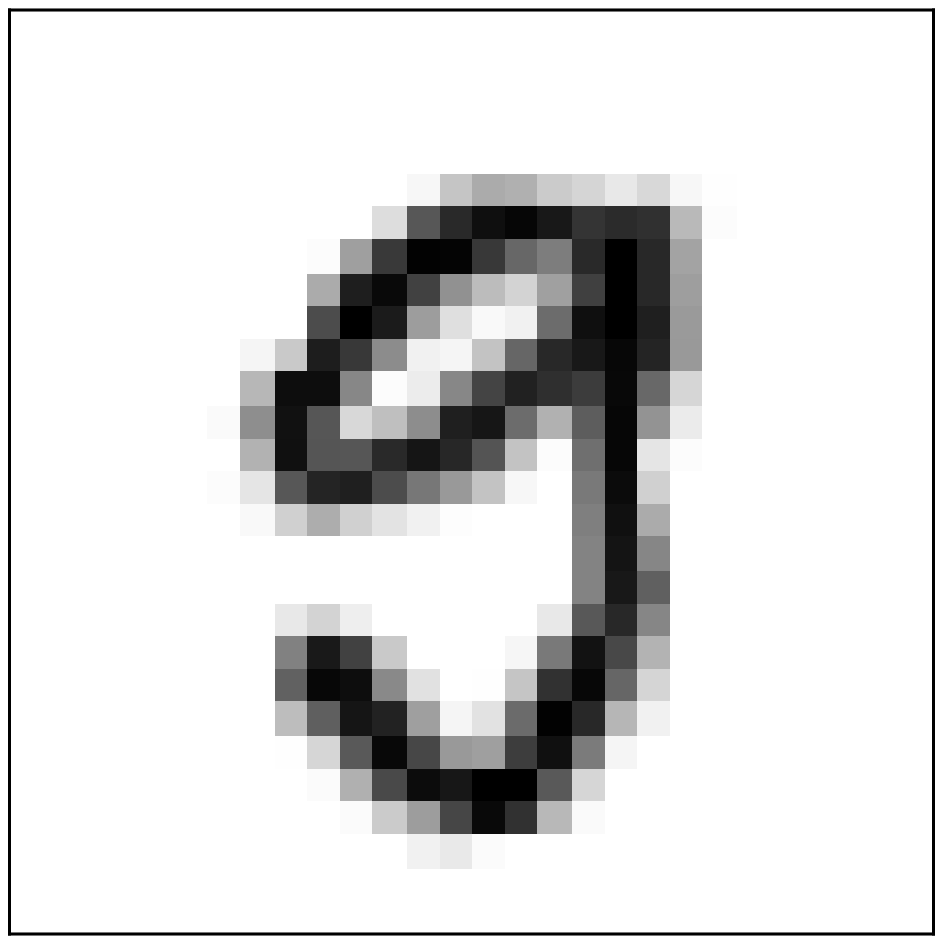}
            \hspace{-0.2cm}
            \includegraphics[width=0.5\textwidth]{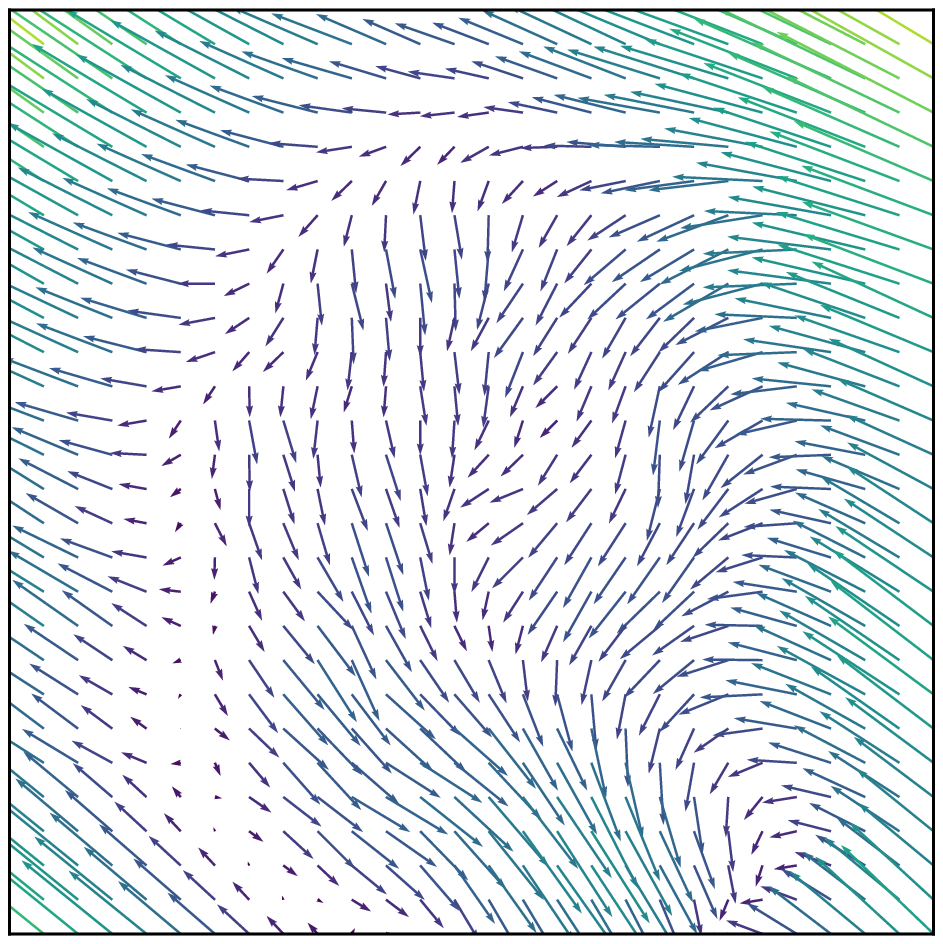}
            \subcaption{$\delta = 3$, $\gamma = 0.25$}
            \label{fig:intro:smooth}
        \end{subfigure}
    \end{center}
    \caption{
        Image instances and corresponding deforming vector fields
        ($\gamma$ and $\delta$ explained below): (a)~original,
        (b)~adversarially deformed (label 5) by non-smooth vector field, and
        (c)~adversarially deformed (label 3) by smooth vector field.
    }
    \label{fig:intro}
\end{figure}

To address this issue, our goal is to provide a provable \emph{certificate} that
a neural network is robust against all possible deforming vector fields within
an attack budget for a given dataset of images.
While various certification methods exist, they are limited to noise-based
perturbations~\cite{katz2017reluplex, gehr2018ai2, wong2018provable, singh2018fast,
zhang2018efficient} or compositions of common geometric transformations (\eg
rotations and translations)~\cite{pei2017towards, singh2019abstract,
balunovic2019geometric, mohapatra2020towards} and thus cannot be applied to our
setting.

A common approach in robustness certification is to compute, for a given image,
pixel bounds containing all possible perturbed images within some attack
budget, and then propagate these bounds through the network to obtain bounds
on the output neurons.
Then, if all images within the output bounds classify to the correct label, the
network is provably robust against all attacks limited to the same attack
budget.
In our work, we intuitively parametrize the attack budget by the magnitude of
pixel displacement, denoted by $\delta$, and the smoothness of the vector
field, denoted by $\gamma$.
This allows us to efficiently compute the tightest-possible pixel interval
bounds on vector field deformations limited to a given displacement magnitude
$\delta$ by using a mathematical analysis of the transformation.
However, even small but non-smooth vector fields (\ie small $\delta$ but large
$\gamma$) can generate large pixel differences, resulting in recognizably
perturbed images (\cref{fig:intro:non-smooth}) and leading to large pixel
bounds, which limit certification performance.
Thus, a key challenge is to define smoothness constraints that can be
efficiently incorporated with neural network verifiers to enable certification
of smooth vector fields with large displacement magnitude
(\cref{fig:intro:smooth}).

Hence, we tighten our convex relaxation for smooth vector fields by introducing
smoothness constraints that can be efficiently incorporated into
state-of-the-art verifiers~\cite{tjeng2019evaluating, singh2019beyond,
singh2019boosting}.
To that end, we leverage the idea of computing linear constraints on the pixel
values in terms of the transformation parameters~\cite{balunovic2019geometric,
mohapatra2020towards}.
We show that our mathematical analysis of the transformation induces an
optimization problem for computing the linear constraints, which can be
efficiently solved by linear programming.
Finally, we show that the idea by \citet{balunovic2019geometric,
mohapatra2020towards} alone is insufficient for the setting of smooth vector
fields, and only the combination with our smoothness constraints yields superior
certification performance.
We implement our method in an open-source system and show that our convex
relaxations can be leveraged to, for the first time, certify neural network
robustness against vector field attacks.

\paragraph{Key contributions} We make the following contributions:

\begin{itemize}
    \item A novel method to compute the tight interval bounds for
        norm-constrained vector field attacks, enabling the first certification
        of neural networks against vector field attacks.
    \item A tightening of our relaxation for smooth vector fields and
        integration with state-of-the-art robustness certifiers.
    \item An open-source implementation together with extensive
        experimental evaluation on the MNIST and CIFAR-10 datasets,
        with convolutional and large residual networks.
        We make our code publicly available as part of the ERAN framework for
        neural network verification (available at
        \url{https://github.com/eth-sri/eran}).
\end{itemize}

    \section{Related Work}
\label{sec:related-work}

Here we discuss the most relevant related work on spatial robustness and certification
of neural networks.

\paragraph{Empirical spatial robustness}

In addition to previously known adversarial examples based on $\ell_p$-norm
perturbations, it has recently been demonstrated that adversarial examples can
also be constructed via geometric transformations~\cite{kanbak2018geometric},
rotations and translations~\cite{engstrom2019exploring}, Wasserstein
distance~\cite{wong2019wasserstein, levine2020wasserstein, hu2020improved}, and
vector field deformations~\cite{alaifari2018adef, xiao2018spatially}.
Here, we use a threat model based on vector field deformations for which both
prior works have proposed attacks: \citet{alaifari2018adef} perform first-order
approximations to find minimum-norm adversarial vector fields, and
\citet{xiao2018spatially} relax vector field smoothness constraints with a
continuous loss to find perceptually realistic adversarial examples.

\paragraph{Robustness certification}

There is a long line of work on certifying the robustness of neural networks to
noise-based perturbations.
These approaches employ SMT solvers~\cite{katz2017reluplex}, mixed-integer
linear programming~\cite{tjeng2019evaluating, singh2019boosting},
semidefinite programming~\cite{raghunathan2018semidefinite}, and linear
relaxations~\cite{gehr2018ai2, wong2018provable, singh2018fast, zhang2018efficient,
wang2018efficient, weng2018towards, singh2019abstract, salman2019convex,
lin2019robustness}.
Another line of work also considers certification via randomized
smoothing~\cite{lecuyer2019certified, cohen2019smoothing, salman2019provably},
which, however, only provides probabilistic robustness guarantees for smoothed
models whose predictions cannot be evaluated exactly, only approximated to
arbitrarily high confidence.

\paragraph{Certified spatial robustness}

Prior work introduces certification methods for special cases of spatial
transformations: a finite number of transformations~\cite{pei2017towards},
rotations~\cite{singh2019abstract}, and compositions of common geometric
transformations~\cite{balunovic2019geometric, mohapatra2020towards}.
Some randomized smoothing approaches exist, but they only handle single
parameter transformations~\cite{li2020provable} or transformations without
compositions~\cite{fischer2020certified}.
In a different setting, \citet{wu2020robustness} compute the maximum safe
radius on optical flow video perturbations but only for a finite set of
neighboring grid points.
Overall, previous approaches are limited because they only certify
transformations in specific templates that can be characterized as special cases
of smooth vector field deformations.
Certifying these vector field deformations is precisely the goal of our work.

While rotations are special cases of smooth deformations, even small rotations
can cause not only large pixel displacements (\ie large $\delta$) far from the
center of rotation but also large smoothness constraints (\ie large $\gamma$)
due to the singularity at the center of rotation.
Since such large $\gamma$ and $\delta$ values are currently out of reach for our
method, we advise using specialized certification
methods~\cite{balunovic2019geometric, mohapatra2020towards} when considering
threat models consisting only of rotations.
However, we could combine our approach with these specialized certification
methods, \eg by instantiating DeepG~\cite{balunovic2019geometric} with our
interval bounds, which would correspond to first deforming an image by a vector
field and then rotating the deformed image.

    \section{Background}
\label{sec:background}

Here we introduce our notation, define the similarity metric for spatially
transformed images, and provide the necessary background for neural network
certification.

\paragraph{Vector field deformations}

We represent an image as a function
$I \colon P \subseteq \sR^2 \rightarrow \sR^C$, where $C$ is the number of color
channels, and $P = \{1, 2, \ldots, W\}^2$ corresponds to the set of pixel
coordinates of the image with dimension $W \times W$.

Vector field transformations are parameterized by a vector field
$\tau \colon P \rightarrow \sR^2$ assigning a displacement vector $\tau(i,j)$ to
every pixel $(i,j)$.
Thus, we obtain the deformed pixel coordinates via
$(\mathbb{I} + \tau) \colon P \rightarrow \mathbb{R}^2$ with
$(\mathbb{I} + \tau) (i, j) = (i, j) + \tau(i, j)$, where $\mathbb{I}$ is the
identity operator.
Since these deformed coordinates may not lie on the integer grid, we use
bilinear interpolation $\mathcal{I}_I : \sR^2 \rightarrow \sR^C$ induced by the
image $I$ and evaluated at $(i,j) + \tau(i,j)$ to get the deformed pixel values:
\begin{align}
    \mathcal{I}_I \left( i, j \right) &:=
    \begin{cases}
        \mathcal{I}_I^{mn} \left( i, j \right) & \text{if} \left( i, j \right) \in A_{mn}
    \end{cases}, \nonumber
    \\
    \mathcal{I}^{mn}_I \left( i, j \right) &:=
    \sum_{\substack{
        p \in \left\{ m, m + 1 \right\} \\ q \in \left\{ n, n + 1 \right\}
    }}
    I(p, q)
    \left( 1 - \lvert p - i \rvert \right)
    \left( 1 - \lvert q - j \rvert \right),
    \label{eq:bil_interpolation}
\end{align}
where $A_{mn} := \left[ m, m + 1 \right] \times \left[ n, n + 1 \right]$ is an
\emph{interpolation region}.
Hence, we define the deformed image as
$\mathcal{I}_I \circ \left(\mathbb{I} + \tau\right)$.
Like \citet{alaifari2018adef}, we only study vector fields that do not move
pixels outside the image.

\paragraph{Estimating perceptual similarity}

For noise-based adversarial attacks, an adversarial image $I_{adv}$ is typically
considered perceptually similar to the original image $I$ if the $\ell_p$-norm
of the perturbation $\left\| I - I_{adv} \right\|_p$ is
small~\cite{szegedy2014intriguing, goodfellow2015explaining, carlini2016towards,
madry2018towards}.
However, prior work~\cite{alaifari2018adef, xiao2018spatially} has demonstrated
that this is not necessarily a good measure for spatially transformed images.
For example, translating an image by a small amount will typically produce an
image that looks very similar to the original but results in a large
perturbation with respect to the $\ell_p$-norm.
For this reason, the similarity of spatially transformed images is typically
estimated with a norm on the deforming vector field and not on the pixel value
perturbation.
Here, we consider the $T_p$-norm~\cite{alaifari2018adef}, defined as
\begin{equation*}
    \left\| \tau \right\|_{T_p} :=
    \max_{(i,j) \in P} \left\| \tau \left( i, j \right) \right\|_p.
\end{equation*}
We consider $p \in \{1, 2, \infty\}$ and note that the case $p=2$ corresponds
to the norm used by \citet{alaifari2018adef}.
Intuitively, a vector field with $T_2$-norm at most $1$ will displace any pixel
by at most one grid length on the image grid.
In general, for a vector field $\tau$ with
$\left\| \tau \right\|_{T_p} \leq \delta$, the set of reachable coordinates
from a single pixel $\left( i, j \right) \in P$ is
\begin{equation*}
    B_\delta^p \left( i,j \right) :=
    \left\{
    x \in \mathbb{R}^2 \mid
    \left\| \left( i, j \right)  - x \right\|_p \leq \delta
    \right\}.
\end{equation*}

However, there may be vector fields with small $T_p$-norm that produce
unrealistic images.
For example, moving every pixel independently by at most one grid length can
already result in very pixelated images that can be easily recognized as
unnatural when comparing with the original (\cref{fig:intro:non-smooth}).
To address this, \citet{xiao2018spatially} introduce a flow loss that
penalizes the vector field's lack of smoothness.
Following this approach, we say that vector field $\tau$ has flow $\gamma$ if
it satisfies, for each pixel $(i, j)$, the \textit{flow-constraints}
\begin{equation}
    \label{eq:flow}
    ||\tau(i, j) - \tau(i', j')||_\infty \leq \gamma,
    \forall \left(i', j'\right) \in \mathcal{N}(i, j),
\end{equation}
where $\mathcal{N}(i, j) \subseteq P$ represents the set of neighboring pixels
in the 4 cardinal directions of pixel $(i, j)$.
For instance, translation is parametrized by a vector field that has flow $0$
(each pixel has the same displacement vector).
These constraints enforce smoothness of the vector field $\tau$, which in turn
ensures that transformed images look realistic and better preserve image
semantics -- even for large values of $\delta$ (\cref{fig:intro:smooth}).
We provide a more thorough visual investigation of the norms and constraints
considered in~\cref{sec:visual-investigation}.

\begin{figure*}
    \begin{center}
    \resizebox{0.75\linewidth}{!}{ \begin{tikzpicture}

    \pgfmathsetmacro{\dx}{3.25}
    \pgfmathsetmacro{\dy}{2}

    \draw (-1.25,-5.5) rectangle (11,3.5);
    \draw (-6.5,-5.75) rectangle (11.25,3.75);
    \node (L1) at (-3.5, 3.25) {Pixel Interval Bounds};
    \node (LP1) at (-1.25, 3.25) {};
    \node (L2) at (-4.5, -5.25) {Spatial Constraints};
    \node (LP2) at (-6.5, -5.25) {};
    \path [->] (L1) edge node[left,below] {} (LP1);
    \path [<-] (LP2) edge node[left,below] {} (L2);

    \node[shape=circle,draw=gray] (Vx) at (-0.75 * \dx,\dy) {$v_x$};
    \node[shape=circle,draw=gray] (Vy) at (-0.75 * \dx,0) {$v_y$};
    \node[shape=circle,draw=gray] (Wx) at (-0.75 * \dx,-\dy) {$w_x$};
    \node[shape=circle,draw=gray] (Wy) at (-0.75 * \dx,-2*\dy) {$w_y$};

    \node[shape=circle,draw=myblue,line width=2] (A) at (0,0) {$x_{0}$};
    \node[shape=circle,draw=mygreen,line width=2] (B) at (0,-\dy) {$x_{1}$};
    \node[shape=circle,draw=gray] (C) at (\dx,0) {$x_{2}$};
    \node[shape=circle,draw=gray] (D) at (\dx,-\dy) {$x_{3}$};
    \node[shape=circle,draw=gray] (E) at (2*\dx,0) {$x_{4}$};
    \node[shape=circle,draw=gray] (F) at (2*\dx,-\dy) {$x_{5}$};
    \node[shape=circle,draw=gray] (G) at (3*\dx,0) {$x_{6}$} ;
    \node[shape=circle,draw=gray] (H) at (3*\dx,-\dy) {$x_{7}$} ;

    \path[->,dash pattern=on 4pt off 4pt] (Vx) edge node[left,above] {} (A);
    \path[->,dash pattern=on 4pt off 4pt] (Vy) edge node[left,above] {} (A);
    \path[->,dash pattern=on 4pt off 4pt] (Wx) edge node[left,above] {} (B);
    \path[->,dash pattern=on 4pt off 4pt] (Wy) edge node[left,above] {} (B);

    \path [->](A) edge node[left,above] {2} (C);
    \path [->](B) edge node[left,below] {1} (D);
    \path [->](A) edge node[pos=0.3,left,above] {-1} (D);
    \path [->](B) edge node[pos=0.3,left,below] {-1} (C);

    \path [->](C) edge node[left,above] {$\max(0,x_{2})$} (E);
    \path [->](D) edge node[left,below] {$\max(0,x_{3})$} (F);

    \path [->](E) edge node[left,above] {-2} (G);
    \path [->](F) edge node[left,below] {1} (H);
    \path [->](E) edge node[pos=0.3,left,above] {-1} (H);
    \path [->](F) edge node[pos=0.3,left,below] {0} (G);

    \coordinate (vx0) at (Vx);
    \node at (vx0) [left=0.2*\dx of vx0] { $ -0.5 \leq v_x \leq 0.5 $ };
    \coordinate (vy0) at (Vy);
    \node at (vy0) [left=0.2*\dx of vy0] { $ -0.5 \leq v_y \leq 0.5 $ };

    \coordinate (v0) at ($(vx0)!0.5!(vy0)$);
    \node at (v0) [left=0.2*\dx of v0] {
        $\begin{aligned}
            x_0 \geq 0.5 v_x, \\
            x_0 \leq 0.125 + 0.25 v_x
        \end{aligned}$
    };

    \coordinate (wx0) at (Wx);
    \node at (wx0) [left=0.2*\dx of wx0] { $ -0.5 \leq w_x \leq 0.5 $ };
    \coordinate (wy0) at (Wy);
    \node at (wy0) [left=0.2*\dx of wy0] { $ -0.5 \leq w_y \leq 0.5 $ };

    \coordinate (w0) at ($(wx0)!0.5!(wy0)$);
    \node at (w0) [left=0.2*\dx of w0] {
        $\begin{aligned}
            0.5 + 0.5 w_x \leq x_1, \\
            x_1 \leq 0.5 + 0.5 w_x
        \end{aligned}$
    };

    \coordinate (x0) at (A);
    \node at (x0) [above=0.5*\dy of x0] {
        $\begin{aligned}
            & x_0 \ge 0,\\
            & x_0 \le 0.25,\\
            & l_0 = 0,\\
            & u_0 = 0.25
        \end{aligned}$
    };

    \coordinate (x1) at (B);
    \node at (x1) [below=0.5*\dy of x1] {
        $\begin{aligned}
            & x_1 \ge 0.25,\\
            & x_1 \le 0.75,\\
            & l_1 = 0.25,\\
            & u_1 = 0.75
        \end{aligned}$
    };

    \coordinate (x2) at (C);
    \node at (x2) [above=0.5*\dy of x2] {
        $\begin{aligned}
            & x_2 \ge 2 x_0 - x_1 + 0.25,\\
            & x_2 \le 2 x_0 - x_1 + 0.25,\\
            & l_2 = -0.5,\\
            & u_2 = 0.5
        \end{aligned}$
    };

    \coordinate (x2) at (C);
    \node at (x2) [above=0.2*\dy of x2] {0.25};

    \coordinate (x3) at (D);
    \node at (x3) [below=0.5*\dy of x3] {
        $\begin{aligned}
            & x_3 \ge x_1 - x_0 + 0.125,\\
            & x_3 \le x_1 - x_0 + 0.125,\\
            & l_3 = 0.125,\\
            & u_3 = 0.875
        \end{aligned}$
    };

    \coordinate (x3) at (D);
    \node at (x3) [below=0.2*\dy of x3] {0.125};

    \coordinate (x4) at (E);
    \node at (x4) [above=0.5*\dy of x4] {
        $\begin{aligned}
            & x_4 \ge 0,\\
            & x_4 \le 0.5 x_2 + 0.25,\\
            & l_4 = 0,\\
            & u_4 = 0.5
    \end{aligned}$ };

    \coordinate (x5) at (F);
    \node at (x5) [below=0.5*\dy of x5] {
        $\begin{aligned}
            & x_5 \ge x_3,\\
            & x_5 \le x_3,\\
            & l_5=0.125,\\
            & u_5=0.875
        \end{aligned}$
    };

    \coordinate (x6) at (G);
    \node at (x6) [above=0.5*\dy of x6] {
        $\begin{aligned}
            & x_6 \ge -2 x_4,\\
            & x_6 \le -2 x_4,\\
            & l_6 = -1,\\
            & u_6 = 0
        \end{aligned}$
    };

    \coordinate (x6) at (G);
    \node at (x6) [above=0.2*\dy of x6] {0};

    \coordinate (x7) at (H);
    \node at (x7) [below=0.5*\dy of x7] {
        $\begin{aligned}
            & x_7 \ge x_5 -x_4,\\
            & x_7 \le x_5 -x_4,\\
            & l_7 = -0.375,\\
            & u_7 = 0.875
        \end{aligned}$
    };

    \coordinate (x7) at (H);
    \node at (x7) [below=0.2*\dy of x7] {0};

\end{tikzpicture} }
    \end{center}
    \caption{
        Convex relaxation of a sample neural network with inputs $x_0$ and
        $x_1$, and vector field components $v_x$, $v_y$, $w_x$, and $w_y$.
    }
    \label{fig:toy-network}
\end{figure*}
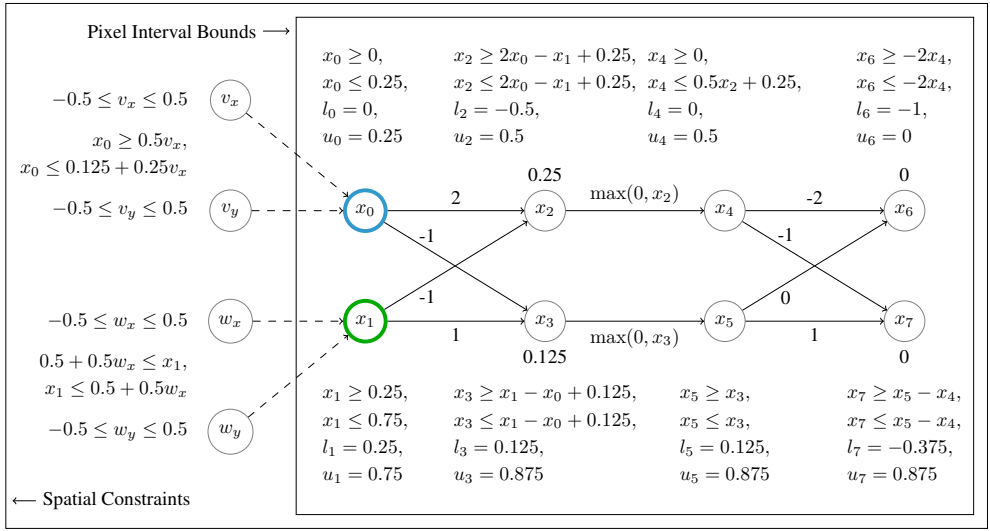

\paragraph{Robustness certification}

Robustness of neural networks is typically certified by (i) computing a convex
shape around the input we want to certify (an over-approximation) and
(ii)~propagating this shape through all operations in the network to obtain a
final output shape.
Robustness is then proven if all concrete outputs inside the output shape
classify to the correct class.
For smaller networks, an input shape can be propagated exactly using
mixed-integer linear programming~\cite{tjeng2019evaluating}.
To scale to larger networks, standard approaches over-approximate the shape
using various convex relaxations: intervals~\cite{gowal2019scalable},
zonotopes~\cite{gehr2018ai2, singh2018fast, weng2018towards}, and restricted
polyhedra~\cite{zhang2018efficient, singh2019abstract}, just to name a few.
In this work, we build on the convex relaxation DeepPoly, an instance of
restricted polyhedra, introduced by \citet{singh2019abstract}, as it provides
a good trade-off between scalability to larger networks and precision.
For every pixel, DeepPoly receives a lower and upper bound on the pixel value,
\ie an input shape in the form of a box.
This shape is then propagated by maintaining one lower and upper linear
constraint for each neuron.
Here, we first show how to construct a tight box around all spatially
transformed images.
However, since this box does not capture the relationship between neighboring
pixels induced by flow-constraints, it contains spurious images that cannot be
produced by smooth vector fields.
To address this, we tighten the convex relaxation for smooth vector fields by
incorporating flow-constraints.

\paragraph{Problem statement}

Our goal is to prove local robustness against vector field attacks constrained
by maximal pixel displacement, \ie all vector fields with $T$-norm smaller than
$\delta$.
That is, for every image from the test set, we try to compute a certificate that
guarantees that no vector field with a displacement magnitude smaller than
$\delta$ can change the predicted label.
Furthermore, since smooth deformations are more realistic, we also consider the
case where the vector fields additionally need to satisfy our flow-constraints,
\ie that neighboring deformation vectors can differ by at most $\gamma$ in
$\ell_\infty$-norm.

    \section{Overview}
\label{sec:overview}

We now provide an end-to-end example of how to compute our convex relaxation of
vector field deformations and use it to certify the robustness of the toy
network in~\cref{fig:toy-network}.
This network propagates inputs $x_0$ and $x_1$ according to the weights
annotated on the edges.
Neurons $x_2$, $x_3$ and $x_4$, $x_5$ denote the pre- and
post-activation values, and $x_6$, $x_7$ are the network logits.
We augment this network by vector field components $v_x$, $v_y$, $w_x$, and
$w_y$, to introduce the flow-constraints.

The concrete inputs to the neural network are the pixels marked with blue
($x_0$) and green ($x_1$), shown in~\cref{fig:sample}.
The image is perturbed by a vector field of $T_\infty$-norm $\delta=0.5$ and
flow $\gamma = 0.25$.
Thus the blue and green pixels are allowed to move in the respective rectangles
shown in~\cref{fig:sample}.
However, to satisfy the flow-constraints, their deformation vectors can differ
by at most $\gamma=0.25$ in each coordinate.

Our objective is to certify that the neural network classifies the input to the
correct label regardless of its deformed pixel positions.
A simple forward pass of the pixel values $x_0 = 0$ and $x_1 = 0.5$ yields
logit values $x_6 = 0$ and $x_7 = 0.625$.
Assuming that the image is correctly classified, we thus need to prove that the
value of neuron $x_7$ is greater than the value of neuron $x_6$ for all
admissible smooth vector field transformations.
To that end, we will first compute interval bounds for the pixels without
using the relationship between vector field components $v_x$, $v_y$, $w_x$, and
$w_y$, and then, in a second step, we tighten the relaxation for smooth vector
fields by introducing linear constraints on $x_0$ and $x_1$ in terms of $v_x$,
$v_y$, $w_x$, and $w_y$ to exploit the flow-constraint relationship.
Propagation of our convex relaxation through the network closely follows
\citet{singh2019abstract}, and we provide a full formalization of our methods
in~\cref{sec:convex_relaxation}.

\paragraph{Calculating interval bounds}

The first part of our convex relaxation is computing upper and lower interval
bounds for the values that the blue and green pixel can attain on their
$\ell_\infty$-neighborhood of radius $\delta = 0.5$.
For both pixels, the minimum and maximum are attained on the left and right
border of the $\ell_\infty$-ball, respectively.
Using bilinear interpolation from \cref{eq:bil_interpolation}, we thus obtain
the interval bounds $[l_0, u_0] = [0, 0.25]$ for $x_0$ and
$[l_1, u_1] = [0.25, 0.75]$ for $x_1$.

\begin{figure*}
    \begin{center}
        \hspace{0.025\linewidth}
        \begin{subfigure}{0.2\linewidth}
            \includegraphics[width=\linewidth]{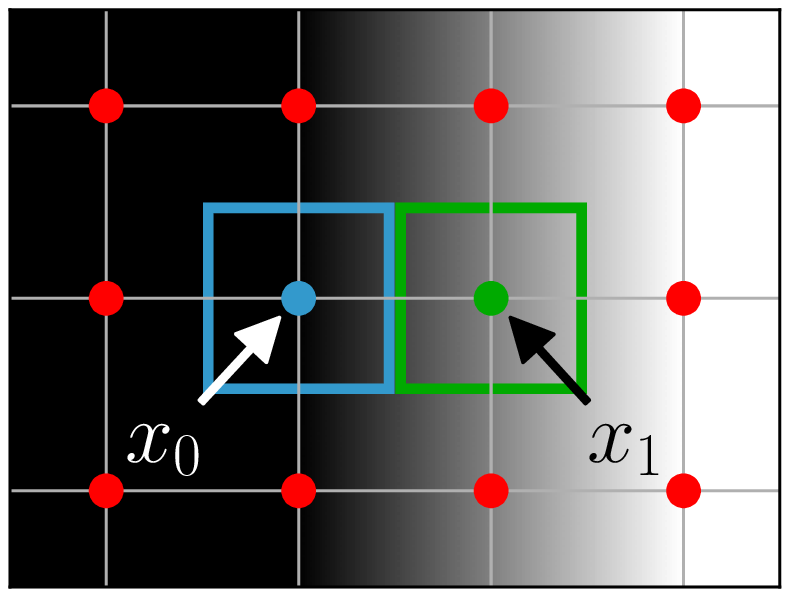}
            \caption{Sample image}
            \label{fig:sample}
        \end{subfigure}
        \hspace{0.01\linewidth}
        \begin{subfigure}{0.75\linewidth}
            \begin{subfigure}{0.45\linewidth}
                \includegraphics[width=\linewidth]{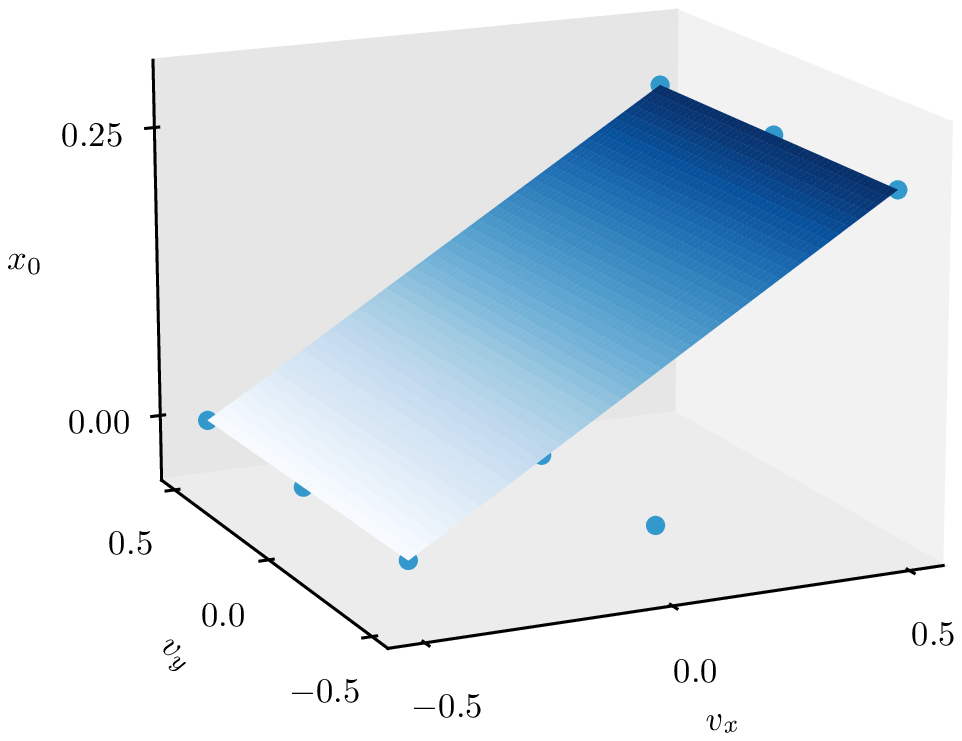}
                \caption{Upper-bounding plane for $x_0$}
                \label{fig:upper_plane}
            \end{subfigure}
            \begin{subfigure}{0.45\linewidth}
                \includegraphics[width=\linewidth]{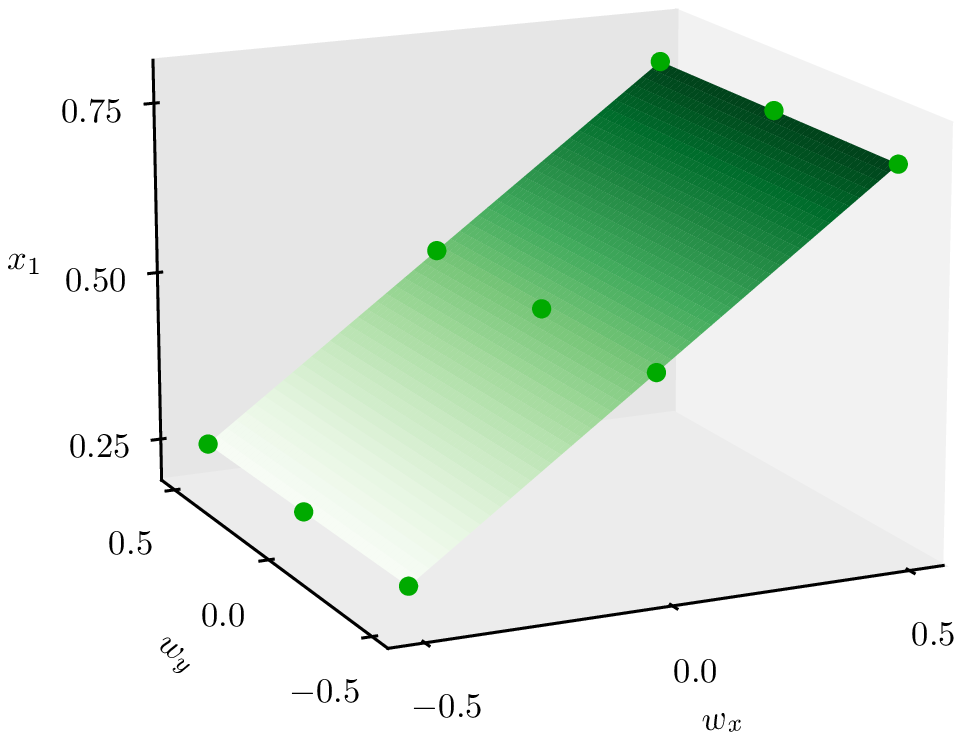}
                \caption{Lower-bounding plane for $x_1$}
                \label{fig:lower_plane}
            \end{subfigure}
        \end{subfigure}
    \end{center}
    \caption{
        Bilinear interpolation for a sample image with dots indicating pixel
        positions and squares denoting the set of reachable coordinates
        $B_{0.5}^\infty$ for pixels $x_0$ and $x_1$.
        \Cref{fig:upper_plane,fig:lower_plane} display the interval bound
        candidates, \ie the potential extrema of bilinear interpolation
        (see~\cref{sec:convex_relaxation}), as dots and the linear bounding
        planes used to enforce flow-constraints.
        Note that the candidates are computed separately for every intersection
        of an interpolation region with the set of reachable coordinates
        $B_{0.5}^\infty$.
    }
\end{figure*}

\paragraph{Interval bound propagation}

The intervals $[l_0, u_0]$ and $[l_1, u_1]$ can be utilized directly for
verification using standard interval propagation to estimate the output of the
network~\cite{gehr2018ai2, gowal2019scalable, mirman2018differentiable}.
While this method is fast, it is also imprecise.
The interval bounds for $x_2$ and $x_4$ are
\begin{align*}
    [l_2, u_2] &= 2 \cdot [l_0, u_0] - [l_1, u_1] + 0.25 \\
    &= [2l_0 - u_1 + 0.25, 2u_0 - l_1 + 0.25] = [-0.5, 0.5], \\
    [l_4, u_4] &= [\max(l_2, 0), \max(u_2, 0)] = [0, 0.5].
\end{align*}
All lower and upper interval bounds are given in \cref{fig:toy-network}.
The output for $x_6$ is thus between $l_6 = -1$ and $u_6 = 0$, while the one
for $x_7$ is between $l_7=-0.375$ and $u_7 = 0.875$.
As this is insufficient to prove $x_7 > x_6$, certification fails.

\paragraph{Backsubstitution}

To gain precision, one can keep track of the relationship between the neurons by
storing linear constraints~\cite{zhang2018efficient,singh2019abstract}.
In addition to $[l_2, u_2]$, we store upper- and lower-bounding linear
constraints
\begin{equation*}
    2 x_0 - x_1 + 0.25 \leq x_2 \leq 2 x_0 - x_1 + 0.25.
\end{equation*}
Similarly, for $x_4$, we store, in addition to $[l_4, u_4]$,
\begin{equation*}
    0 \leq x_4 \leq 0.5 x_2 + 0.25,
\end{equation*}
where we use the rules given in \citet{singh2019abstract} to calculate the
upper and lower linear constraints.
All linear constraints are shown in~\cref{fig:toy-network}, next to the
corresponding neurons.
Certification succeeds, if we can show that $x_7 - x_6 > 0$.
Using the linear constraints, we thus obtain
\begin{align*}
    x_7 - x_6 &\geq (-x_4 + x_5) - (-2x_4) = x_4 + x_5\\
    &\geq x_3 \geq x_1 - x_0 + 0.125 \geq 0.125.
\end{align*}
This proves that $x_7 - x_6 > 0$, implying that the network classifies to the
correct class under all considered deformations.

\paragraph{Spatial constraints}

Although the above method can certify robustness, certification fails for
a more challenging network where the bias of $x_3$ is equal to $-0.125$ instead
of $0.125$.
With the previous approach, we can only prove
$x_7 - x_6 \geq x_1 - x_0 - 0.125 \geq -0.125$, which is insufficient for
certification.
However, we can leverage our vector field smoothness condition
(\cref{eq:flow}), namely that the deformation vectors of $x_0$ and $x_1$ can
differ by at most $\gamma = 0.25$ in the $\ell_\infty$-norm.
Unfortunately, these constraints cannot be directly applied since they are
defined on the vector field components and not on the pixel values.
To amend this, we build on the idea from \citet{balunovic2019geometric,
mohapatra2020towards} and add upper- and lower-bounding linear constraints on
the pixel values $x_0$ and $x_1$.
That is, we compute upper and lower planes in terms of vector field
components $v_x$ and $v_y$ for $x_0$ and $w_x$ and $w_y$ for $x_1$, as shown
in~\cref{fig:upper_plane,fig:lower_plane}.
The plane equations are shown in~\cref{fig:toy-network}, and details on the
computation are provided in~\cref{sec:convex_relaxation}.
By substituting these plane equations into our expression and considering
that all vector field components $v_x$, $v_y$, $w_x$, and $w_y$ are bounded
from above and below by $[-\delta, \delta] = [-0.5, 0.5]$, we thus obtain
\begin{align*}
    x_7 - x_6 &\geq x_1 - x_0 - 0.125 \\
    &\geq (0.5 + 0.5 w_x) - (0.125 + 0.25 v_x) - 0.125 \\
    &= 0.25 + 0.5 w_x - 0.25 v_x \geq -0.125,
\end{align*}
showing that a simple instantiation of the idea by
\citet{balunovic2019geometric, mohapatra2020towards} is insufficient for
certification in our setting.
Only with our flow-constraints $-\gamma \leq v_x - w_x \leq \gamma$ with
$\gamma = 0.25$ can we finally certify:
\begin{align*}
    x_7 - x_6 &\geq x_1 - x_0 - 0.125 \\
    &\geq 0.25 + 0.5 w_x - 0.25 v_x \\
    &= 0.25 + 0.25 w_x + 0.25 (w_x - v_x) \geq 0.0625.
\end{align*}
In practice, the resulting expression may have more than two pixels, and we use
a linear program (described in~\cref{sec:convex_relaxation}) to perform the
substitution of flow-constraints.

    \section{Convex Relaxation}
\label{sec:convex_relaxation}

Here, we introduce a novel convex relaxation tailored to vector field
deformations.
First, we compute the \emph{tightest} interval bounds for each pixel in the
transformed image.
As intervals do not capture dependencies between variables, we then propose a
method for introducing linear constraints on the pixel values in terms of
vector field components and show how to use flow-constraints to further tighten
our convex relaxation.

\subsection{Computing Tight Interval Constraints}
\label{sec:convex_relaxation:tight-intervals}

Consider an image $I$ and a maximum pixel displacement $\delta$.
Our goal is to compute, for pixel $(i, j)$, interval bounds $l_{i, j}$ and
$u_{i, j}$ such that
$l_{i, j} \leq \mathcal{I}_I \circ (\mathbb{I} + \tau)(i, j) \leq u_{i, j}$, for
any vector field $\tau$ of $T_p$-norm at most $\delta$.
We now show how to compute \emph{tight} interval pixel bounds
$[l_{i, j}, u_{i, j}]$.

Within a given interpolation region $A_{mn}$, the pixel $(i, j)$ can move to
positions in $B_\delta^p(i,j) \cap A_{mn}$.
Thus, for every pixel, we construct a set of candidates containing the possible
maxima and minima of that pixel in $B_\delta^p(i,j) \cap A_{mn}$.
However, this could potentially yield an infinite set of candidate points, and
we thus make the key observation that the minimum and maximum pixel values of
$\mathcal{I}^{mn}_I$ in $B_\delta^p(i,j) \cap A_{mn}$ are always obtained at
the boundary (see~\cref{lem:boundary} below, with proof provided
in~\cref{sec:candidate-calculation}).
Hence, for any reachable interpolation region $A_{mn}$, it suffices to consider
the boundary of $B_\delta^p(i,j) \cap A_{mn}$ to derive the candidate points
analytically.
Finally, we set the lower and upper bound of the pixel value to the minimum
and maximum of the candidate set, respectively.

\begin{restatable}{lem}{boundary}
    The minimum and maximum pixel values of $\mathcal{I}^{mn}_I$ in
    $B_\delta^p(i,j) \cap A_{mn}$ are always obtained at the boundary.
    \label{lem:boundary}
\end{restatable}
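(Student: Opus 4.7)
The plan is to reduce the statement to an elementary fact about bilinear polynomials on convex sets. First, I would observe that, restricted to a single interpolation region $A_{mn} = [m, m+1] \times [n, n+1]$, every absolute value appearing in~\cref{eq:bil_interpolation} simplifies to a signed linear expression: $i \in [m, m+1]$ forces $|m - i| = i - m$ and $|m+1 - i| = m+1-i$, and symmetrically for $j$. Substituting back, $\mathcal{I}^{mn}_I$ coincides on $A_{mn}$ with a \emph{bilinear} polynomial
\[
    f(i, j) = \alpha + \beta i + \gamma j + \eta \, i j,
\]
whose coefficients depend only on the four corner pixel values $I(m, n)$, $I(m+1, n)$, $I(m, n+1)$, $I(m+1, n+1)$. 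The crucial structural feature is that $f$ is affine in $i$ for each fixed $j$, and affine in $j$ for each fixed $i$.

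Next, I would invoke convexity of the feasible region. For $p \in \{1, 2, \infty\}$ the ball $B_\delta^p(i, j)$ is convex, so $K := B_\delta^p(i, j) \cap A_{mn}$ is compact and convex. Fix any point $(i^*, j^*)$ in the interior of $K$. The horizontal slice $\{\, i \in \sR : (i, j^*) \in K \,\}$ is a closed interval $[i_1, i_2]$ whose endpoints lie on $\partial K$, and $(i^*, j^*)$ lies strictly between them. Since $f(\cdot, j^*)$ is affine, its maximum and minimum on $[i_1, i_2]$ are attained at the endpoints. Consequently, for every interior point of $K$ there exists a boundary point with value at least as large, and another boundary point with value at least as small.

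Applying this observation to any global maximizer or minimizer of $f$ over $K$ yields the lemma. The degenerate case where $K$ has empty two-dimensional interior — a segment, a point, or the empty set — is immediate, since then $K = \partial K$. The main step where I would expect to invest some care is the bilinearity reduction: writing the four interpolation coefficients as products of affine factors and collecting them into the closed form $f$ is routine but must be done cleanly to avoid sign mistakes in the absolute values. Once bilinearity is in hand, the affine-slice argument is a standard convex-analytic observation and requires no further technical input.
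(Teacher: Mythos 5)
Your proposal is correct and follows essentially the same route as the paper: reduce $\mathcal{I}^{mn}_I$ on $A_{mn}$ to a bilinear polynomial (the paper's Lemma 5.2) and then use affineness in one coordinate with the other fixed to push any interior extremum to the boundary. Your version is slightly more careful than the paper's — you make the convexity of the slice and the degenerate cases explicit — but the key idea is identical.
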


We note that it is essential to calculate the candidates analytically to
guarantee the soundness of our interval bounds.
Furthermore, for a single-channel image, our interval bounds are \emph{exact},
\ie for every deformed image within our pixel bounds there exists a
corresponding vector field $\tau$ with $||\tau||_p \leq \delta$.
For the derivation of candidates, we make use of the following auxiliary lemma
(with proof in~\cref{sec:candidate-calculation}):

\begin{restatable}{lem}{bilinear}
    The bilinear interpolation $\mathcal{I}^{mn}_I(v, w)$ on the region
    $A_{mn}$ can be rewritten as
    \begin{equation}
        \mathcal{I}^{mn}_I(v,w) = A + B v + C w + D vw.
        \label{eq:bilinear-interpolation}
    \end{equation}
    \label{lem:bilinear}
\end{restatable}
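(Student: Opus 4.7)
The plan is to make the absolute values in the definition \eqref{eq:bil_interpolation} explicit on the region $A_{mn}$, expand the resulting product, and collect terms in $v$ and $w$. There is no substantive obstacle; the entire argument is a direct algebraic manipulation, and I expect the step requiring the most care to be simply bookkeeping the four coefficients $A$, $B$, $C$, $D$ so that they are presented in a form useful for the later analysis (\eg for taking derivatives on the interior of $A_{mn}$ or for eliminating variables on its boundary, as needed by \cref{lem:boundary}).

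First, I would note that on $A_{mn} = [m, m+1] \times [n, n+1]$, both $v - m$ and $(m+1) - v$ are nonnegative, and similarly for $w - n$ and $(n+1) - w$. Hence the four weight factors $(1 - |p - v|)$ with $p \in \{m, m+1\}$ simplify without absolute values to the affine functions $(m+1) - v$ and $v - m$, and the analogous simplification applies to $(1 - |q - w|)$ for $q \in \{n, n+1\}$.

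Second, the definition \eqref{eq:bil_interpolation} becomes a sum of four terms, each of the form $I(p, q)\,\phi_p(v)\,\psi_q(w)$, where $\phi_p$ and $\psi_q$ are affine. Expanding each product yields a polynomial that is affine in $v$ and affine in $w$ separately, \ie an expression of the form $A_{pq} + B_{pq} v + C_{pq} w + D_{pq} v w$ for explicit constants depending on $m$, $n$, $p$, $q$.

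Finally, since the family $\{A + Bv + Cw + Dvw : A, B, C, D \in \sR\}$ is a vector space, summing the four contributions with weights $I(p, q)$ preserves this form. Collecting coefficients gives the claim, with $A$, $B$, $C$, and $D$ determined explicitly by $m$, $n$, and the four corner pixel values $I(m, n)$, $I(m+1, n)$, $I(m, n+1)$, $I(m+1, n+1)$. No case analysis beyond the absolute-value resolution on $A_{mn}$ is required, so the argument is entirely mechanical.
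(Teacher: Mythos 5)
Your proposal is correct and matches the paper's proof: both resolve the absolute values on $A_{mn}$ using $v - m \geq 0$, $(m+1) - v \geq 0$ (and analogously for $w$), expand the four resulting products of affine factors, and collect terms into the form $A + Bv + Cw + Dvw$. No further comment is needed.
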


\paragraph{Candidates for $T_\infty$}

The boundary of $B_\delta^{\infty}(i,j) \cap A_{mn}$ consists of line segments
parallel to the coordinate axes.
This means that for all pixels $(v, w)$ on such a line segment either $v$ or
$w$ is constant.
Thus, according to~\cref{lem:bilinear}, if $v$ is fixed, then
$\mathcal{I}^{mn}_I$ is linear in $w$ and vice-versa.
Hence, we only need to consider the line segment endpoints to obtain the
candidate set of all possible extremal values in
$B_\delta^{\infty}(i,j) \cap A_{mn}$.

\paragraph{Candidates for $T_1$}

The boundary of $B_\delta^1(i,j) \cap A_{mn}$ consists of line segments
parallel to the coordinate axes or lines defined by $w = \pm v + a$.
In the first case, we add the pixel values of the line segment endpoints
to the set of candidates.
In the second case, the bilinear interpolation $\mathcal{I}^{mn}_I$ restricted
to $w = \pm v + a$ is a polynomial of degree $2$ given by
\begin{align*}
    \mathcal{I}^{mn}_I(v, \pm v + a)
    &= A + B v + C (\pm v + a) + D v (\pm v + a) \\
    &= (A + C a) + (B \pm C) v \pm D v^2.
\end{align*}
Thus, the extremum can be attained on the interior of that line, unlike the
$T_{\infty}$-case.
Hence, we add both the endpoint values and the polynomial's extremum to the
candidate set if the corresponding extremal point lies on the line segment.

\paragraph{Candidates for $T_2$}

The boundary of $B_\delta^2(i,j) \cap A_{mn}$ consists of line segments
parallel to the coordinate axes or an arc with circle center $(i, j)$.
We handle the line segments by adding the pixel values at the endpoints to the
candidate set.
However, the interpolation can also have minima and maxima on the interior of
the arc.
To find those, we extend the interpolation $\mathcal{I}^{mn}_I$ from $A_{mn}$
to $\mathbb{R}^2$ and use Lagrange multipliers to find the extrema on the
circle $v^2 + w^2 = \delta^2$ (assuming $i = j = 0$ for notational
convenience).
The Lagrangian is
\begin{equation*}
    \Lambda (v, w, \lambda)
    := \mathcal{I}^{mn}_I \left( v, w \right) - \lambda (v^2 + w^2 - \delta^2),
\end{equation*}
which yields
\begin{equation*}
    \nabla_{v, w, \lambda} \Lambda (v, w, \lambda) = \begin{pmatrix}
                                                         B + D w - 2 \lambda v \\
                                                         C + D v - 2 \lambda w \\
                                                         \delta^2-v^2-w^2 \\
    \end{pmatrix} \stackrel{!}{=} 0.
\end{equation*}
Solving the first two equations for $\lambda$, we obtain
\begin{equation*}
    \lambda = \frac{B + D w}{2v} \quad \text{and} \quad
    \lambda = \frac{C + D v}{2w},
\end{equation*}
assuming $v \neq 0 \neq w$ (else $\delta = 0$).
Eliminating $\lambda$, we have
\begin{equation*}
    w (B + D w) - v (C + D v) = 0.
\end{equation*}
We solve this quadratic equation and substitute the solutions
\begin{equation*}
    w = \frac{- B \pm \sqrt{B^2 + 4 D v (C + Dv)}}{2D}.
\end{equation*}
into $\delta^2 - v^2 - w^2 = 0$ to obtain
\begin{equation*}
    \delta^2 = v^2 +
    \left(\frac{- B \pm \sqrt{B^2 + 4 D v (C + Dv)}}{2D} \right)^2.
\end{equation*}
Setting $E := \frac{-B}{2D}$, $F := \frac{B^2}{4D^2}$, and $G := \frac{C}{D}$
we have
\begin{align*}
    \delta^2 &= v^2 + \left( E \pm \sqrt{F + G v + v^2} \right)^2 \\
    &= v^2 + E^2 \pm 2 E \sqrt{F + G v + v^2} + F + G v + v^2 \\
    &= 2 v^2 + G v \pm 2 E \sqrt{F + G v + v^2} + H,
\end{align*}
for $H := (E^2 + F)$.
Solving for $v$ requires squaring both sides to resolve the square root,
yielding
\begin{align*}
    \left(\delta^2 - H - G v - 2 v^2 \right)^2
    &= \left(\pm 2 E \sqrt{F + G v + v^2}\right)^2 \\
    &= 4 E^2 (F + G v + v^2).
\end{align*}
Thus, we are interested in finding the roots of
\begin{equation}
    J + K v + L v^2 + M v^3 + N v^4
    \label{eq:T2Polynomial}
\end{equation}
with
\begin{align*}
    J &:= \left( \delta^2 - H \right)^2 - 4 F E^2, \\
    K &:= - 2 G \left( \left( \delta^2 - H \right) + 2 E^2 \right), \\
    L &:= G^2 - 4 \left( \left( \delta^2 - H \right) + E^2 \right) \\
    M &:= 4 G \\
    N &:= 4,
\end{align*}
which is a polynomial of degree $4$.
The roots of a polynomial of degree $4$ are known in closed
form~\cite{shmakov2011universal}, and we use Durand-Kerner's root finding
method~\cite{kerner1966gesamtschrittverfahren} to compute them analytically.
We recall that computing the roots approximately, \eg via gradient descent or
Newton's method, does not guarantee the soundness of our interval bounds.
If the coordinates obtained from the roots of~\cref{eq:T2Polynomial} lie
within $A_{mn}$, we add the corresponding pixel values to the set of
candidates.
Finally, we add the pixel values of the endpoints of the arc to the set of
candidates.

\subsection{Computing Spatial Constraints}
\label{sec:convex_relaxation:spatial-constraints}

While our interval bounds are tight, they can contain spurious images,
which cannot be produced by smooth vector fields of flow $\gamma$.
To address this, we build upon the idea of \citet{balunovic2019geometric,
mohapatra2020towards} and introduce linear constraints on the pixel values in
terms of the vector field, which can then be paired with flow-constraints to
yield a tighter convex relaxation.
We demonstrate how our method can be applied to tighten the DeepPoly
relaxation~\cite{singh2019abstract} and robustness certification via
mixed-integer linear programming (MILP)~\cite{tjeng2019evaluating}.

\begin{table*}
    \begin{center}
        \begin{small}
            \begin{sc}
                \resizebox{\linewidth}{!}{ \begin{tabular}{@{}llrrrrrr|rrrr@{}}
    \toprule
                            &           & \multicolumn{6}{c}{MNIST}                                                                                         & \multicolumn{4}{c}{CIFAR-10} \\
                            &           & \multicolumn{2}{c}{ConvSmall PGD} & \multicolumn{2}{c}{ConvSmall DiffAI}  & \multicolumn{2}{c}{ConvBig DiffAI}    & \multicolumn{2}{c}{ConvSmall DiffAI}  & \multicolumn{2}{c}{ConvMed DiffAI} \\
    $\delta$                & $\gamma$  & DeepPoly  & MILP                  & DeepPoly  & MILP                      & DeepPoly  & MILP                      & DeepPoly  & MILP                      & DeepPoly  & MILP \\
    \midrule
    \multirow{4}{*}{0.3}    & $\infty$  & 51        & 97                    & 90        & 91                        & 90        & 91                        & 40        & 47                        & 51        & 56 \\
                            & 0.1       & 78        & 98                    & 90        & 92                        & 90        & 94                        & 45        & 56                        & 53        & 60 \\
                            & 0.01      & 91        & 99                    & 91        & 95                        & 91        & 95                        & 50        & 70                        & 57        & 70 \\
                            & 0.001     & 92        & 98                    & 92        & 95                        & 91        & 95                        & 53        & 71                        & 58        & 73 \\
    \midrule
    \multirow{4}{*}{0.4}    & $\infty$  & 6         & 89                    & 74        & 76                        & 78        & 84                        & 31        & 38                        & 36        & 42 \\
                            & 0.1       & 40        & 91                    & 75        & 84                        & 78        & 90                        & 32        & 46                        & 37        & 51 \\
                            & 0.01      & 75        & 98                    & 77        & 92                        & 78        & 90                        & 35        & 67                        & 43        & 57 \\
                            & 0.001     & 77        & 99                    & 77        & 92                        & 78        & 92                        & 37        & 69                        & 43        & 58 \\
    \midrule
    \multirow{4}{*}{0.5}    & $\infty$  & 0         & 76                    & 40        & 50                        & 36        & 62                        & 20        & 32                        & 29        & 34 \\
                            & 0.1       & 7         & 85                    & 44        & 69                        & 37        & 79                        & 23        & 47                        & 30        & 42 \\
                            & 0.01      & 32        & 88                    & 44        & 91                        & 37        & 89                        & 27        & 53                        & 31        & 47 \\
                            & 0.001     & 35        & 89                    & 44        & 92                        & 37        & 90                        & 27        & 53                        & 33        & 48 \\
    \bottomrule
\end{tabular}
 }
            \end{sc}
        \end{small}
    \end{center}
    \caption{
        $T_\infty$-norm certification rates (\%) for vector fields $\tau$ with
        displacement magnitude $\| \tau \|_{T_\infty} = \delta$ and flow
        $\gamma$.
    }
    \label{tab:certification}
\end{table*}

\paragraph{Upper- and lower-bounding planes}

We seek to compute sound linear constraints on the spatially transformed pixel
value in terms of the deforming vector field $\tau$.
Since every pixel is displaced independently by its corresponding vector, the
linear constraints induce bounding planes of the form
\begin{equation*}
    \lambda_0 + \lambda \cdot \tau(i, j)
    \leq \mathcal{I}_{I}((i, j) + \tau(i, j))
    \leq \upsilon_0 + \upsilon \cdot \tau(i, j),
\end{equation*}
where $\lambda^T = (\lambda_1, \lambda_2)$ and
$\upsilon^T = (\upsilon_1, \upsilon_2)$.
To compute a sound lower-bounding plane, we apply our method
from~\cref{sec:convex_relaxation:tight-intervals} to compute the set of
candidate coordinates $\mathcal{C}$ of potential minima and maxima in
$B_\delta^\infty(i, j)$ and then solve
\begin{gather*}
    \argmin_{\lambda_0, \lambda_1, \lambda_2} \sum_{(p, q) \in \mathcal{C}}
    \mathcal{I}_{I}(p, q) - (\lambda_0 + \lambda_1 (p - i) + \lambda_2 (q - j)) \\
    \mathcal{I}_{I}(p, q) \geq (\lambda_0 + \lambda_1 (p - i) + \lambda_2 (q - j)), \quad \forall (p,q) \in \mathcal{C}.
\end{gather*}
Since both the objective and the constraints are linear, we can compute this
plane in polynomial time using linear programming.
The upper-bounding plane is obtained analogously.

Given these linear bounding planes, one could be tempted to simply instantiate
the framework from \citet{balunovic2019geometric, mohapatra2020towards}.
Unfortunately, their approach only works in the setting where multiple pixels
are transformed by the same spatial parameters (\eg rotation angle).
However, we make the key observation that these linear constraints can be
leveraged to enforce the flow-constraints, thus tightening our convex
relaxation.
We now describe how this can be achieved for the DeepPoly relaxation and MILP.

\paragraph{Tightening DeepPoly relaxation}

To compute precise bounds, DeepPoly performs backsubstitution for each neuron
in the network (recall the example in~\cref{sec:overview}).
That is, every backsubstitution step computes a linear expression
$e = a_1x_1 + \ldots + a_nx_n$ in terms of the input pixels.
A naive way to obtain the upper and lower bounds of $e$ is to substitute the
interval bounds for each pixel $x_i$, which is equivalent to
\begin{gather*}
    \min a_1x_1 + \ldots + a_nx_n \\
    l_i \leq x_i \leq u_i.
\end{gather*}
However, this can be imprecise as intervals do not capture flow-constraints.
Thus, we extend the above linear program with variables $v_x^{(i)}$ and
$v_y^{(i)}$ denoting the vector field components of every pixel
$x_i$, thus allowing us to add the constraints
\begin{gather}
    \lambda_0^{(i)} + (v_x, v_y)^{(i)} \lambda^{(i)} \leq x_i \leq
    \upsilon_0^{(i)} + (v_x, v_y)^{(i)} \upsilon^{(i)},
    \label{eq:plane-constraints} \\
    -\gamma \leq v_x^{(i)} - v_x^{(j)} \leq \gamma, \text{ and }
    -\gamma \leq v_y^{(i)} - v_y^{(j)} \leq \gamma, \label{eq:flow-constraints}
\end{gather}
where $\lambda^T = (\lambda_1^{(i)}, \lambda_2^{(i)})$ and
$\upsilon^T = (\upsilon_1^{(i)}, \upsilon_2^{(i)})$.
Here, \cref{eq:plane-constraints} corresponds to the upper- and lower-bounding
planes of pixel $x_i$, and~\cref{eq:flow-constraints} enforces the
flow-constraints for neighboring pixels $i$ and $j$.
Minimization of this linear program then directly yields the tightest lower
bound on the expression and can be performed in polynomial time.
The upper-bounding plane can be obtained analogously.

\paragraph{Tightening MILP certification}

To encode a neural network as MILP, we employ the method
from~\citet{tjeng2019evaluating}, which is exact for models with ReLU
activations.
Our approach of leveraging linear planes on pixel values to enforce
flow-constraints can then be directly applied to the resulting MILP by adding
the same variables and linear constraints
(\cref{eq:plane-constraints,eq:flow-constraints}) as in the DeepPoly case.

    \section{Experiments}
\label{sec:experiments}

We now investigate the precision and scalability of our certification method by
evaluating it on a rich combination of datasets and network architectures.
We make all our networks and code publicly available as part of the ERAN
framework for neural network verification (available at
\url{https://github.com/eth-sri/eran}) to ensure reproducibility.

\paragraph{Experiment setup}

We select a random subset of 100 images from the MNIST~\cite{lecun2010mnist}
and CIFAR-10~\cite{krizhevsky2009learning} test datasets on which we run all
experiments.
We consider adversarially trained variants of the \textsc{ConvSmall},
\textsc{ConvMed}, and \textsc{ConvBig} architectures proposed by
\citet{mirman2018differentiable}, using PGD~\cite{madry2018towards} and
DiffAI~\cite{mirman2018differentiable} for adversarial training.
For CIFAR-10, we also consider a ResNet~\cite{he2016deep}, with 4 residual
blocks of 16, 16, 32, and 64 filters each, trained with the provable defense
from \citet{wong2018scaling}.
We present the model accuracies and training hyperparameters
in~\cref{sec:models}.
While we only consider networks with ReLU activations for our experiments, our
relaxations seamlessly integrate with different verifiers, including DeepPoly,
k-ReLU, and MILP, which allows us to certify all networks employing the
activation functions supported by these frameworks.
For example, DeepPoly handles ReLU, sigmoid, tanh, quadratic, and logarithmic
activations, while MILP can only exactly encode piecewise linear activation
functions such as ReLU or LeakyReLU.
We use a desktop PC with a single GeForce RTX 2080 Ti GPU and a 16-core Intel(R)
Core(TM) i9-9900K CPU @ 3.60GHz, and we report all certification running times
in~\cref{sec:running-times}.

\paragraph{Robustness certification}

We demonstrate the precision of our convex relaxations via robustness
certification against vector field transformations.
To that end, we run DeepPoly and MILP with our interval and spatial constraints
and compute the percentage of certified MNIST and CIFAR-10 images for different
networks and values of $\delta$ and $\gamma$.
Note that $\gamma = \infty$ corresponds
to~\cref{sec:convex_relaxation:tight-intervals}, and $\gamma < \infty$
corresponds to~\cref{sec:convex_relaxation:spatial-constraints}.
We limit MILP to 5 minutes and display the results in~\cref{tab:certification}
(showing only $T_\infty$-norm results for brevity).
We observe that our interval bounds successfully enable certification of vector
field attacks and that our tightened convex relaxation for smooth vector fields
offers (at times substantial) improvements across all datasets, verifiers, and
networks.
For example, for \textsc{ConvSmall PGD} on MNIST, DeepPoly certification
increases from $6\%$ ($\gamma = \infty$) to $77\%$ ($\gamma = 0.001$) for
$\delta = 0.4$.
Similarly, for \textsc{ConvSmall DiffAI} on CIFAR-10, MILP certification
increases from $38\%$ ($\gamma = \infty$) to $69\%$ ($\gamma = 0.001$) for
$\delta = 0.4$.
In fact, our convex relaxation can also be applied with the k-ReLU
verifier~\cite{singh2019beyond} where it increases certification from $24\%$
($\gamma = \infty$) to $51\%$ ($\gamma = 0.1$) for $\delta = 0.4$ for
\textsc{ConvSmall PGD} on MNIST.
Note that while our tightened convex relaxation increases certification
rates, it does so only for the more restricted setting of sufficiently smooth
deformations.

We also compare the certification rates for the different $T_p$-norms on a
\textsc{ConvSmall} network trained to be provably robust with DiffAI on MNIST.
For brevity, we only consider the case where $\gamma = \infty$, and we display
the percentage of certified images in~\cref{tab:t-norm-comparison} (with MILP
timeout of 5 minutes).

\paragraph{Scaling to larger networks}

We evaluate the scalability of our convex relaxation by certifying spatial
robustness of a large CIFAR-10 ResNet with 108k neurons trained with the
provable defense from \citet{wong2018scaling}.
To account for the large network size, we increase the MILP timeout to 10
minutes.
For $\delta = 0.4$, DeepPoly increases certification from $72\%$ ($\gamma =
\infty$) to $76\%$ ($\gamma = 0.1$), whereas MILP increases certification from
$87\%$ ($\gamma = \infty$) to $89\%$ ($\gamma = 0.1$).
The average running times per image are 77 seconds ($\gamma = \infty$) and 394
seconds ($\gamma = 0.1$) for DeepPoly, and 446 seconds ($\gamma = \infty$) and
1168 seconds ($\gamma = 0.1$) for MILP.

\begin{table}
    \begin{center}
        \begin{small}
            \begin{sc}
                \resizebox{\columnwidth}{!}{ \begin{tabular}{@{}lrrrrrr@{}}
    \toprule
                & \multicolumn{2}{c}{$T_1$-norm}    & \multicolumn{2}{c}{$T_2$-norm}    & \multicolumn{2}{c}{$T_\infty$-norm} \\
    $\delta$    & DeepPoly  & MILP                  & DeepPoly  & MILP                  & DeepPoly  & MILP \\
    \midrule
    0.3         & 96        & 97                    & 95        & 95                    & 90        & 91 \\
    0.5         & 75        & 79                    & 70        & 74                    & 40        & 50 \\
    0.7         & 23        & 43                    & 13        & 33                    & 2         & 15 \\
    0.9         & 4         & 21                    & 2         & 13                    & 1         & 5 \\
    \bottomrule
\end{tabular}

 }
            \end{sc}
        \end{small}
    \end{center}
    \caption{
        Certification rates (\%) of \textsc{ConvSmall DiffAI} on MNIST for
        different $T_p$-norms with $\gamma = \infty$.
    }
    \label{tab:t-norm-comparison}
\end{table}

We do not evaluate our convex relaxation on robustness certification of
ImageNet~\cite{deng2009imagenet} models, since normally trained ImageNet
networks are not provably robust and adversarially trained networks have very
low standard and certifiable accuracy~\cite{gowal2019scalable}.
However, our method effortlessly scales to large images since computing our
interval bounds requires at most 0.02 seconds per image for MNIST, CIFAR-10, and
ImageNet, and the average running time for computing our linear bounds is 1.63
seconds for MNIST, 5.03 seconds for CIFAR-10, and 236 seconds for ImageNet (this
could be optimized via parallelization).
Thus, any improvement in robust training on ImageNet would immediately allow us
to certify the robustness of the resulting models against vector field
deformations.
For completeness, we note that randomized smoothing approaches do scale to
ImageNet, but only provide probabilistic guarantees for a smoothed neural
network, as we discussed in~\cref{sec:related-work}.

\paragraph{Approximation error for multi-channel images}

We recall that our interval bounds are only exact for single-channel images.
That is, compared to the single-channel case where the shape of all deformed
images is a box, which we can compute exactly, the output shape for the
multi-channel case is a high-dimensional object, which is not even polyhedral.
Thus, we over-approximate the output shape with the tightest-possible box.
To calculate the approximation error, we would have to compare its volume with
the volume of the exact output shape, but computing the exact volume is
infeasible.
Consequently, we can only estimate the precision by comparing the volume of our
box with the volume of the intervals obtained from sampling vector fields and
computing the corresponding deformed images.
For example, sampling 10’000 vector fields for $\delta = 1$ yields intervals
covering 99.10\% (MNIST), 98.84\% (CIFAR-10), and 98.76\% (ImageNet) of our
bounds.
Based on these results, we conclude that our bounds are reasonably tight, even
for multi-channel images.

    \section{Conclusion}

We introduced a novel convex relaxation for images obtained from vector field
deformations and showed that this relaxation enables, for the first time,
robustness certification against such spatial transformations.
Furthermore, we tightened our convex relaxations for smooth vector fields by
introducing smoothness constraints that can be efficiently incorporated into
state-of-the-art neural network verifiers.
Our evaluation across different datasets and architectures demonstrated the
practical effectiveness of our methods.

    \message{^^JLASTBODYPAGE \thepage^^J}
    \section*{Ethics Statement}

It is well known that neural networks can be successfully employed in settings
with positive (\eg personalized healthcare) and negative (\eg autonomous
weapons systems) social impacts.
Moreover, even for settings with potentially beneficial impacts, such
as personalized healthcare, the concrete implementation of these models remains
challenging, \eg concerning privacy.
In that regard, robustness certification is more removed from the practical
application, as it provides guarantees for a class of models (\eg image
classification networks), irrespective of the particular task at hand.
For example, our method could be applied to certify spatial robustness for
self-driving cars in the same way that it could be employed to prove robustness
for weaponized drones.
Since the vector field deformations considered in our work present a natural way
of describing distortions arising from the fact that cameras map a 3D world to
2D images, our certification method can be used to comply with regulations or
quality assurance criteria for all applications that require robustness against
these types of transformations.

    \section*{Acknowledgments}

We are grateful to Christoph Müller for his help in enabling backsubstitution
with our spatial constraints.
We also thank Christoph Amevor for proofreading an earlier version of this
paper and for his helpful comments.
Finally, we thank the anonymous reviewers for their insightful feedback.

    \bibliography{references}

    \message{^^JLASTREFERENCESPAGE \thepage^^J}


    \ifbool{includeappendix}{%
    \clearpage
    \appendix
    \section{Calculation of Candidates}
\label{sec:candidate-calculation}

Here, we prove~\cref{lem:boundary,lem:bilinear}, which we employed to
efficiently compute the interval bounds $l_{i,j}$ and $u_{i,j}$ such that
$l_{i, j} \leq \mathcal{I}_I \circ (\mathbb{I} + \tau)(i, j) \leq u_{i, j}$, for
a pixel $(i, j)$ of an image $I$ and any vector field $\tau$ of $T_p$-norm of at
most $\delta$.
To ease notation, we view images as a collection of pixels on a regular grid.
In this case, bilinear interpolation is given by
\begin{align*}
    \mathcal{I}_I \left( i, j \right) &:=
    \begin{cases}
        \mathcal{I}_I^{mn} \left( i, j \right) & \text{if} \left( i, j \right) \in A_{mn}
    \end{cases}, \\
    \mathcal{I}^{mn}_I \left( i, j \right) &:=
    \sum_{\substack{
        p \in \left\{ m, m + 1 \right\} \\ q \in \left\{ n, n + 1 \right\}
    }}
    I(p, q)
    \left( 1 - \lvert p - i \rvert \right)
    \left( 1 - \lvert q - j \rvert \right),
\end{align*}
where $A_{mn} := \left[ m, m + 1 \right] \times \left[ n, n + 1 \right]$ is an
interpolation region.

\bilinear*

\begin{proof}
    \begin{align*}
        \mathcal{I}^{mn}_I \left( v, w \right) &=
        \sum_{\substack{
        p \in \left\{ m, m + 1 \right\} \\ q \in \left\{ n, n + 1 \right\}
        }}
        I(p, q) \left( 1 - \lvert p - v \rvert \right) \left( 1 - \lvert q - w \rvert \right) \\
        &= I(m, n) (1 + m - v) (1 + n - w) \\
        &\quad + I(m + 1, n) (v - m) (1 + n - w) \\
        &\quad + I(m, n + 1) (1 + m - v) (w - n) \\
        &\quad + I(m + 1, n + 1) (v - m) (w - n) \\
        &= A + B v + C w + D vw,
    \end{align*}
    where the last equality follows from expanding the parentheses and
    grouping the terms.
\end{proof}

\boundary*

\begin{proof}
    Let $(v, w) \in B_\delta^p(i,j) \cap A_{mn}$ be an interior point such
    that $\mathcal{I}^{mn}_I$ is extremal at $(v, w)$.
    Then, assuming $v$ is fixed, it can be seen
    from~\cref{eq:bilinear-interpolation} that $\mathcal{I}^{mn}_I$ is
    linear in $w$.
    Hence, the pixel value is non-decreasing in one direction and
    non-increasing in the opposite direction of $w$, implying that both the
    minimum and the maximum are obtained at the boundary.
\end{proof}

Hence, for any reachable interpolation region $A_{mn}$ it suffices to consider
the boundary of $B_\delta^p(i,j) \cap A_{mn}$ to construct the set of
candidates containing the possible minima and maxima of the pixel in
$B_\delta^p(i,j) \cap A_{mn}$.
The bounds $l_{i,j}$ and $u_{i,j}$ are then the minimum and maximum of the set
of candidates, respectively.

\section{Models}
\label{sec:models}

Here, we describe the model accuracies and training hyperparameters.
We recall that, for MNIST and CIFAR-10, we consider defended variants of the
\textsc{ConvSmall}, \textsc{ConvMed}, and \textsc{ConvBig} architectures
proposed by \citet{mirman2018differentiable}, using PGD~\cite{madry2018towards}
and DiffAI~\cite{mirman2018differentiable} for adversarial training.
Moreover, for CIFAR-10, we also evaluate our certification method on a
ResNet~\cite{he2016deep}, with 4 residual blocks of 16, 16, 32, and 64 filters
each (108k neurons), trained with the provable defense from
\citet{wong2018scaling}.

To differentiate between the training techniques, we append suffixes to the
model names: \textsc{DiffAI} for DiffAI and \textsc{PGD} for PGD.
All DiffAI networks were pre-trained against $\ell_\infty$-noise perturbations
by \citet{mirman2018differentiable} with $\epsilon = 0.3$ on MNIST and
$\epsilon = 8/255$ on CIFAR-10.
For \textsc{ConvSmall PGD} we used 20 epochs of PGD training against vector
field attacks with $\delta = 0.5$, using cyclic learning rate scheduling from
\num{1e-8} to \num{1e-2}, and $\ell_1$-norm weight decay with trade-off
parameter \num{1e-3}.
We provide the model accuracies on the respective test sets
in~\cref{tab:models}.

\begin{table}
    \begin{center}
        \begin{small}
            \begin{sc}
                \resizebox{\columnwidth}{!}{ \begin{tabular}{@{}lllr@{}}
    \toprule
    Dataset                     & Model             & Training                  & Accuracy \\
    \midrule
    \multirow{2}{*}{MNIST}      & ConvSmall PGD     & Spatial PGD               & 97.53\%  \\
                                & ConvSmall DiffAI  & DiffAI                    & 94.52\%  \\
                                & ConvBig DiffAI    & DiffAI                    & 97.03\%  \\
    \midrule
    \multirow{3}{*}{CIFAR-10}   & ConvSmall DiffAI  & DiffAI                    & 42.60\%  \\
                                & ConvMed DiffAI    & DiffAI                    & 43.57\%  \\
                                & ResNet            & \citet{wong2018scaling}   & 27.70\%  \\
    \bottomrule
\end{tabular}
 }
            \end{sc}
        \end{small}
    \end{center}
    \caption{Model test accuracies and training parameters.}
    \label{tab:models}
\end{table}

\section{Running Times}
\label{sec:running-times}

Here, we provide the running times averaged over our random subsets of 100 test
images.
As mentioned in~\cref{sec:experiments}, we run all experiments on a desktop PC
with a single GeForce RTX 2080 Ti GPU and a 16-core Intel(R) Core(TM) i9-9900K
CPU @ 3.60GHz.

In~\cref{tab:times-certification} we display the average certification times
corresponding to the experiment from~\cref{tab:certification}
in~\cref{sec:experiments}.
Likewise, we display the running times corresponding to the $T_p$-norm
comparison experiment from~\cref{tab:t-norm-comparison}
in~\cref{tab:times-t-norm-comparison}.
We recall that every instance of the MILP is limited to 5 minutes.

The average running times for the k-ReLU verifier~\cite{singh2019beyond}
using our convex relaxation with $k = 15$, an LP timeout of 5 seconds, and a
MILP timeout of 10 seconds are: 176.4s ($\gamma = \infty$) and 285.3s
($\gamma = 0.1$) for $\delta = 0.4$.

\begin{table*}
    \begin{center}
        \begin{small}
            \begin{sc}
                \resizebox{\linewidth}{!}{ \begin{tabular}{@{}llrrrrrr|rrrr@{}}
    \toprule
                            &           & \multicolumn{6}{c}{MNIST}                                                                                         & \multicolumn{4}{c}{CIFAR-10} \\
                            &           & \multicolumn{2}{c}{ConvSmall PGD} & \multicolumn{2}{c}{ConvSmall DiffAI}  & \multicolumn{2}{c}{ConvBig DiffAI}    & \multicolumn{2}{c}{ConvSmall DiffAI}  & \multicolumn{2}{c}{ConvMed DiffAI} \\
    $\delta$                & $\gamma$  & DeepPoly  & MILP                  & DeepPoly  & MILP                      & DeepPoly  & MILP                      & DeepPoly  & MILP                      & DeepPoly  & MILP \\
    \midrule
    \multirow{4}{*}{0.3}    & $\infty$  & 1.0       & 28.0                  & 0.1       & 1.7                       & 18.5      & 269.6                     & 0.6       & 45.8                      & 0.4       & 89.9 \\
                            & 0.1       & 8.1       & 54.8                  & 7.2       & 11.2                      & 65.4      & 240.1                     & 109.8     & 353.3                     & 83.3      & 338.9 \\
                            & 0.01      & 10.7      & 59.6                  & 7.1       & 12.7                      & 68.6      & 288.9                     & 160.5     & 372.0                     & 116.8     & 351.1 \\
                            & 0.001     & 12.2      & 97.0                  & 7.2       & 33.4                      & 86.4      & 327.9                     & 201.0     & 415.0                     & 173.2     & 469.5 \\
    \midrule
    \multirow{4}{*}{0.4}    & $\infty$  & 0.8       & 48.6                  & 0.1       & 1.7                       & 19.8      & 275.5                     & 0.2       & 59.1                      & 0.6       & 153.4 \\
                            & 0.1       & 9.4       & 71.3                  & 3.1       & 15.9                      & 65.1      & 261.6                     & 94.7      & 363.3                     & 86.2      & 333.9 \\
                            & 0.01      & 11.9      & 61.6                  & 3.1       & 8.4                       & 77.1      & 319.8                     & 104.9     & 268.8                     & 107.0     & 368.3 \\
                            & 0.001     & 12.0      & 43.0                  & 3.1       & 11.0                      & 73.3      & 314.2                     & 92.6      & 293.7                     & 104.7     & 379.5 \\
    \midrule
    \multirow{4}{*}{0.5}    & $\infty$  & 0.8       & 97.3                  & 0.1       & 1.9                       & 25.7      & 252.6                     & 0.5       & 128.8                     & 0.7       & 214.6 \\
                            & 0.1       & 9.8       & 76.6                  & 7.2       & 34.6                      & 62.3      & 225.5                     & 103.0     & 320.7                     & 93.8      & 353.3 \\
                            & 0.01      & 11.2      & 87.0                  & 7.3       & 19.5                      & 72.4      & 231.6                     & 159.9     & 404.1                     & 109.4     & 373.7 \\
                            & 0.001     & 85.6      & 12.1                  & 7.1       & 28.6                      & 65.9      & 288.1                     & 139.8     & 370.1                     & 125.3     & 410.4 \\
    \bottomrule
\end{tabular}
 }
            \end{sc}
        \end{small}
    \end{center}
    \caption{
        Average running times (seconds) for certification of vector fields
        $\tau$ with displacement magnitude $\| \tau \|_{T_\infty} = \delta$ and
        flow $\gamma$.
    }
    \label{tab:times-certification}
\end{table*}

\begin{figure*}
    \begin{center}

        \begin{subfigure}{\dimexpr0.9\linewidth+20pt\relax}
            \begin{center}
                \makebox[20pt]{\raisebox{20pt}{\rotatebox[origin=c]{90}{}}}
                \begin{subfigure}[t]{.1\linewidth}
                    \caption*{\textbf{original}}
                \end{subfigure}
                \begin{subfigure}[t]{.2\linewidth}
                    \caption*{$\bm{\gamma = \infty}$}
                \end{subfigure}
                \begin{subfigure}[t]{.2\linewidth}
                    \caption*{$\bm{\gamma = 0.1}$}
                \end{subfigure}
                \begin{subfigure}[t]{.2\linewidth}
                    \caption*{$\bm{\gamma = 0.01}$}
                \end{subfigure}
                \begin{subfigure}[t]{.2\linewidth}
                    \caption*{$\bm{\gamma = 0.001}$}
                \end{subfigure}
            \end{center}
        \end{subfigure}
        \vspace{0.25cm}
        \begin{subfigure}{\dimexpr0.9\linewidth+20pt\relax}
            \begin{center}
                \makebox[20pt]{\raisebox{20pt}{\rotatebox[origin=c]{90}{$\bm{\delta = 0.3}$}}}
                \begin{subfigure}[t]{.1\linewidth}
                    \includegraphics[width=0.98\textwidth]{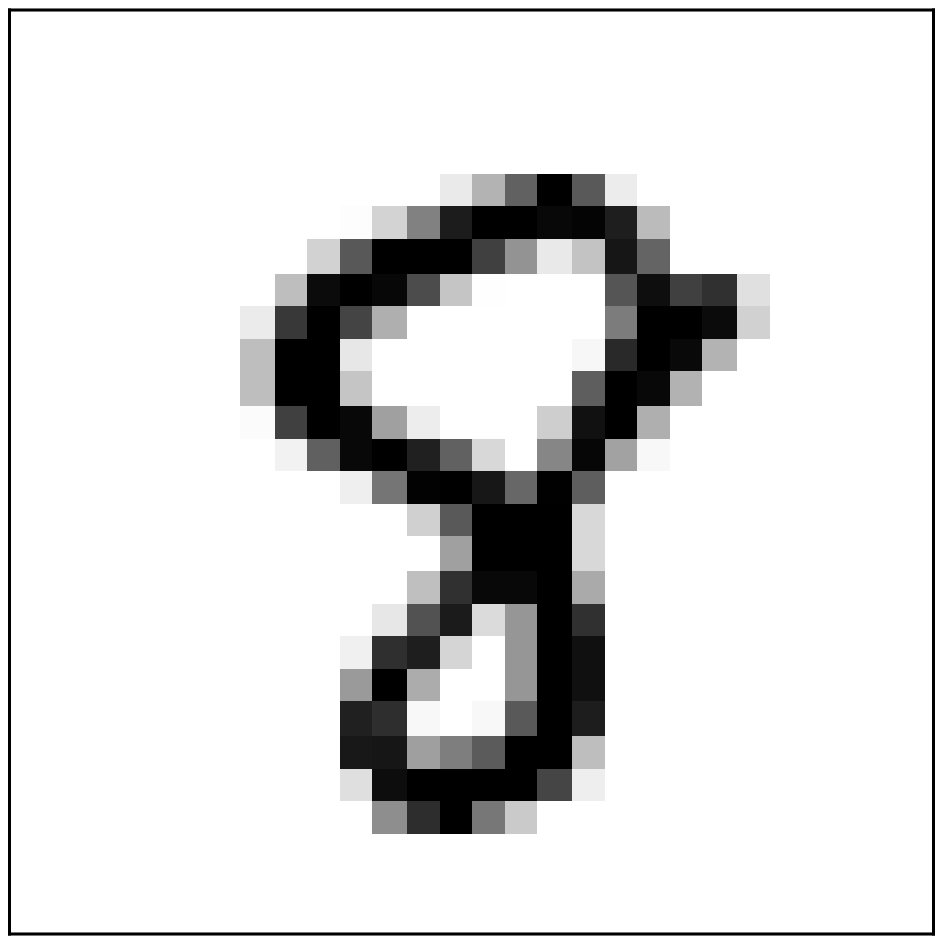}
                \end{subfigure}
                \begin{subfigure}[t]{.2\linewidth}
                    \includegraphics[width=0.49\textwidth]{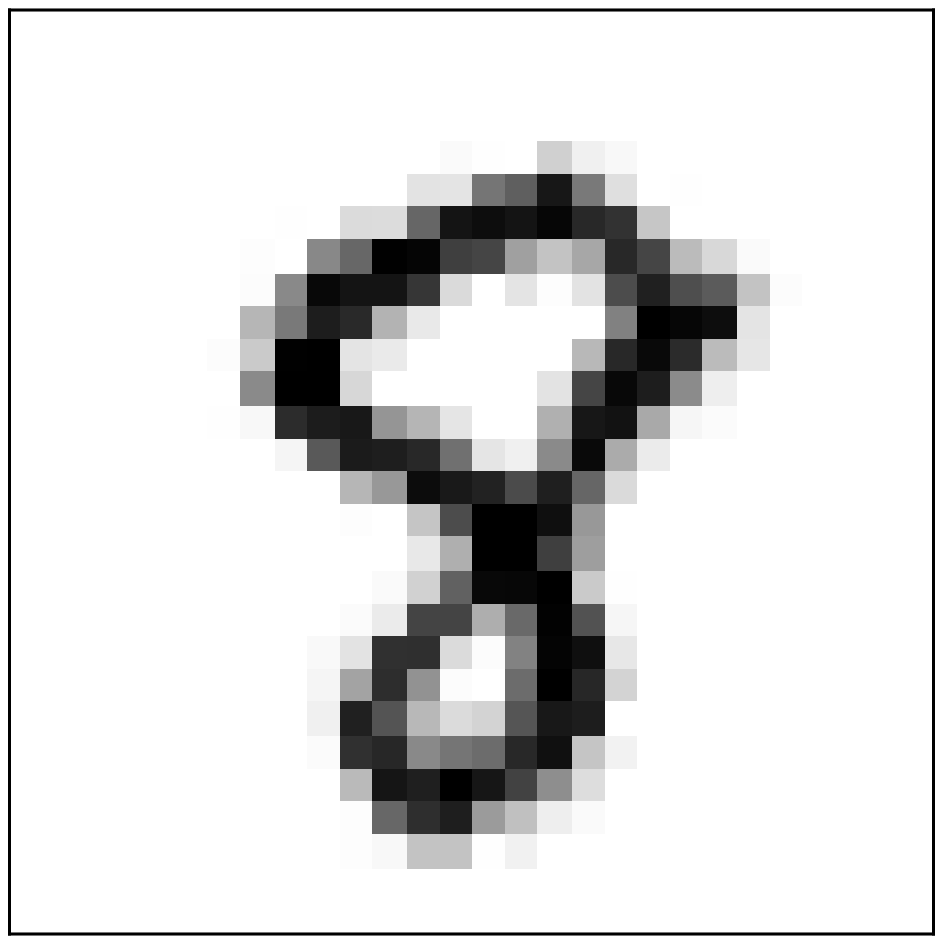}
                    \hspace{-0.2cm}
                    \includegraphics[width=0.49\textwidth]{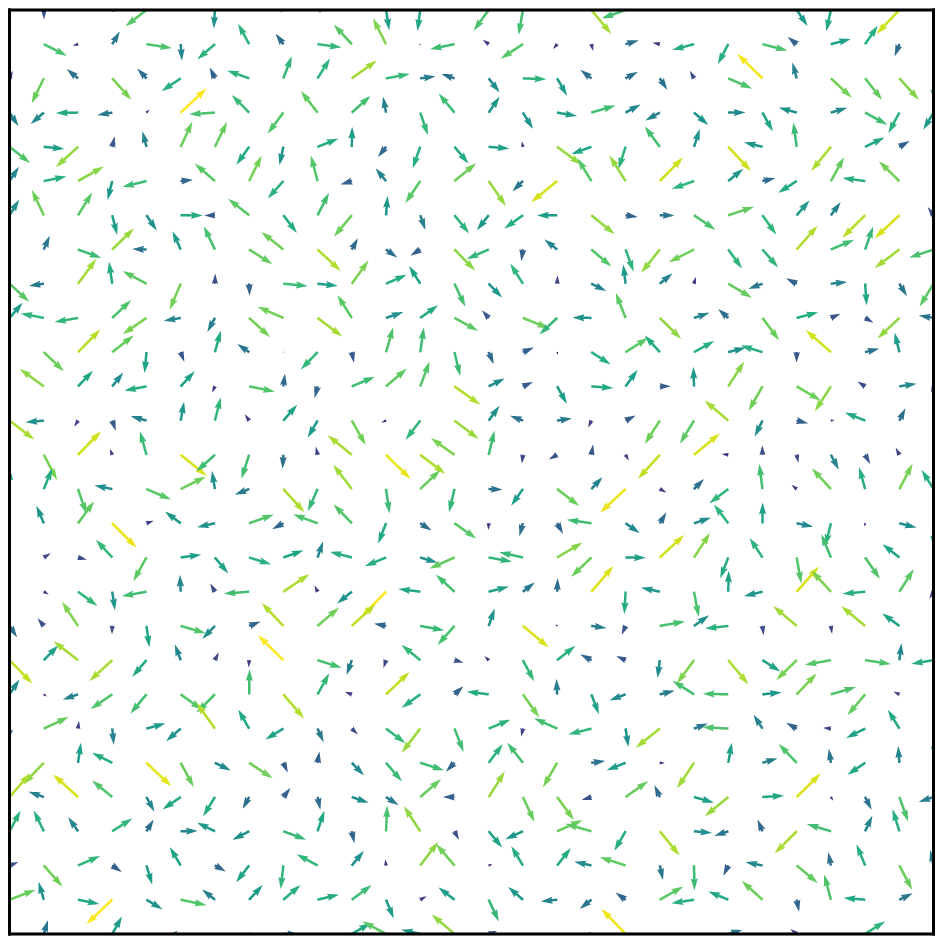}
                    \caption{label 8}
                    \label{fig:visual-comparison:certified-but-not-attacked}
                \end{subfigure}
                \begin{subfigure}[t]{.2\linewidth}
                    \includegraphics[width=0.49\textwidth]{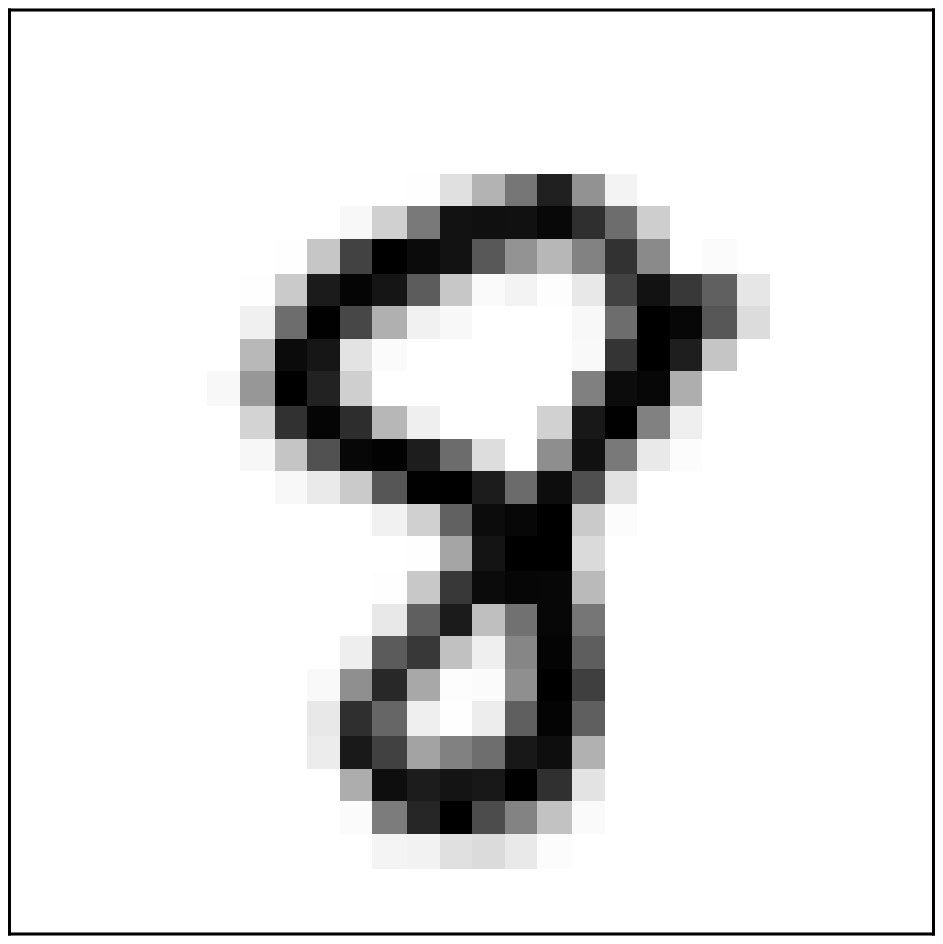}
                    \hspace{-0.2cm}
                    \includegraphics[width=0.49\textwidth]{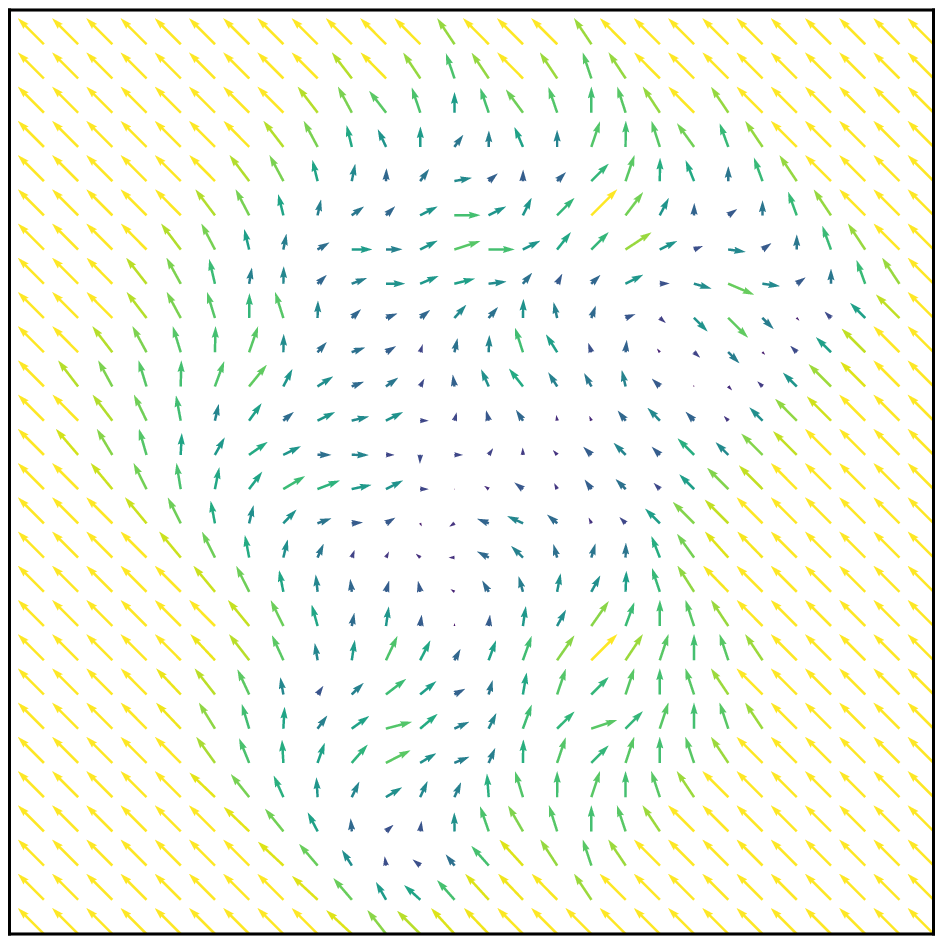}
                    \caption{label 8}
                    \label{fig:visual-comparison:not-certified-and-not-attacked}
                \end{subfigure}
                \begin{subfigure}[t]{.2\linewidth}
                    \includegraphics[width=0.49\textwidth]{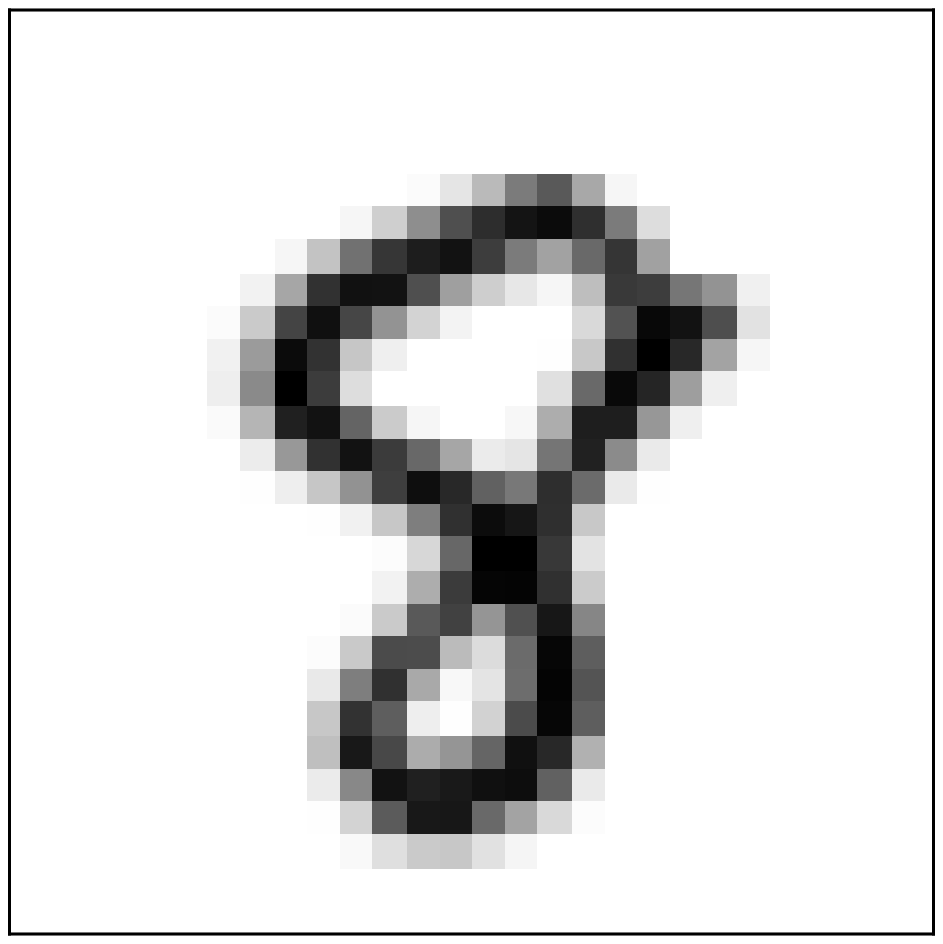}
                    \hspace{-0.2cm}
                    \includegraphics[width=0.49\textwidth]{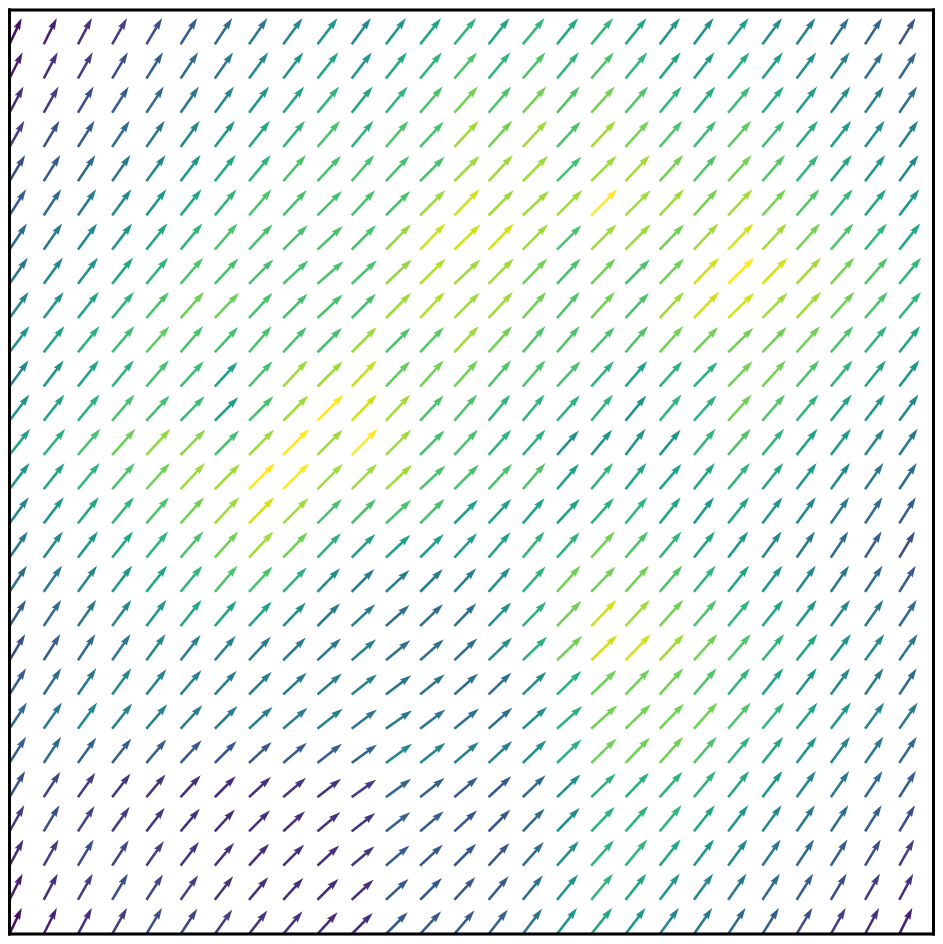}
                    \caption{label 8 (certified)}
                \end{subfigure}
                \begin{subfigure}[t]{.2\linewidth}
                    \includegraphics[width=0.49\textwidth]{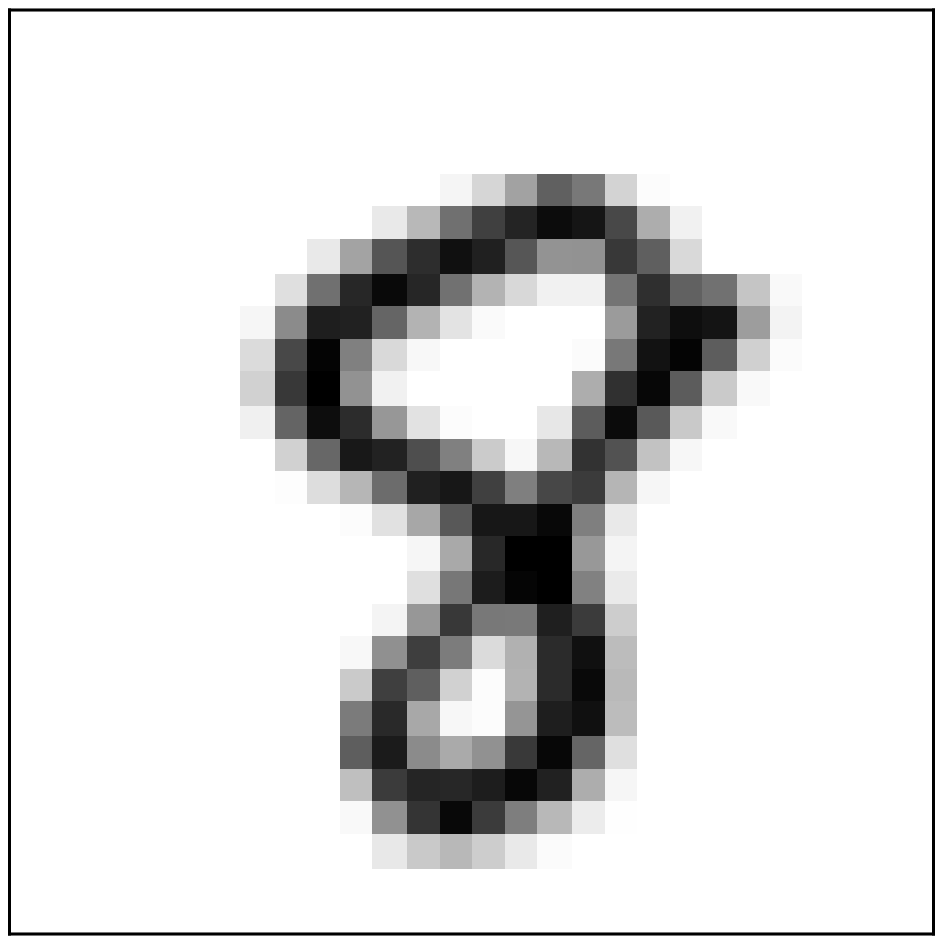}
                    \hspace{-0.2cm}
                    \includegraphics[width=0.49\textwidth]{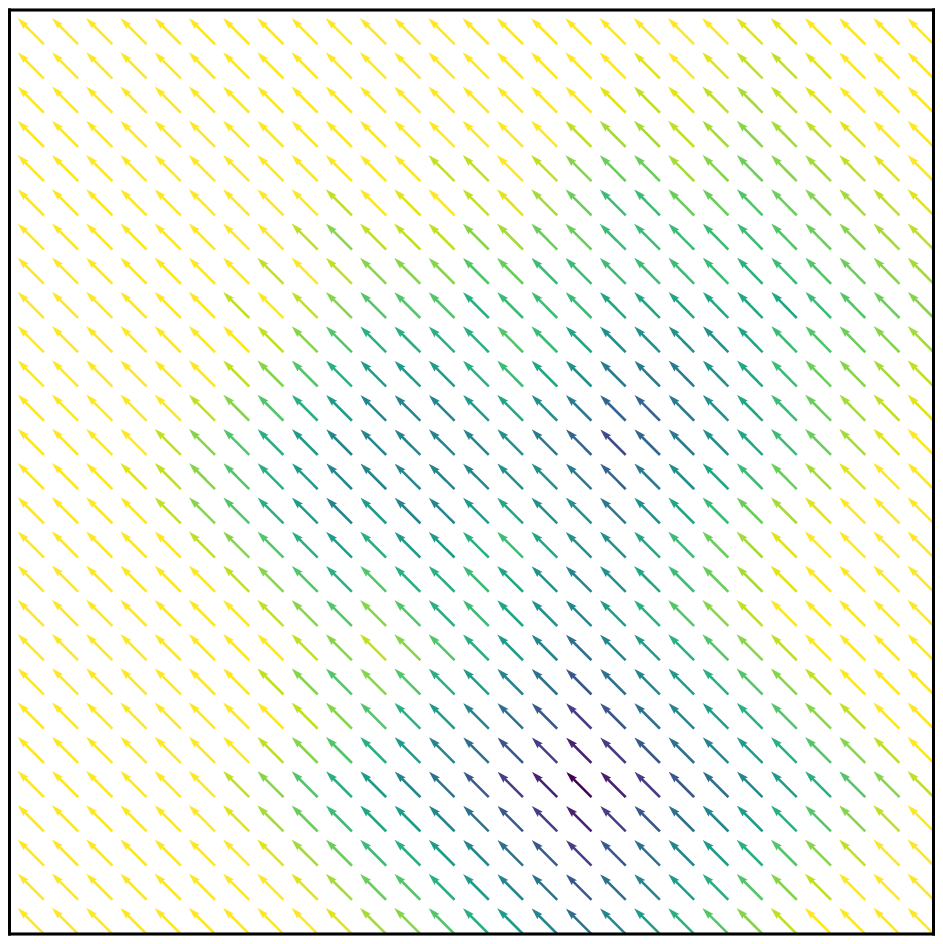}
                    \caption{label 8 (certified)}
                \end{subfigure}
            \end{center}
        \end{subfigure}
        \vspace{0.25cm}
        \begin{subfigure}{\dimexpr0.9\linewidth+20pt\relax}
            \begin{center}
                \makebox[20pt]{\raisebox{20pt}{\rotatebox[origin=c]{90}{$\bm{\delta = 0.4}$}}}
                \begin{subfigure}[t]{.1\linewidth}
                    \includegraphics[width=0.98\textwidth]{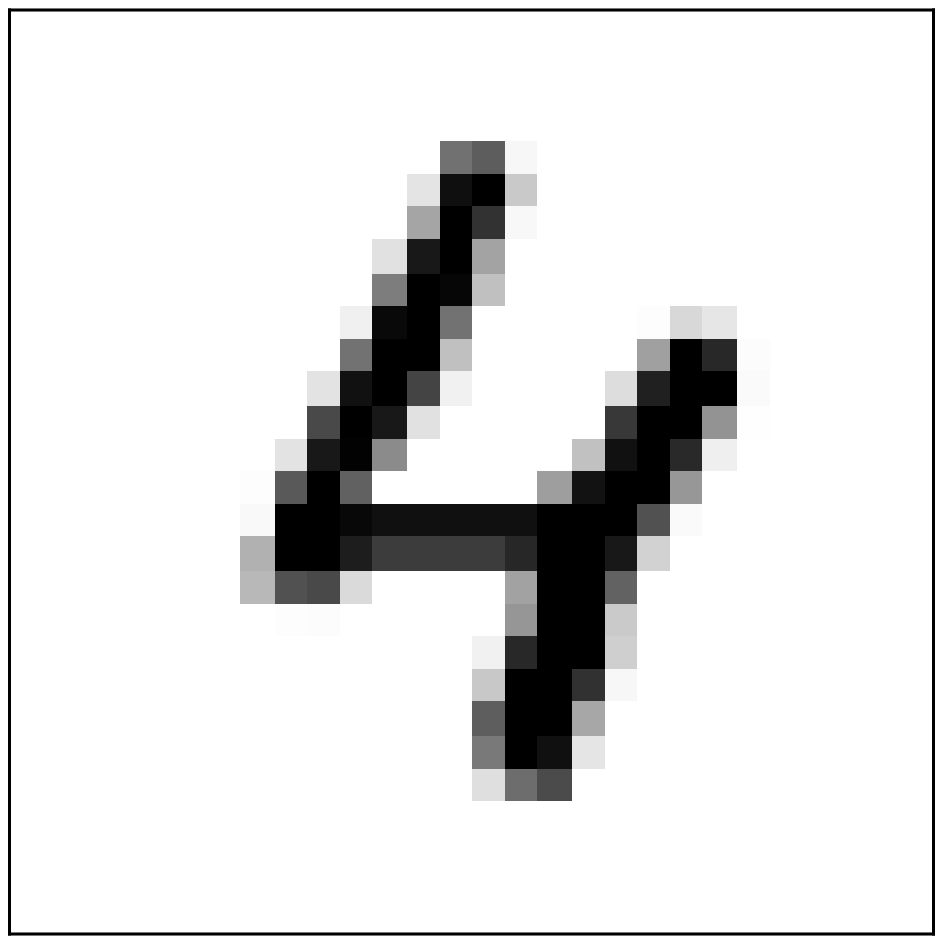}
                \end{subfigure}
                \begin{subfigure}[t]{.2\linewidth}
                    \includegraphics[width=0.49\textwidth]{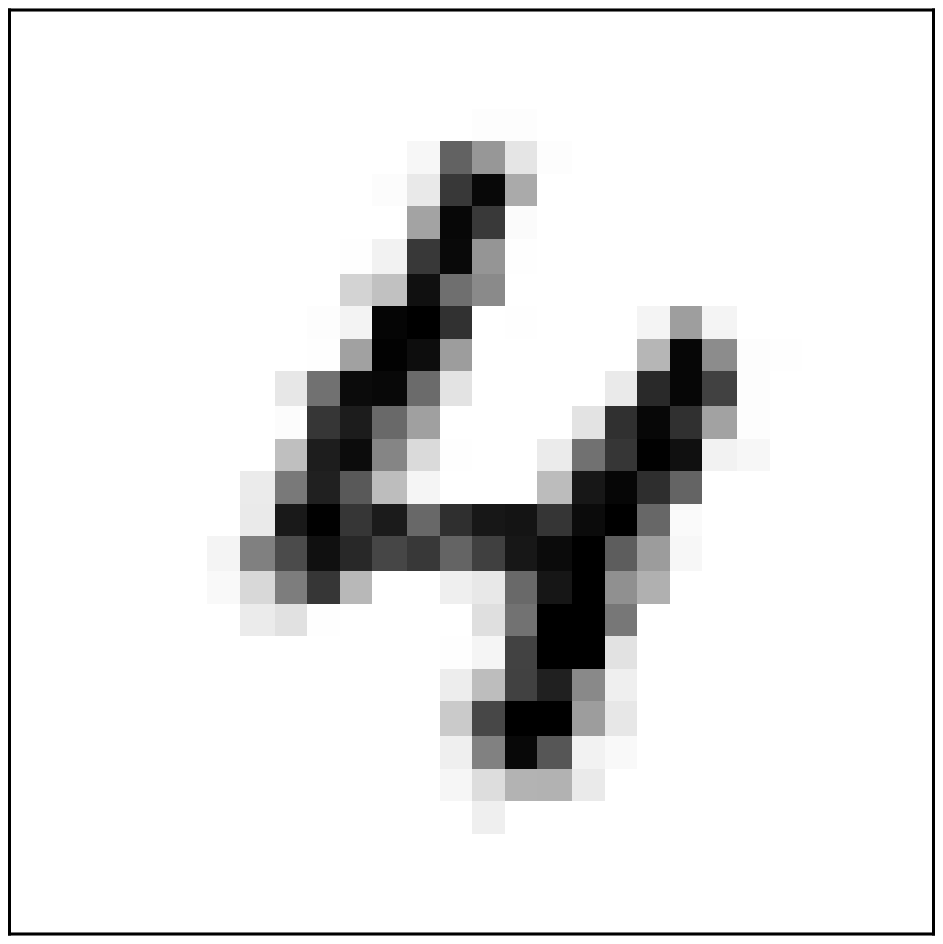}
                    \hspace{-0.2cm}
                    \includegraphics[width=0.49\textwidth]{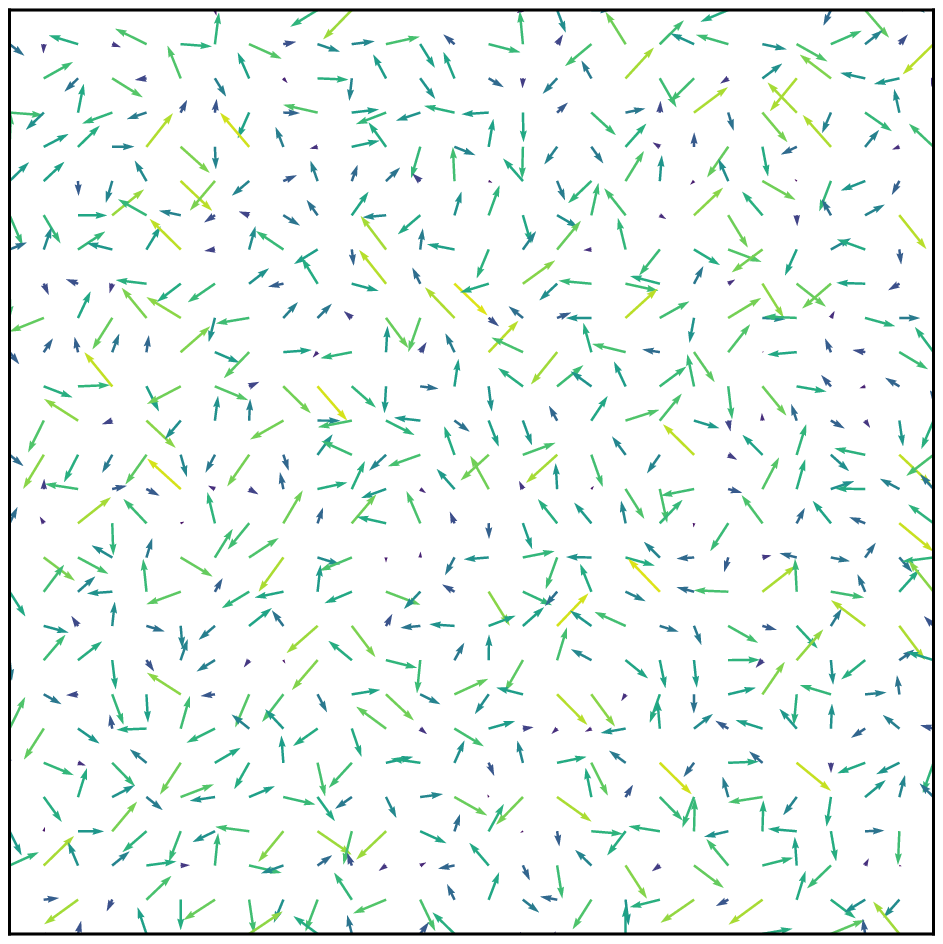}
                    \caption{label 5 (adversarial)}
                \end{subfigure}
                \begin{subfigure}[t]{.2\linewidth}
                    \includegraphics[width=0.49\textwidth]{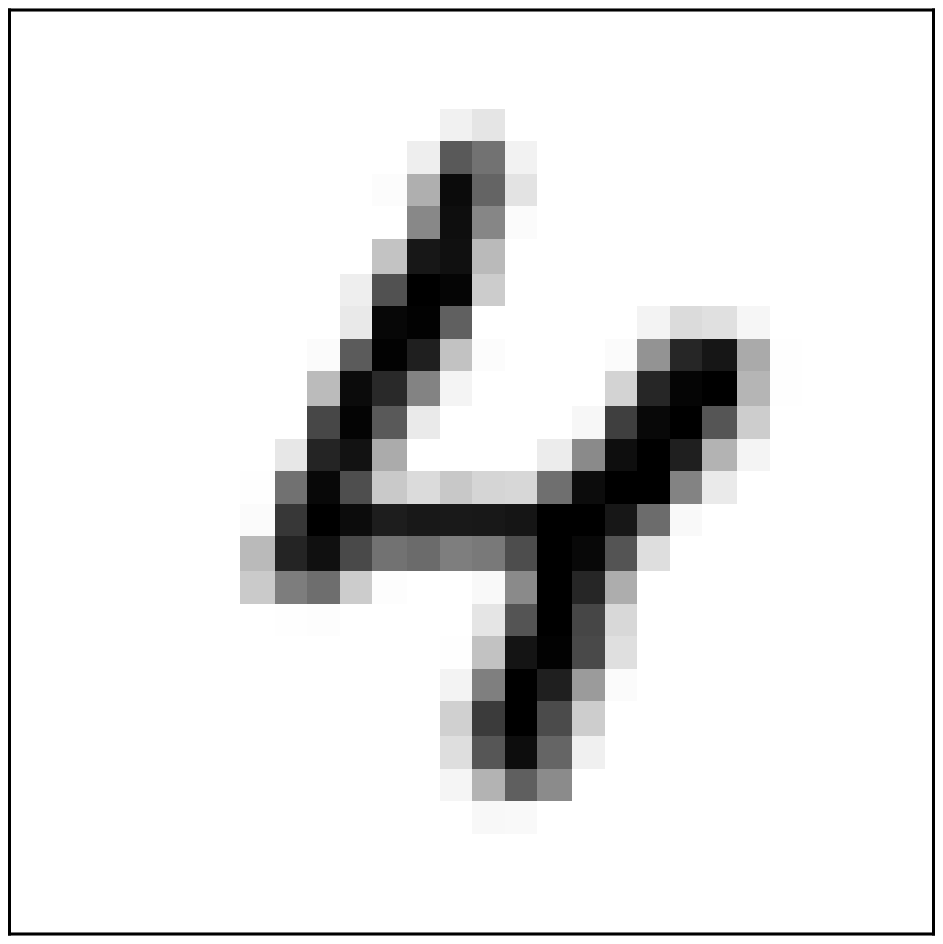}
                    \hspace{-0.2cm}
                    \includegraphics[width=0.49\textwidth]{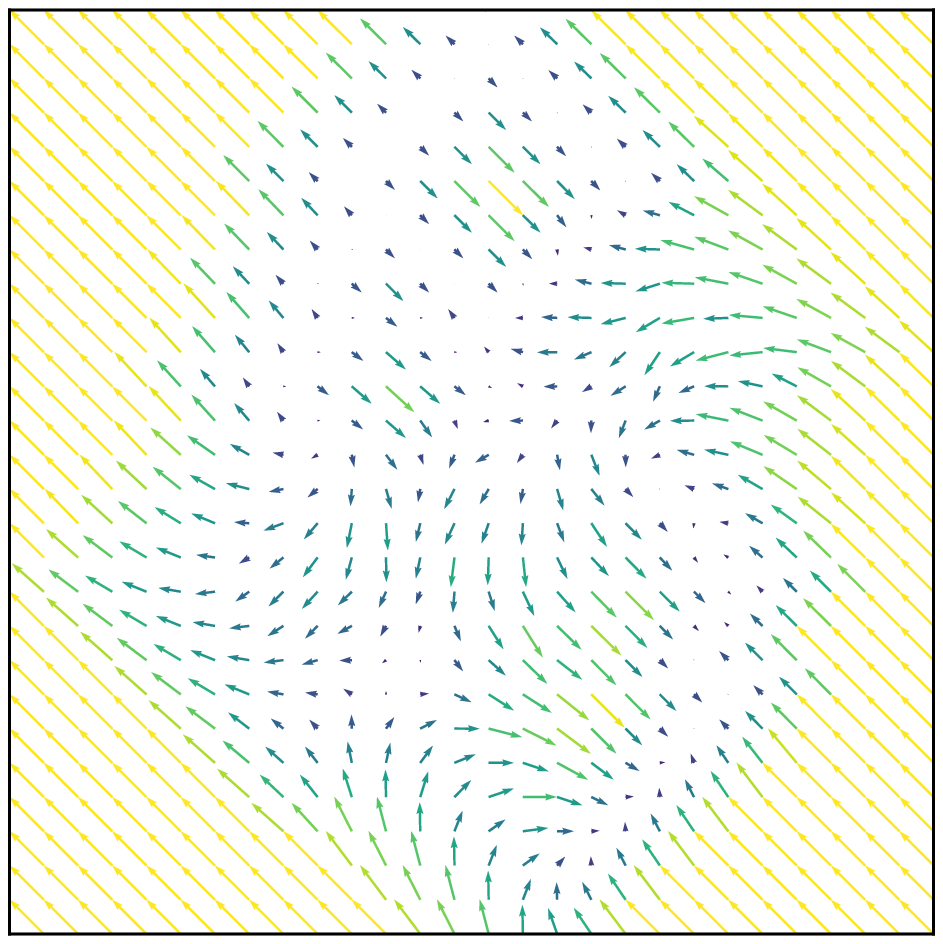}
                    \caption{label 4}
                \end{subfigure}
                \begin{subfigure}[t]{.2\linewidth}
                    \includegraphics[width=0.49\textwidth]{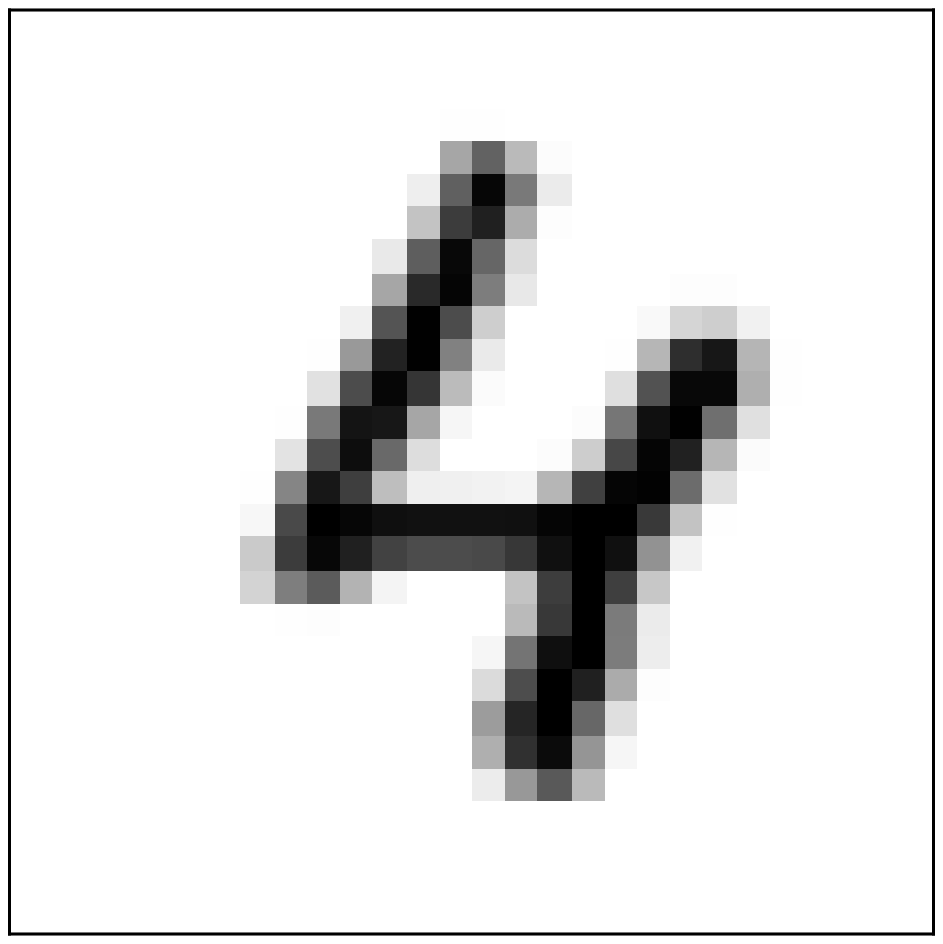}
                    \hspace{-0.2cm}
                    \includegraphics[width=0.49\textwidth]{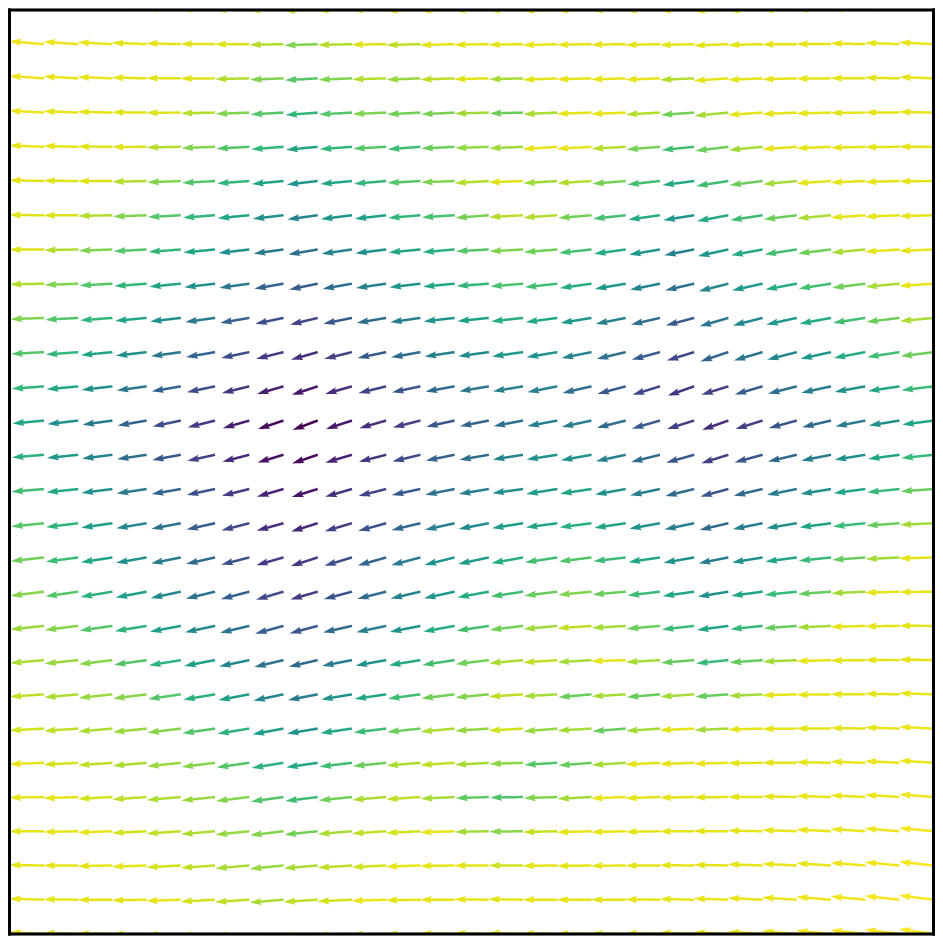}
                    \caption{label 4 (certified)}
                \end{subfigure}
                \begin{subfigure}[t]{.2\linewidth}
                    \includegraphics[width=0.49\textwidth]{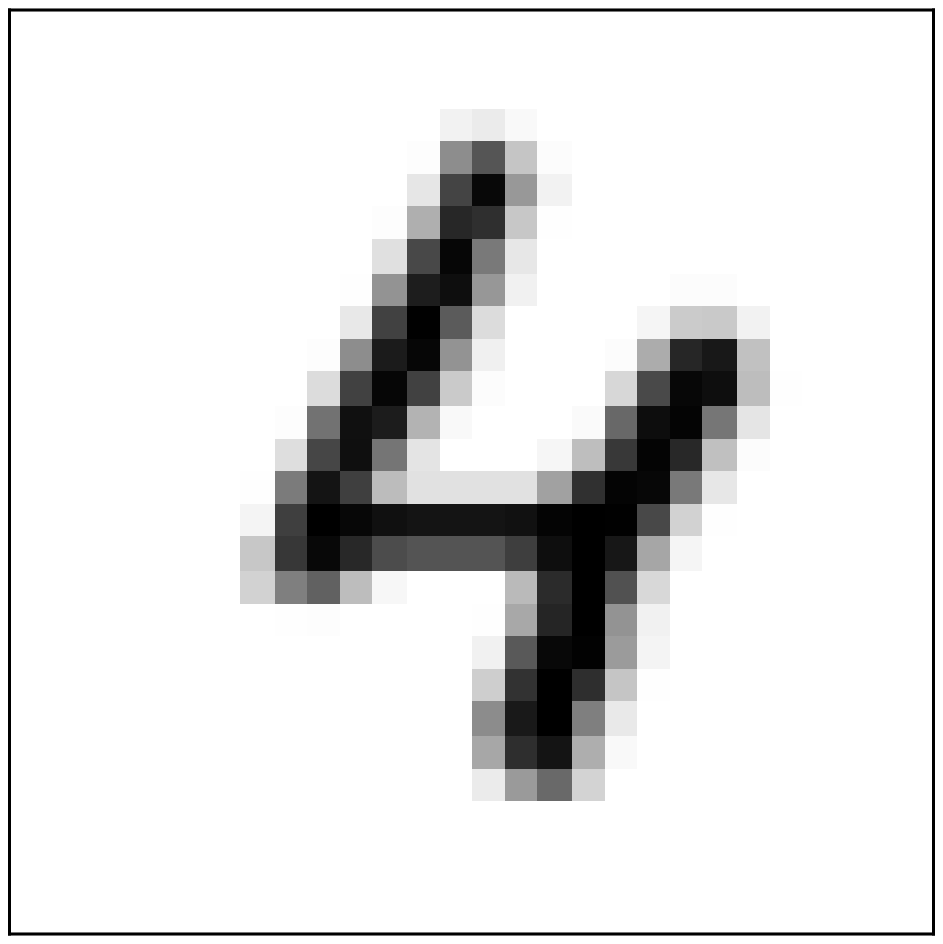}
                    \hspace{-0.2cm}
                    \includegraphics[width=0.49\textwidth]{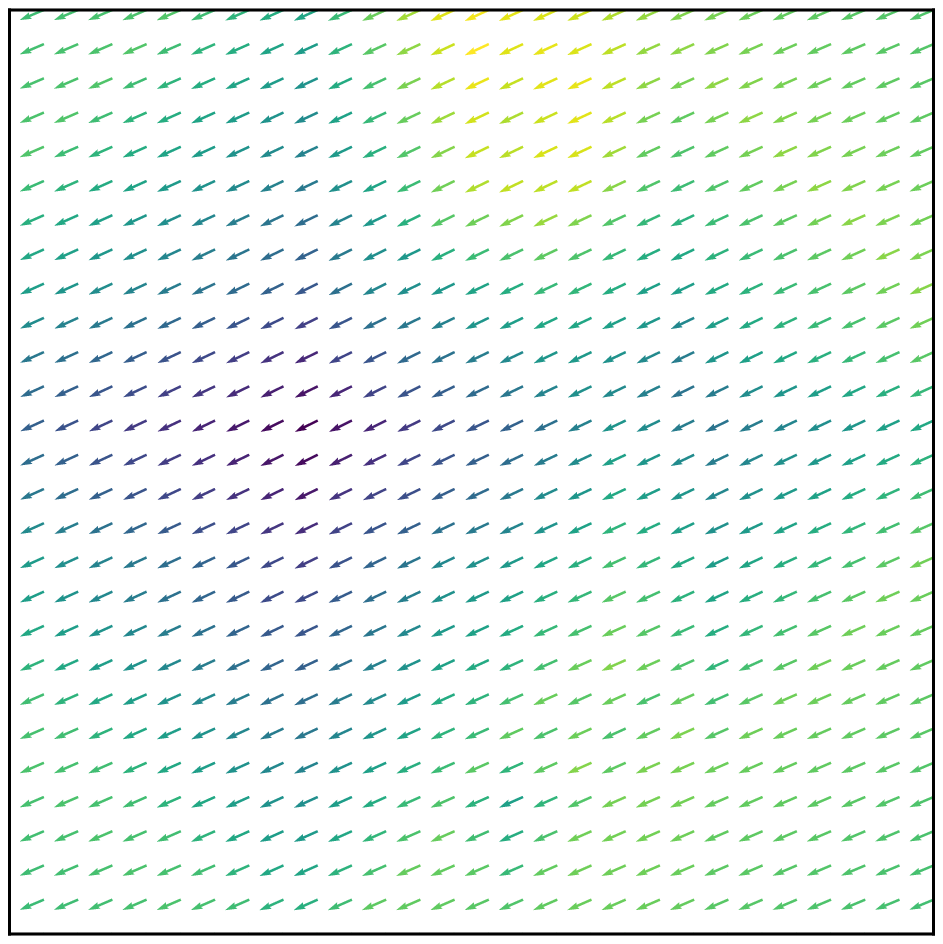}
                    \caption{label 4 (certified)}
                \end{subfigure}
            \end{center}
        \end{subfigure}
        \vspace{0.25cm}
        \begin{subfigure}{\dimexpr0.9\linewidth+20pt\relax}
            \begin{center}
                \makebox[20pt]{\raisebox{20pt}{\rotatebox[origin=c]{90}{$\bm{\delta = 0.5}$}}}
                \begin{subfigure}[t]{.1\linewidth}
                    \includegraphics[width=0.98\textwidth]{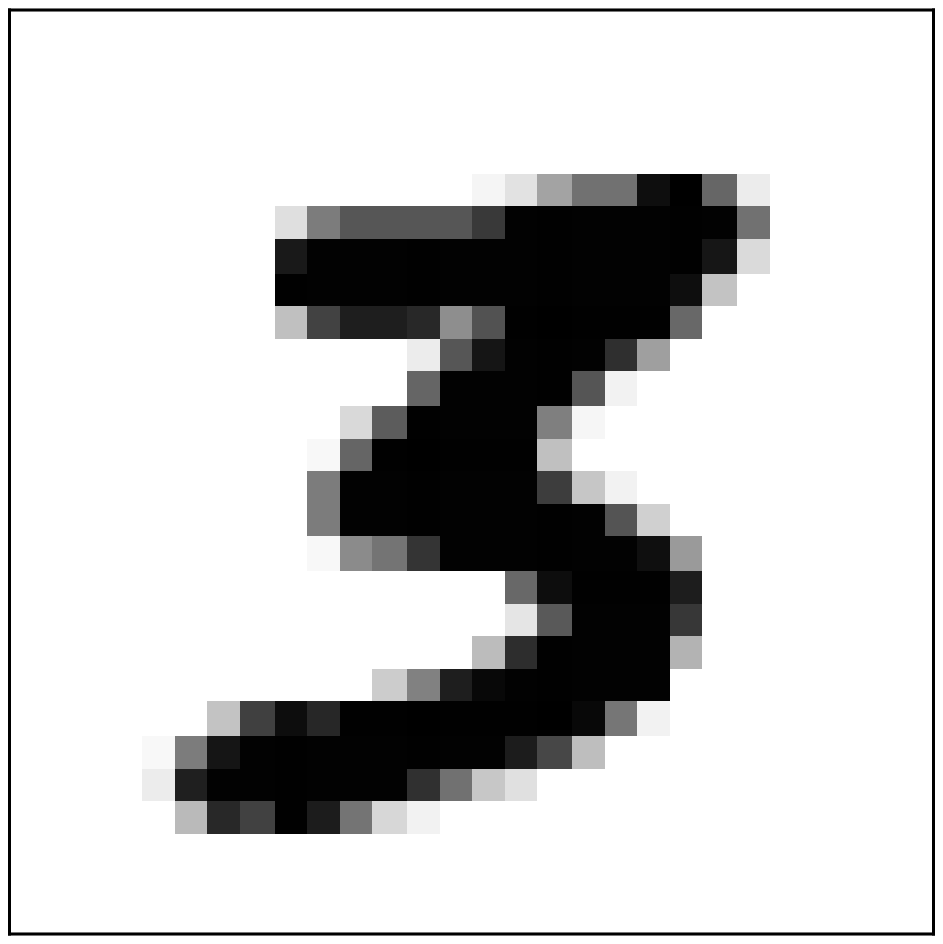}
                \end{subfigure}
                \begin{subfigure}[t]{.2\linewidth}
                    \includegraphics[width=0.49\textwidth]{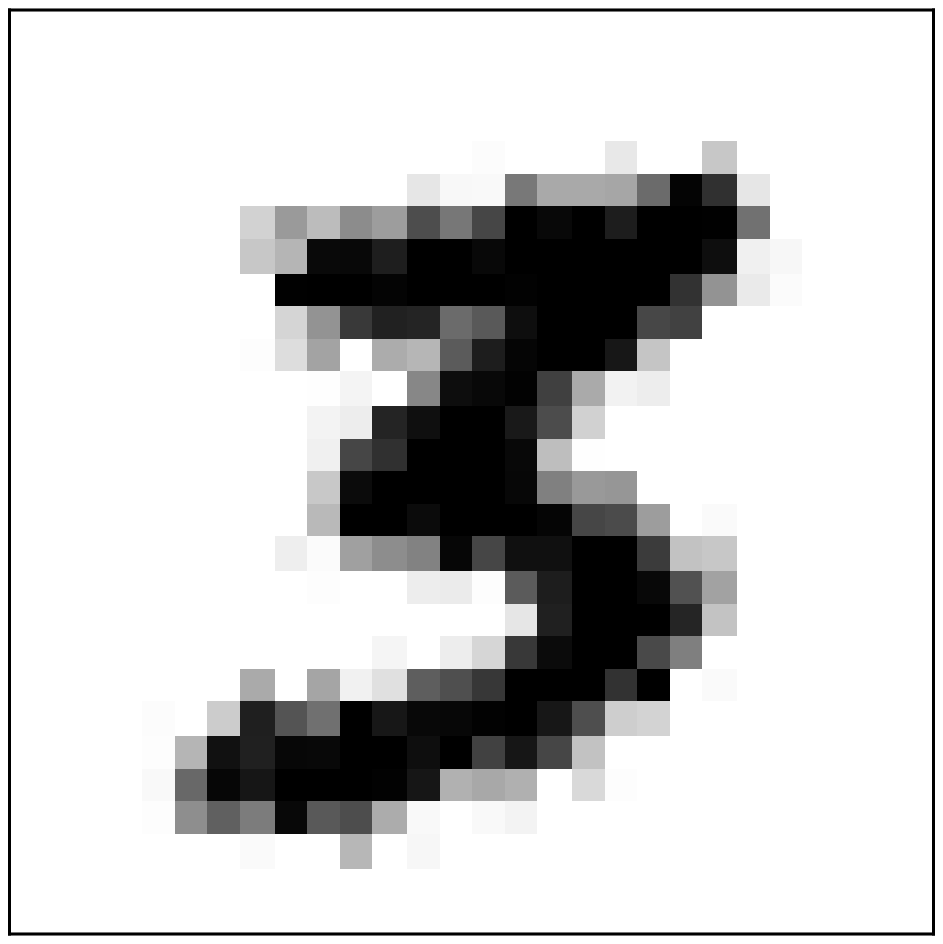}
                    \hspace{-0.2cm}
                    \includegraphics[width=0.49\textwidth]{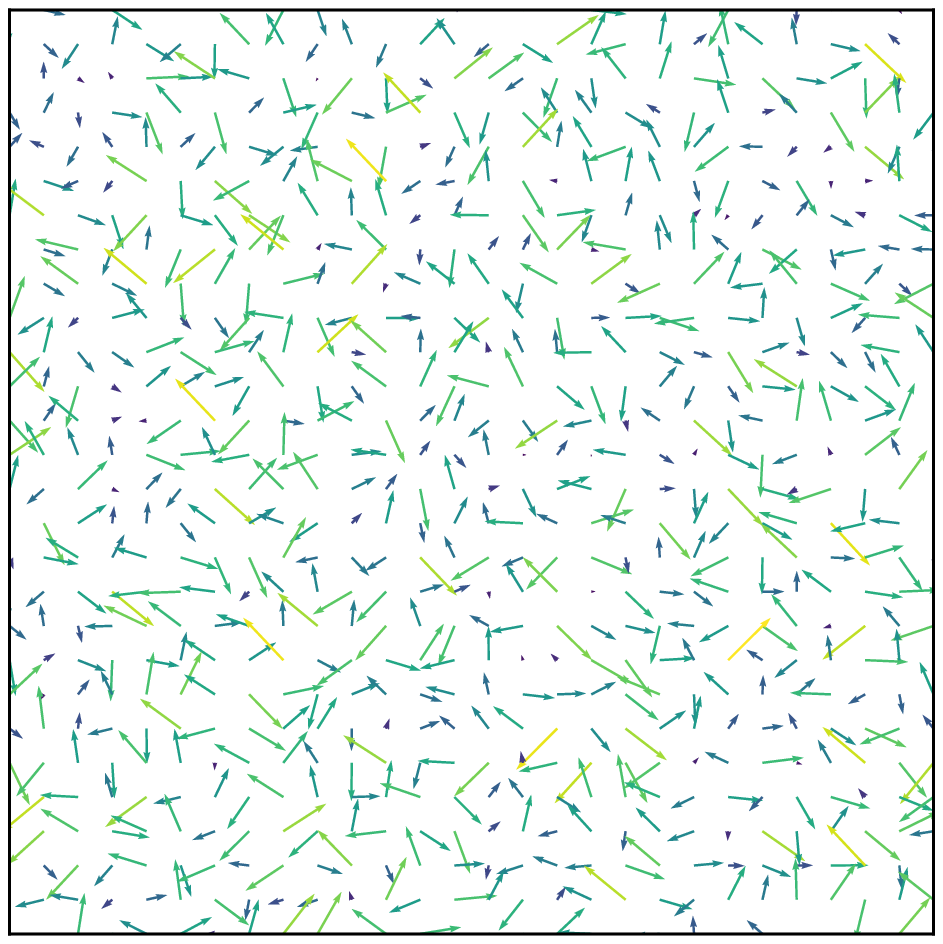}
                    \caption{label 5 (adversarial)}
                \end{subfigure}
                \begin{subfigure}[t]{.2\linewidth}
                    \includegraphics[width=0.49\textwidth]{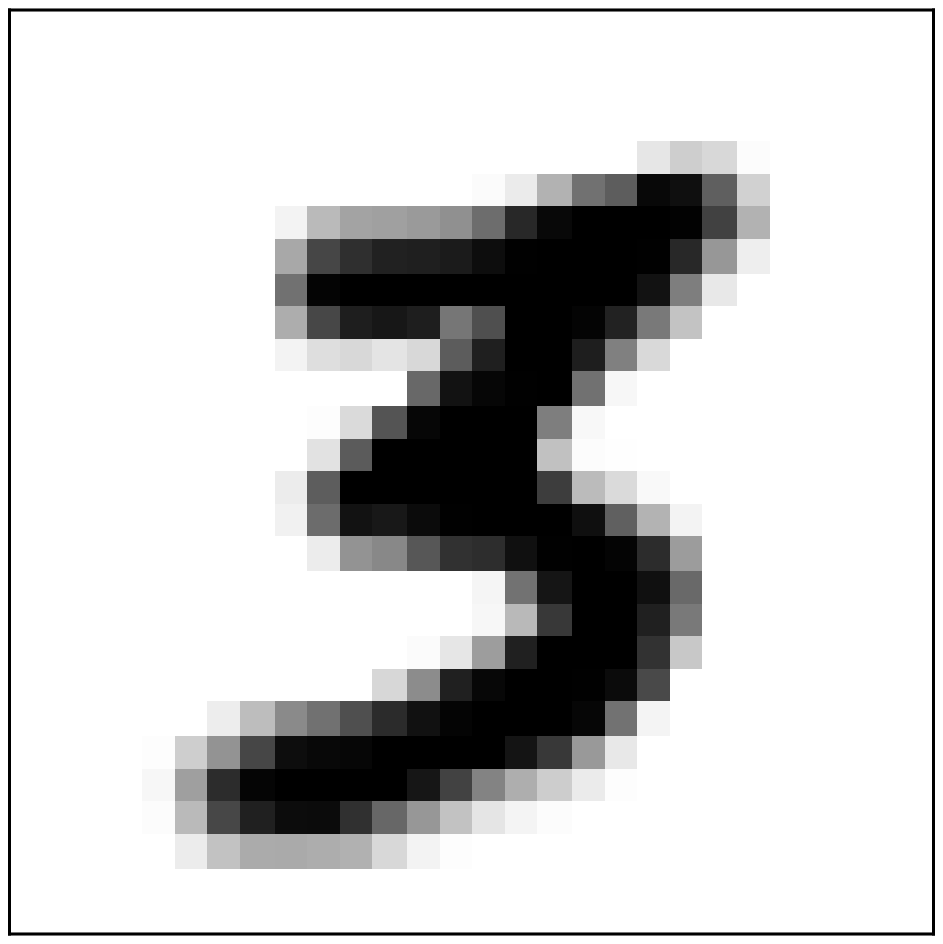}
                    \hspace{-0.2cm}
                    \includegraphics[width=0.49\textwidth]{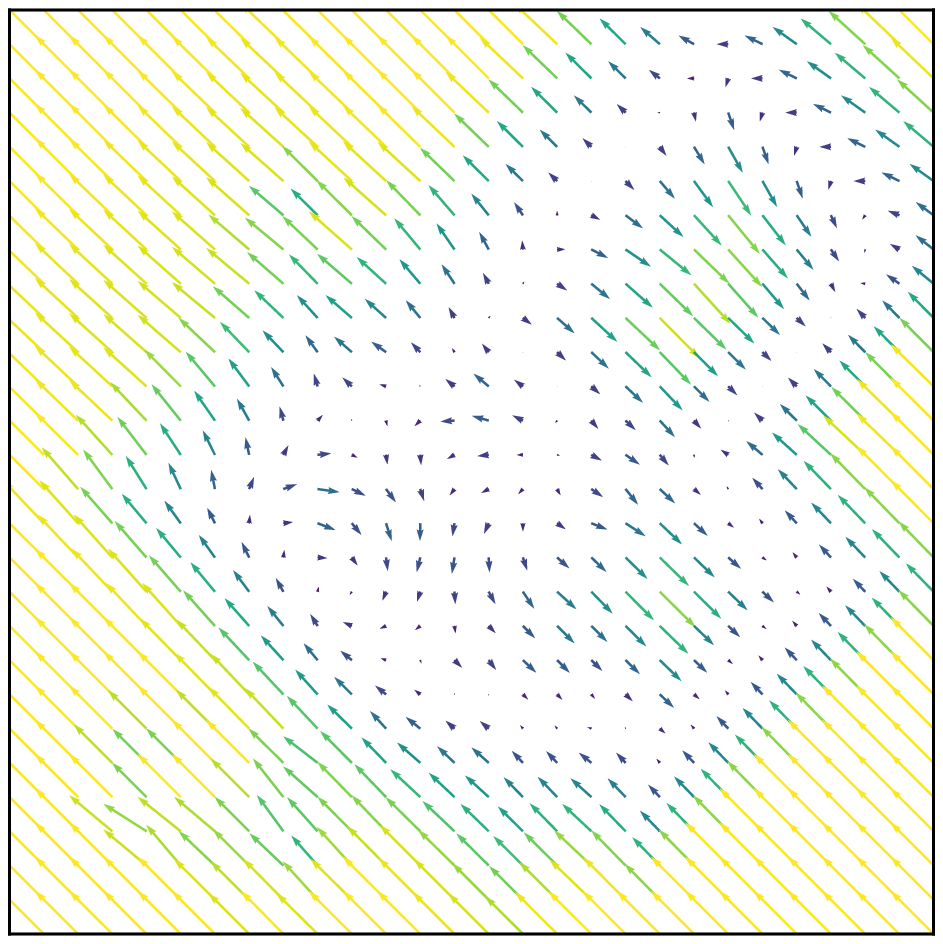}
                    \caption{label 9 (adversarial)}
                \end{subfigure}
                \begin{subfigure}[t]{.2\linewidth}
                    \includegraphics[width=0.49\textwidth]{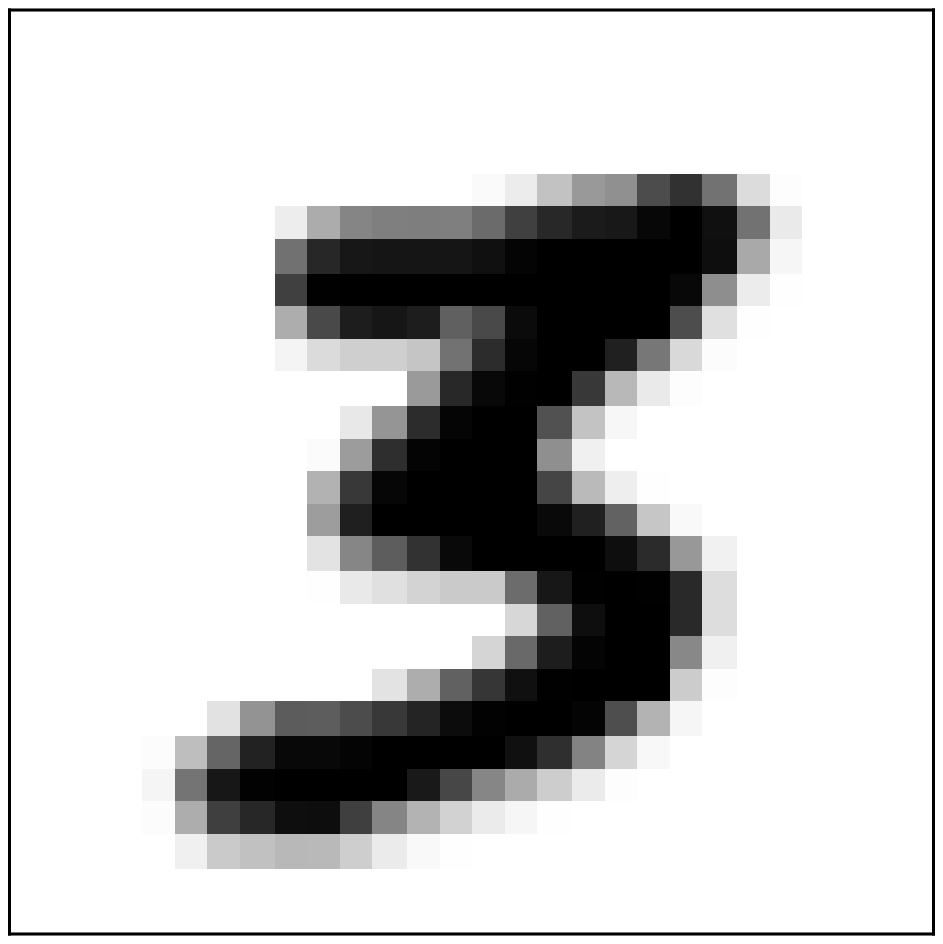}
                    \hspace{-0.2cm}
                    \includegraphics[width=0.49\textwidth]{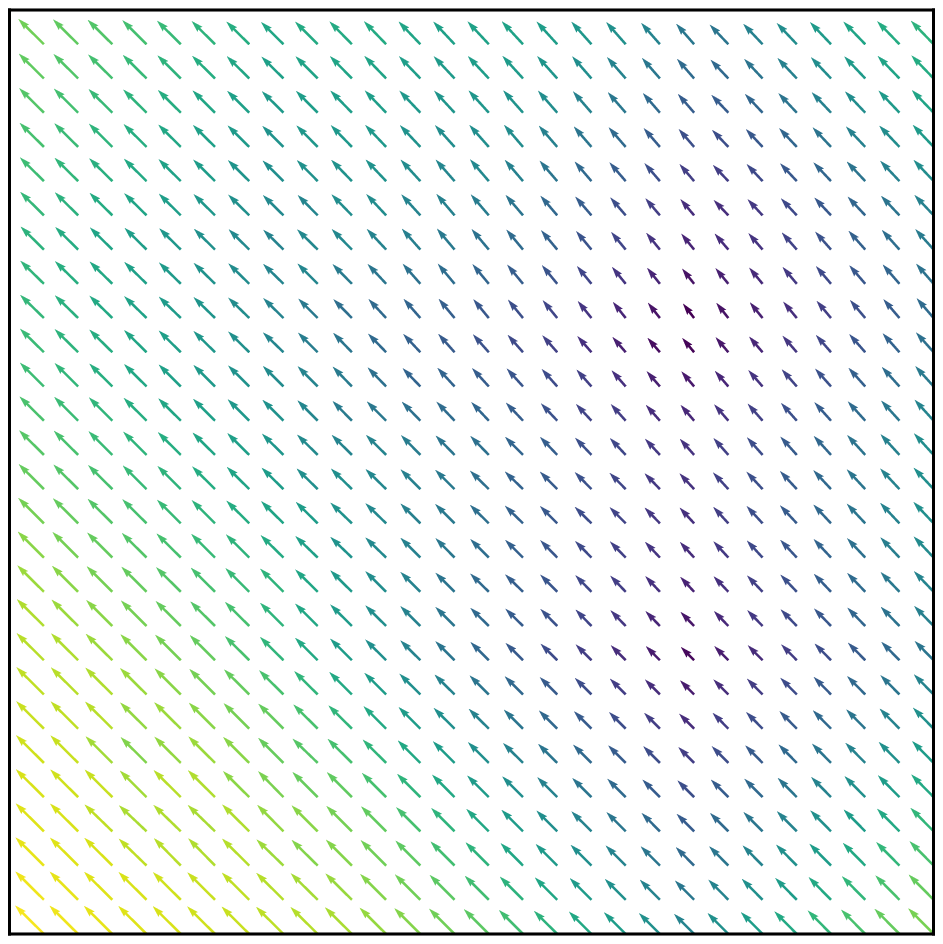}
                    \caption{label 3 (certified)}
                \end{subfigure}
                \begin{subfigure}[t]{.2\linewidth}
                    \includegraphics[width=0.49\textwidth]{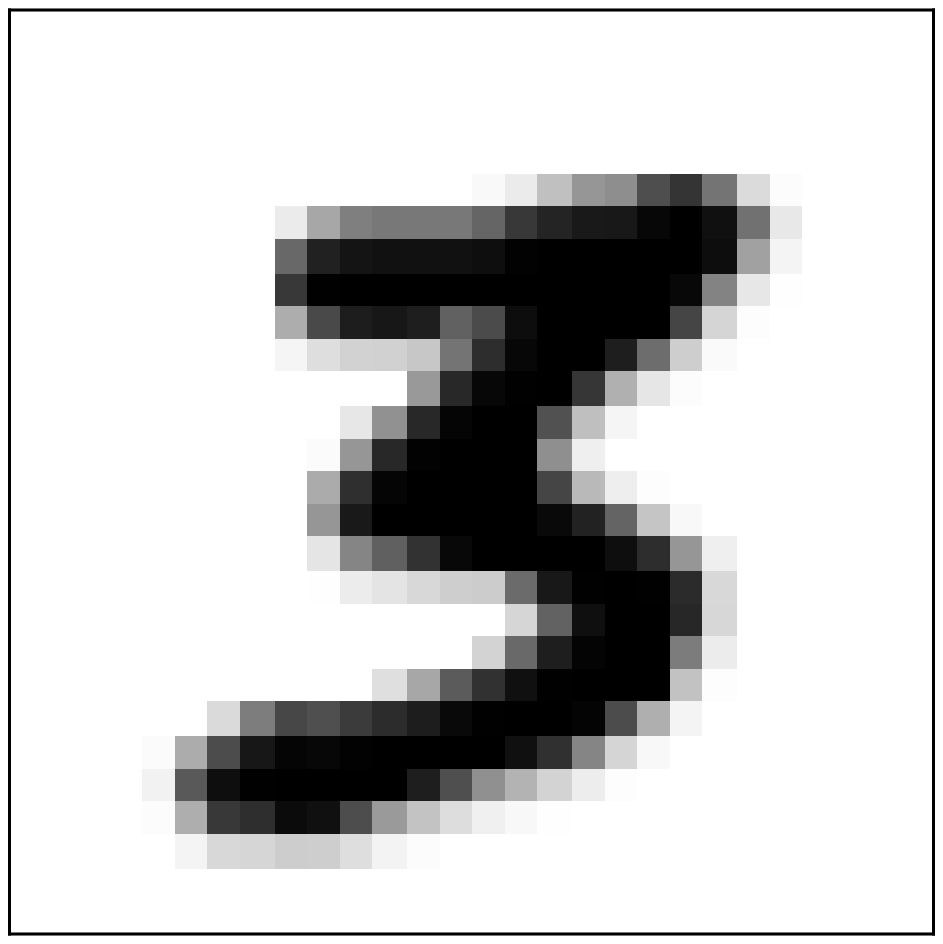}
                    \hspace{-0.2cm}
                    \includegraphics[width=0.49\textwidth]{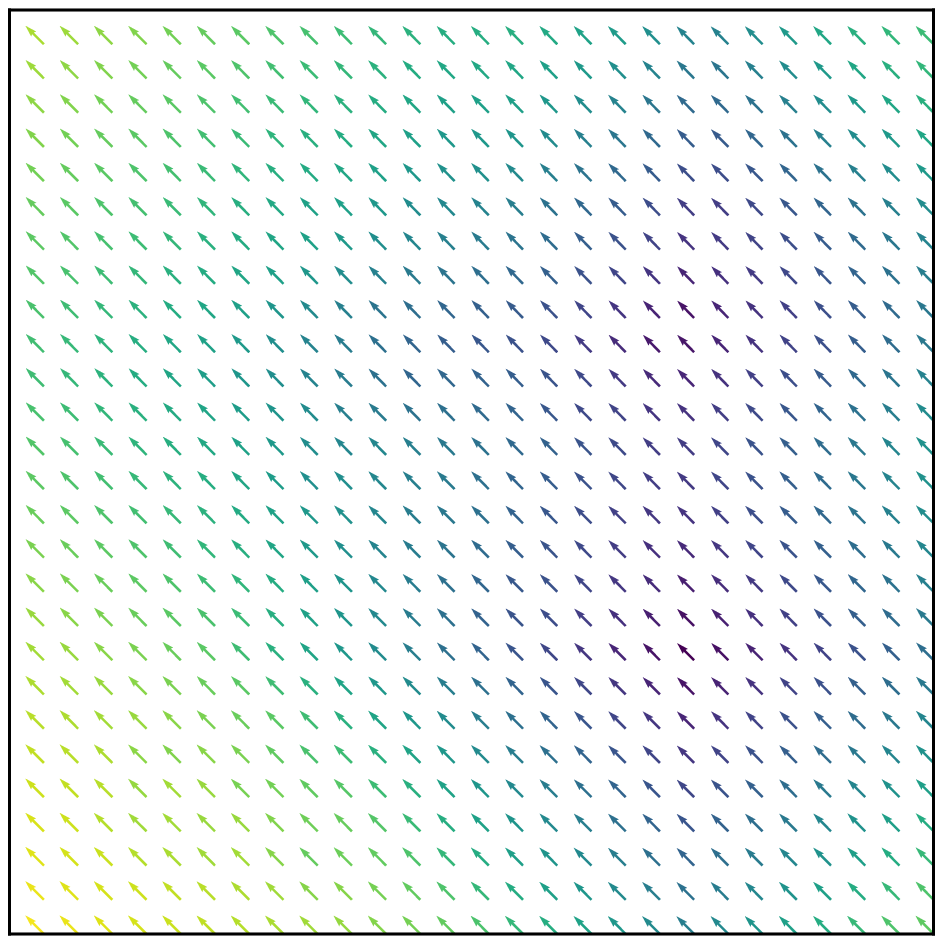}
                    \caption{label 3 (certified)}
                \end{subfigure}
            \end{center}
        \end{subfigure}
        \vspace{0.25cm}
        \begin{subfigure}{\dimexpr0.9\linewidth+20pt\relax}
            \begin{center}
                \makebox[20pt]{\raisebox{20pt}{\rotatebox[origin=c]{90}{$\bm{\delta = 0.6}$}}}
                \begin{subfigure}[t]{.1\linewidth}
                    \includegraphics[width=0.98\textwidth]{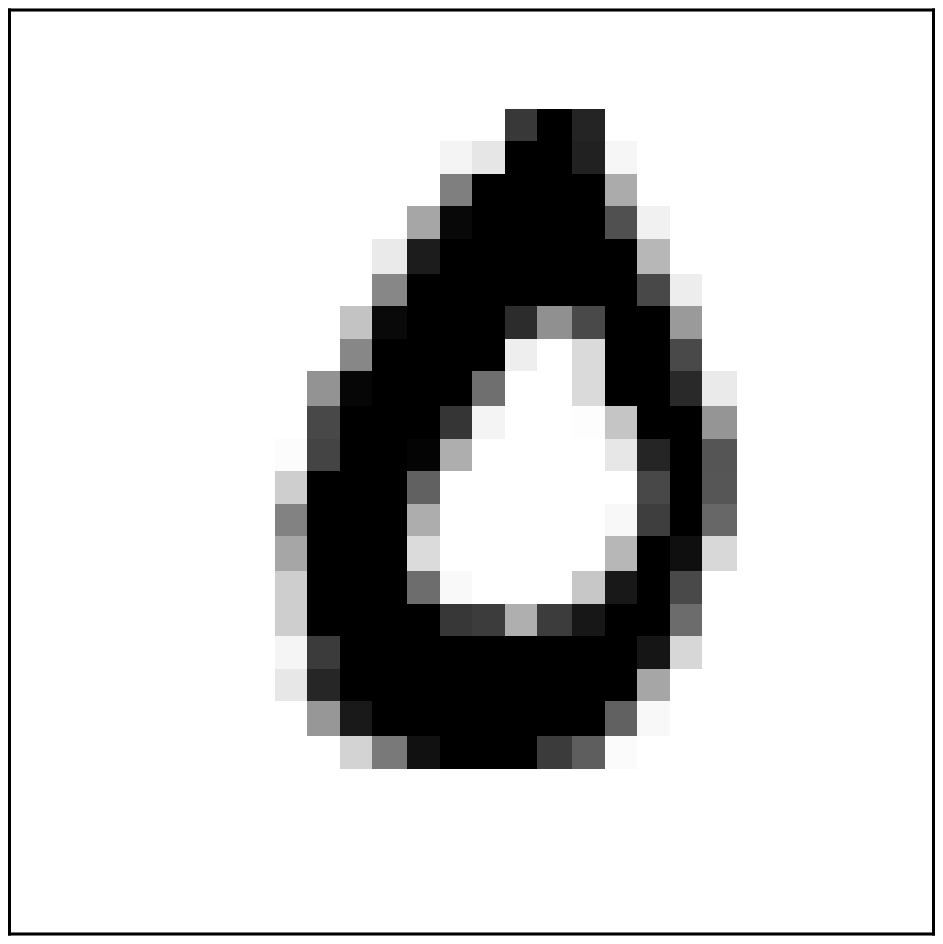}
                \end{subfigure}
                \begin{subfigure}[t]{.2\linewidth}
                    \includegraphics[width=0.49\textwidth]{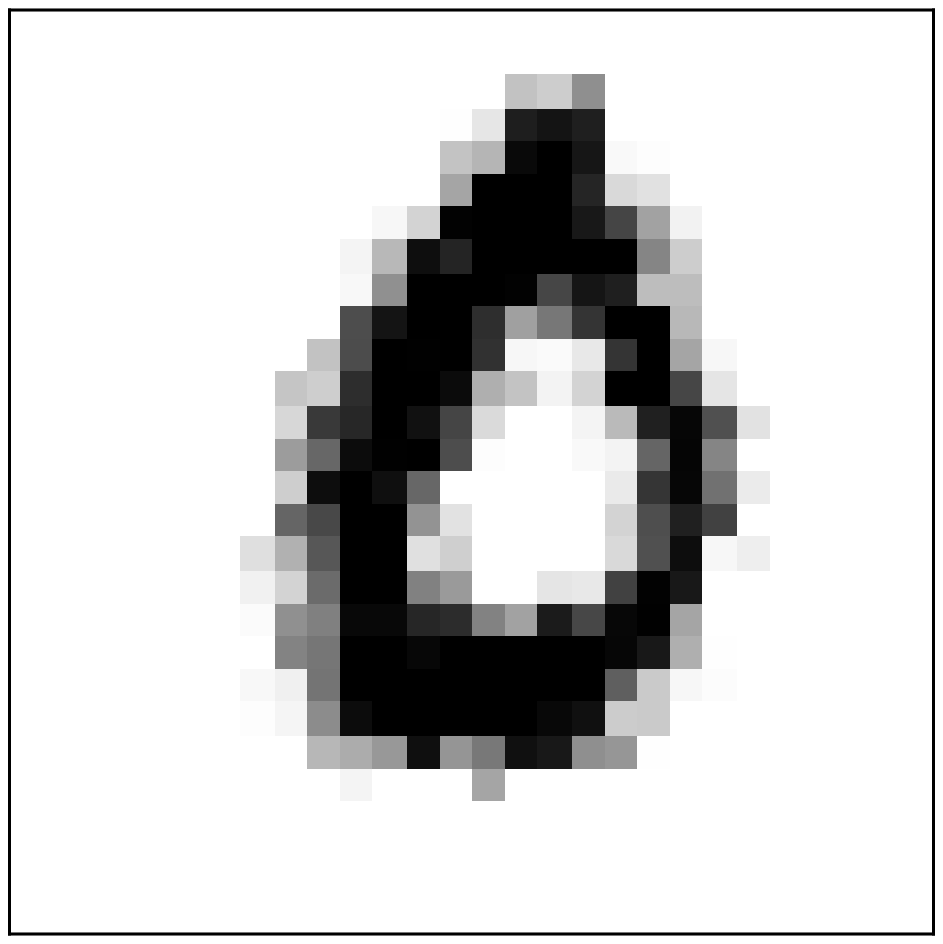}
                    \hspace{-0.2cm}
                    \includegraphics[width=0.49\textwidth]{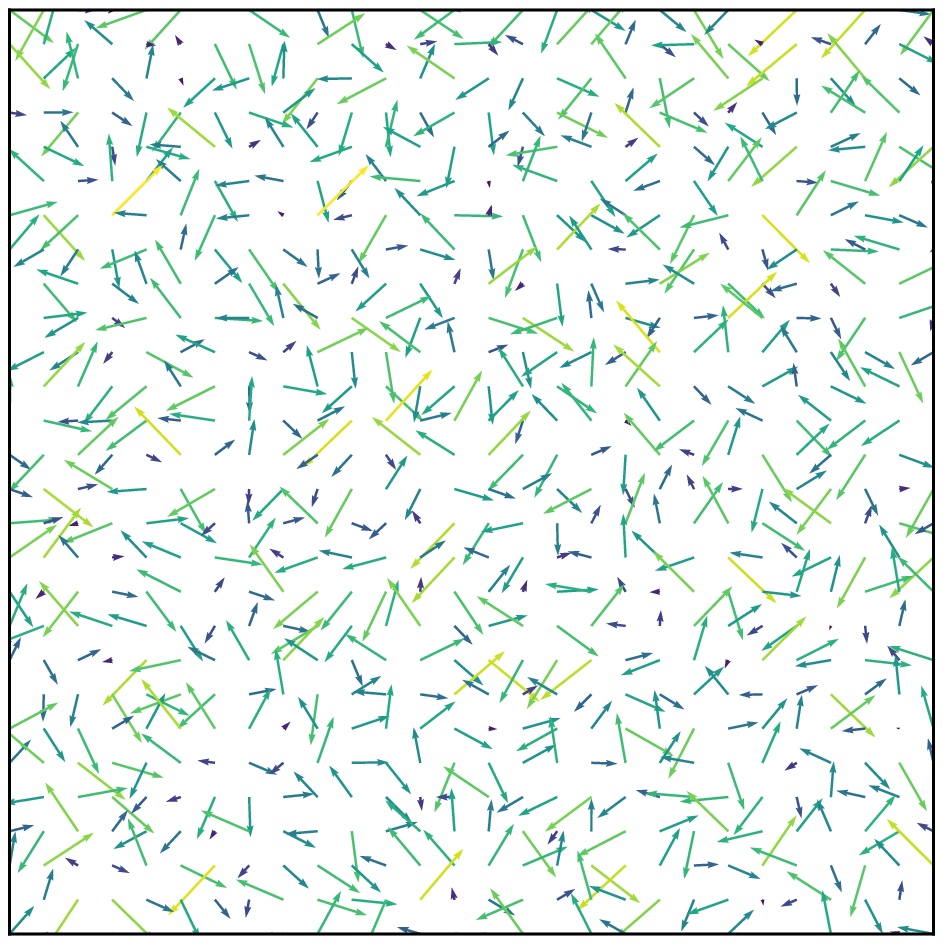}
                    \caption{label 4 (adversarial)}
                \end{subfigure}
                \begin{subfigure}[t]{.2\linewidth}
                    \includegraphics[width=0.49\textwidth]{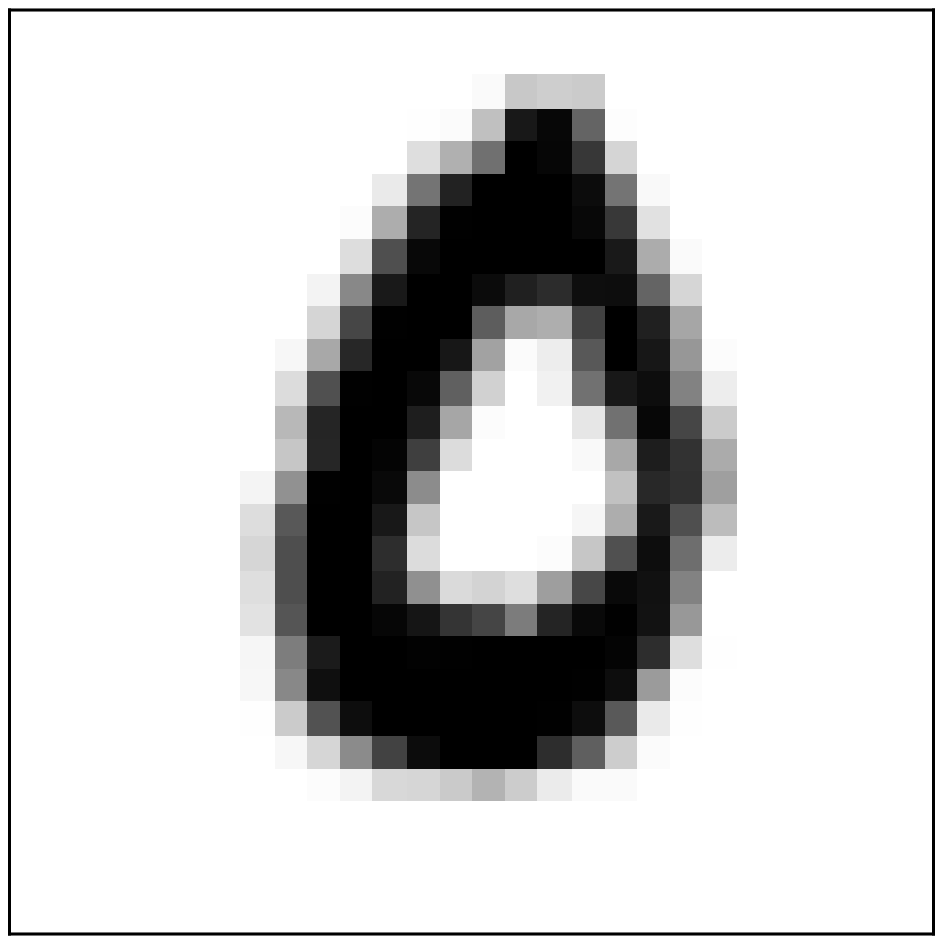}
                    \hspace{-0.2cm}
                    \includegraphics[width=0.49\textwidth]{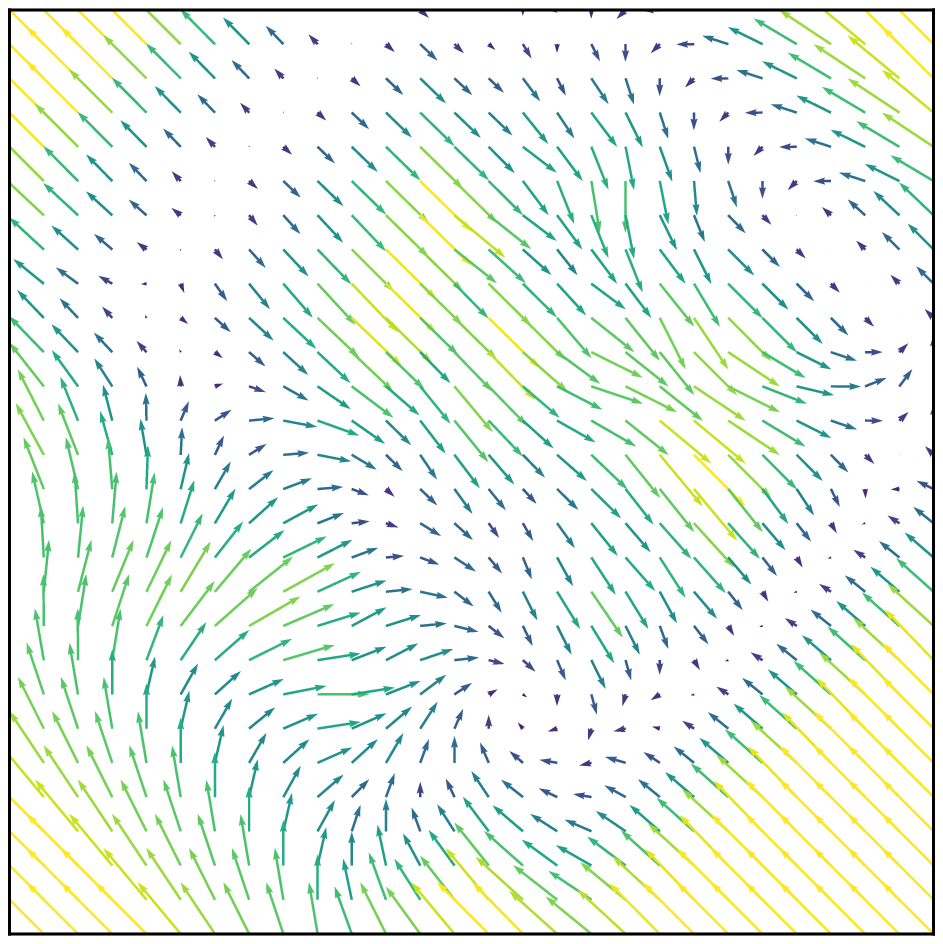}
                    \caption{label 0 (certified)}
                \end{subfigure}
                \begin{subfigure}[t]{.2\linewidth}
                    \includegraphics[width=0.49\textwidth]{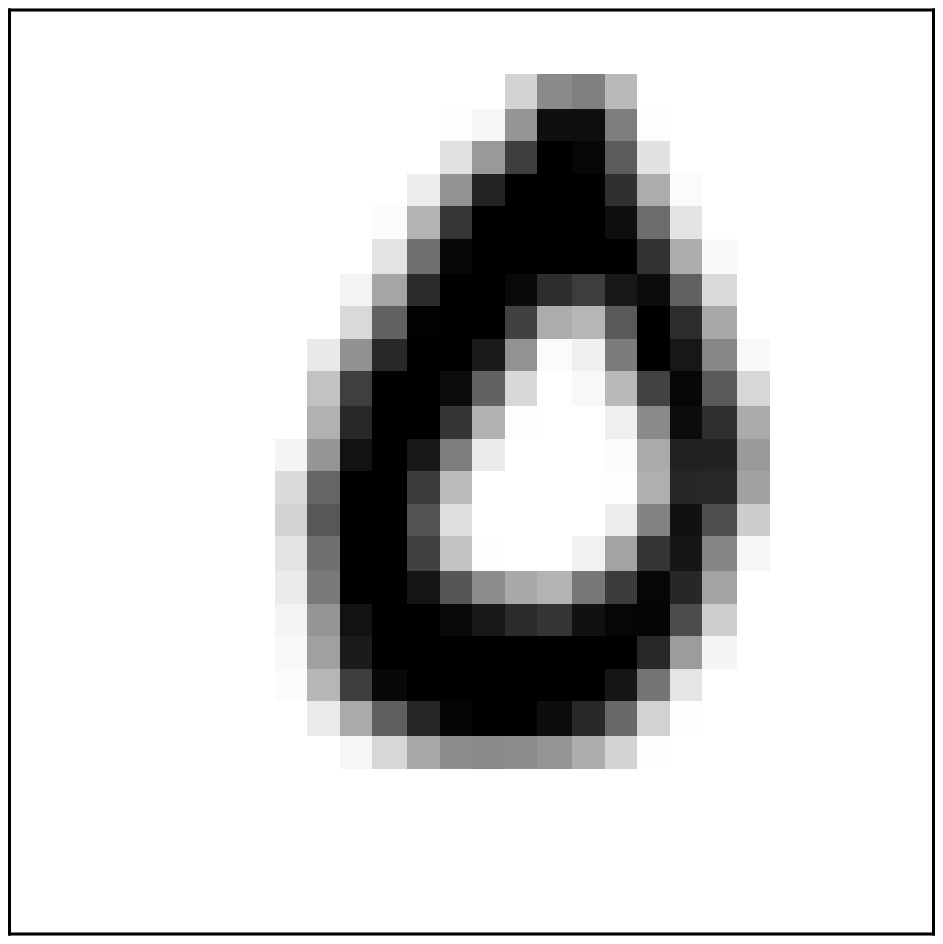}
                    \hspace{-0.2cm}
                    \includegraphics[width=0.49\textwidth]{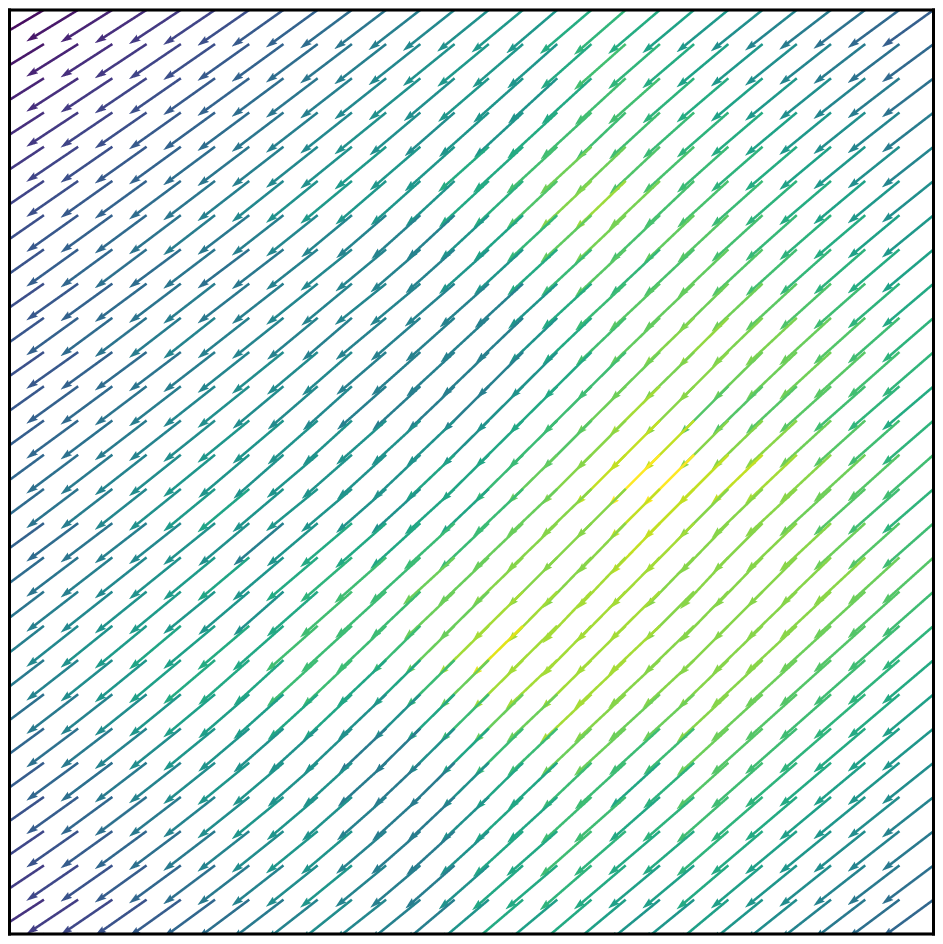}
                    \caption{label 0 (certified)}
                \end{subfigure}
                \begin{subfigure}[t]{.2\linewidth}
                    \includegraphics[width=0.49\textwidth]{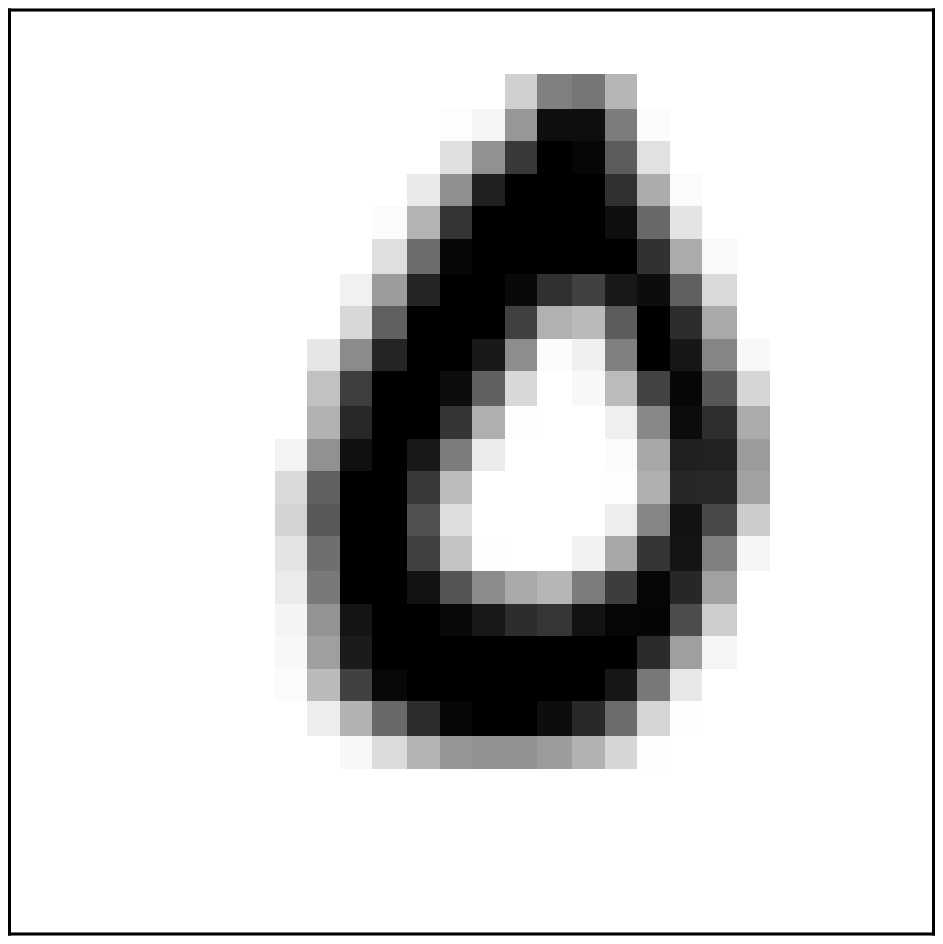}
                    \hspace{-0.2cm}
                    \includegraphics[width=0.49\textwidth]{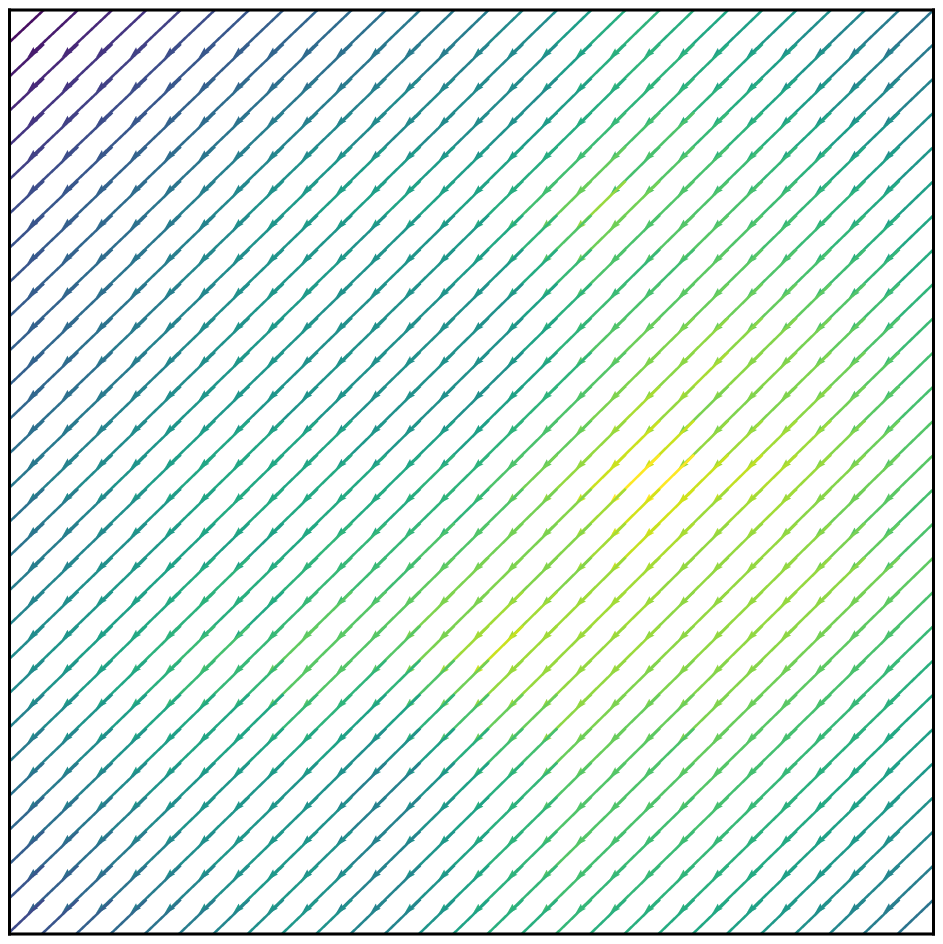}
                    \caption{label 0 (certified)}
                \end{subfigure}
            \end{center}
        \end{subfigure}
        \vspace{0.25cm}
        \begin{subfigure}{\dimexpr0.9\linewidth+20pt\relax}
            \begin{center}
                \makebox[20pt]{\raisebox{20pt}{\rotatebox[origin=c]{90}{$\bm{\delta = 0.7}$}}}
                \begin{subfigure}[t]{.1\linewidth}
                    \includegraphics[width=0.98\textwidth]{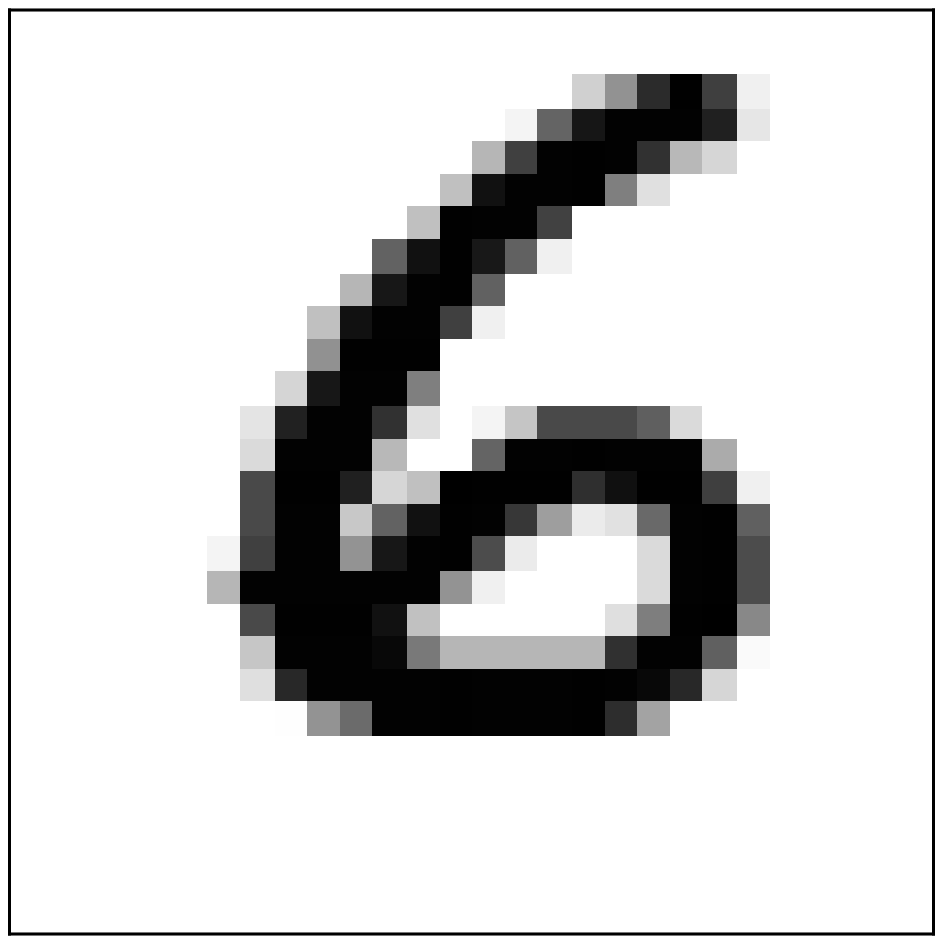}
                \end{subfigure}
                \begin{subfigure}[t]{.2\linewidth}
                    \includegraphics[width=0.49\textwidth]{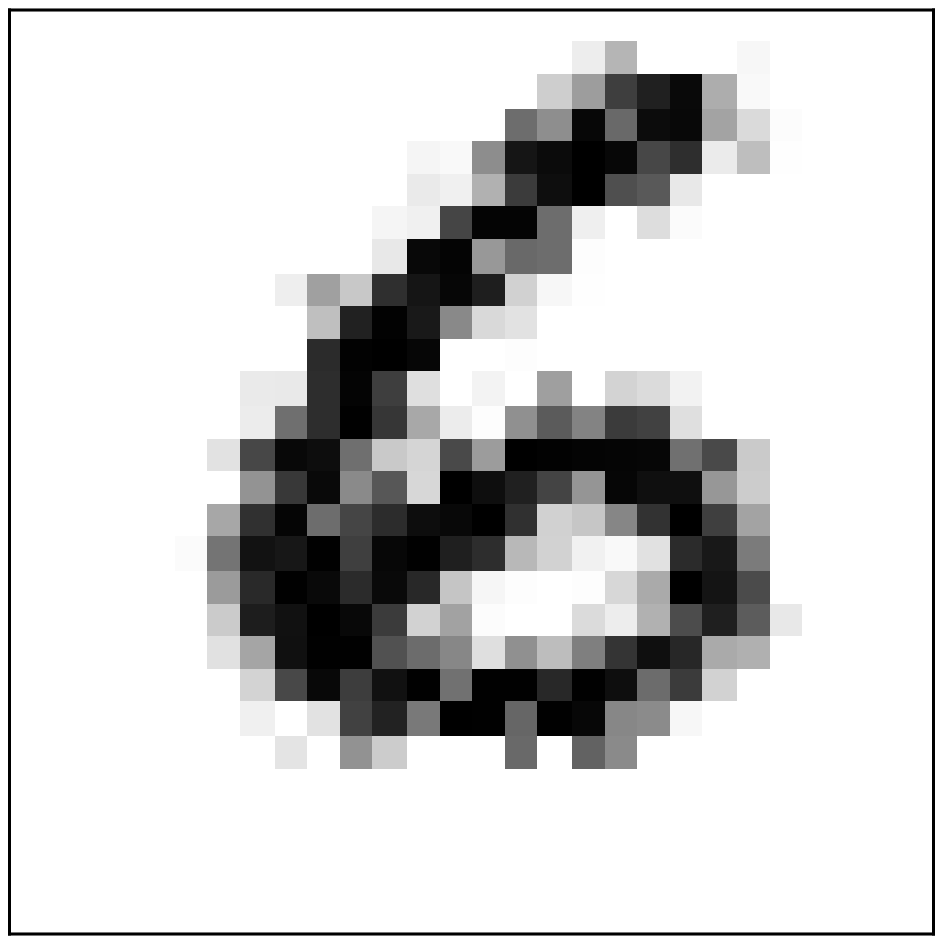}
                    \hspace{-0.2cm}
                    \includegraphics[width=0.49\textwidth]{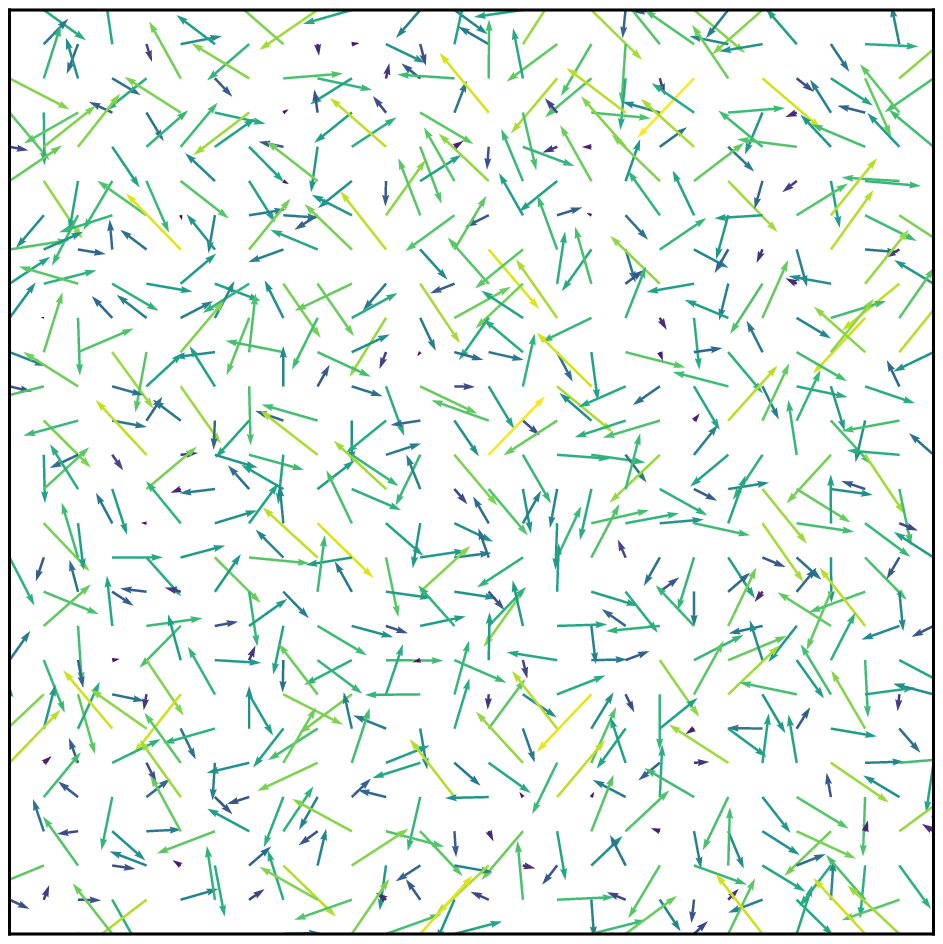}
                    \caption{label 6 (certified)}
                \end{subfigure}
                \begin{subfigure}[t]{.2\linewidth}
                    \includegraphics[width=0.49\textwidth]{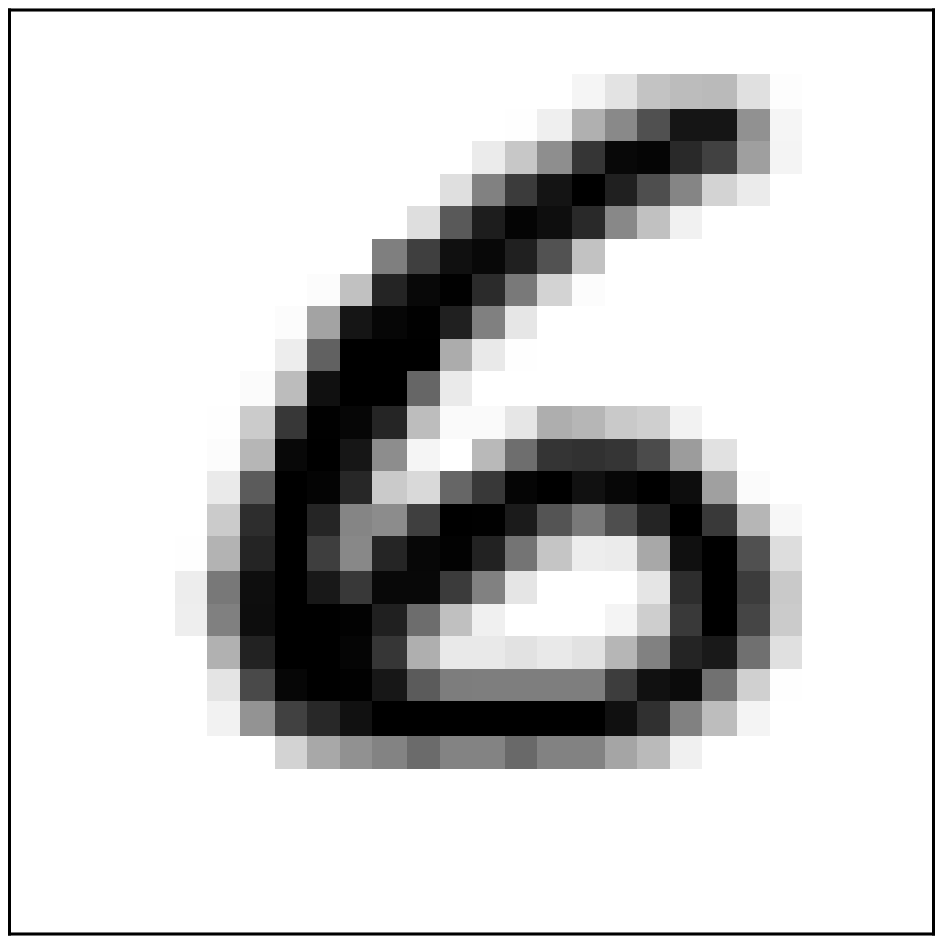}
                    \hspace{-0.2cm}
                    \includegraphics[width=0.49\textwidth]{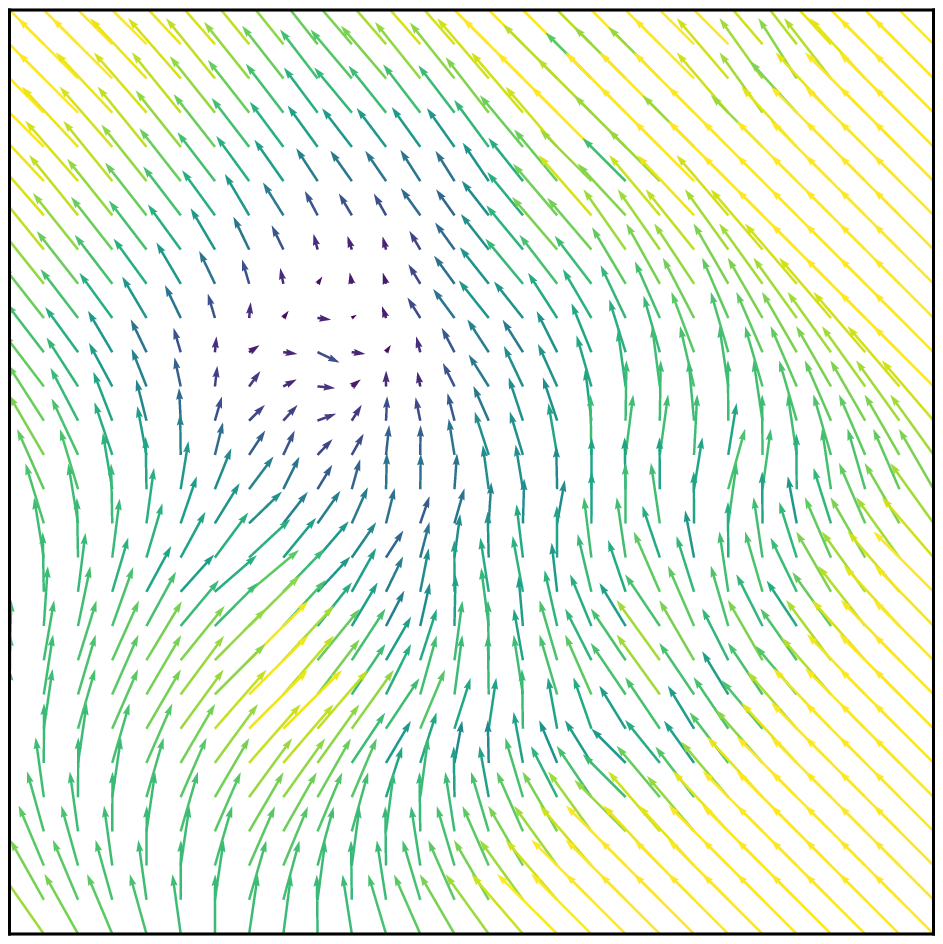}
                    \caption{label 6 (certified)}
                \end{subfigure}
                \begin{subfigure}[t]{.2\linewidth}
                    \includegraphics[width=0.49\textwidth]{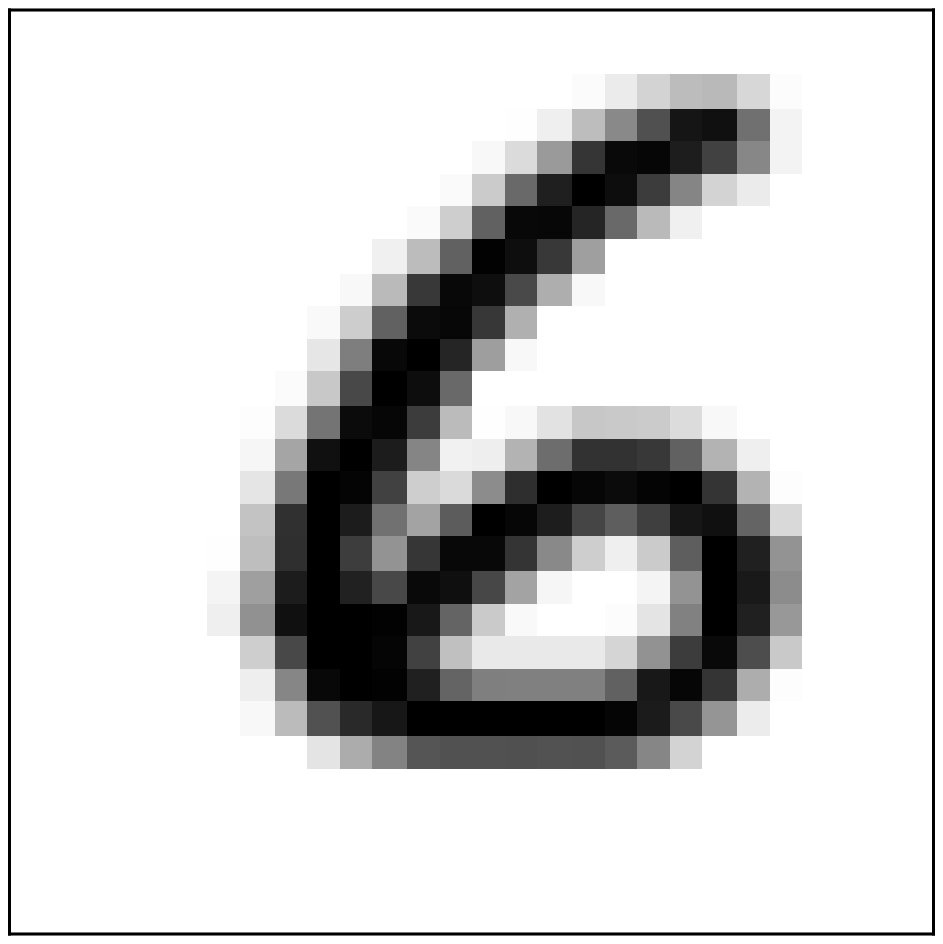}
                    \hspace{-0.2cm}
                    \includegraphics[width=0.49\textwidth]{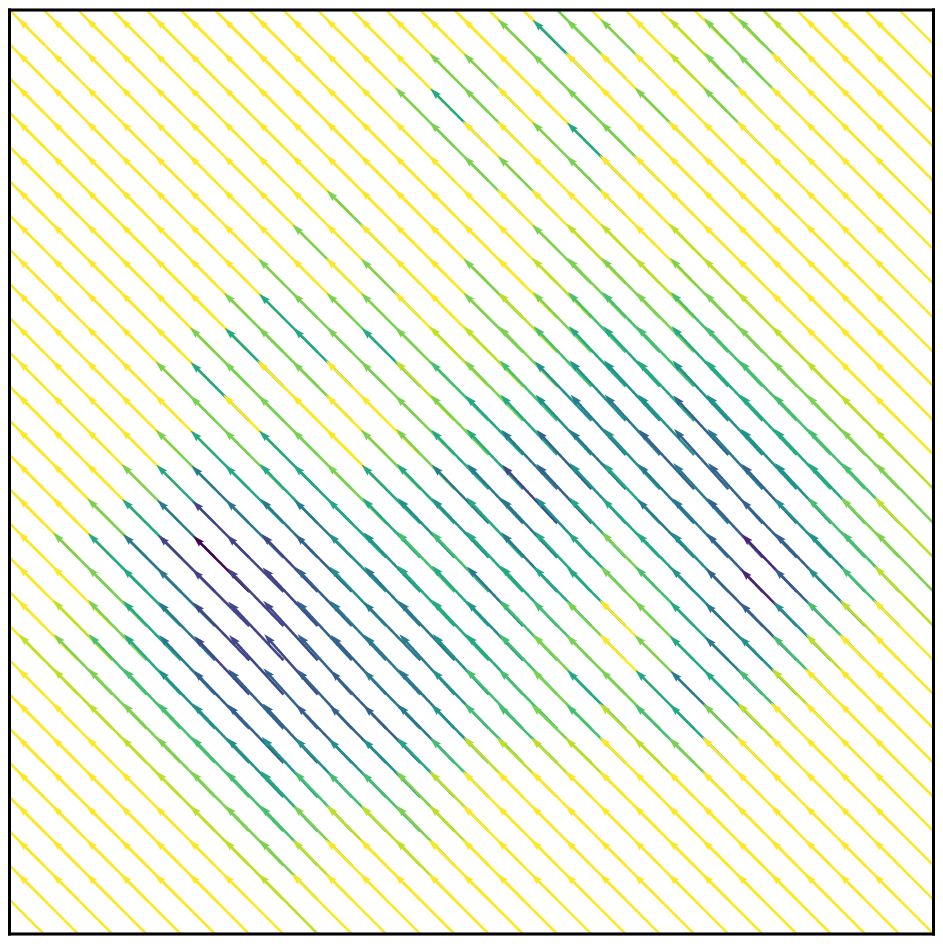}
                    \caption{label 6 (certified)}
                \end{subfigure}
                \begin{subfigure}[t]{.2\linewidth}
                    \includegraphics[width=0.49\textwidth]{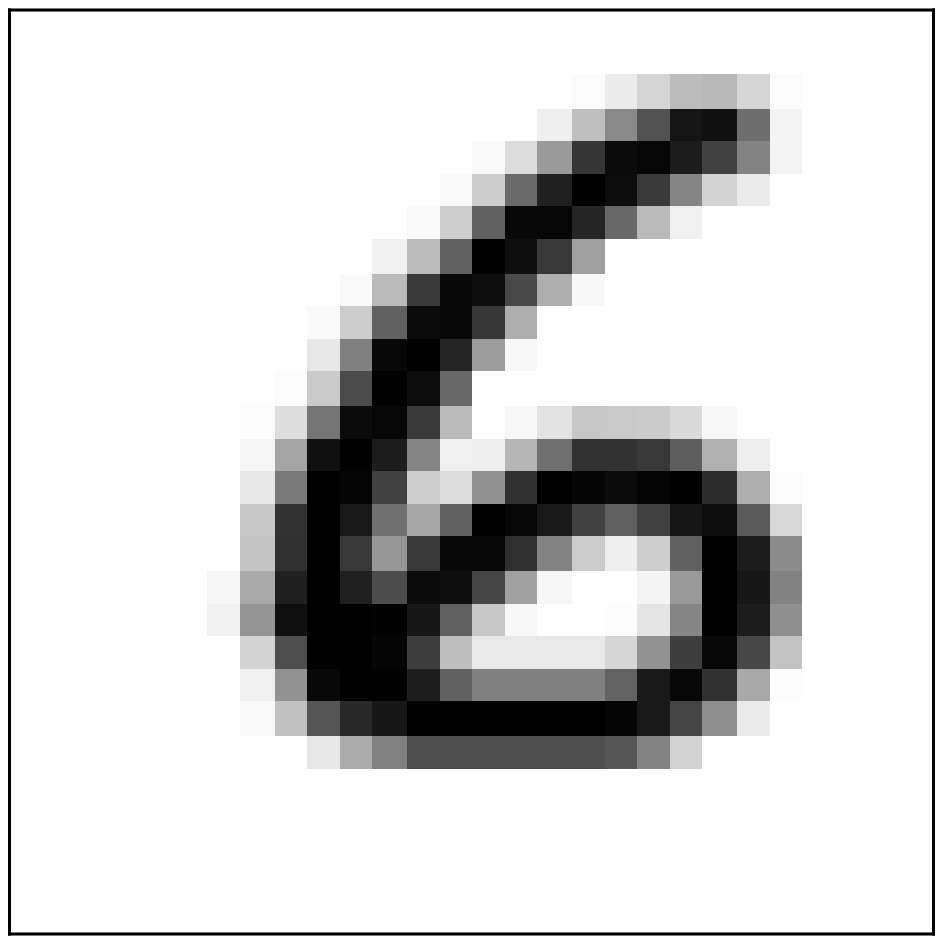}
                    \hspace{-0.2cm}
                    \includegraphics[width=0.49\textwidth]{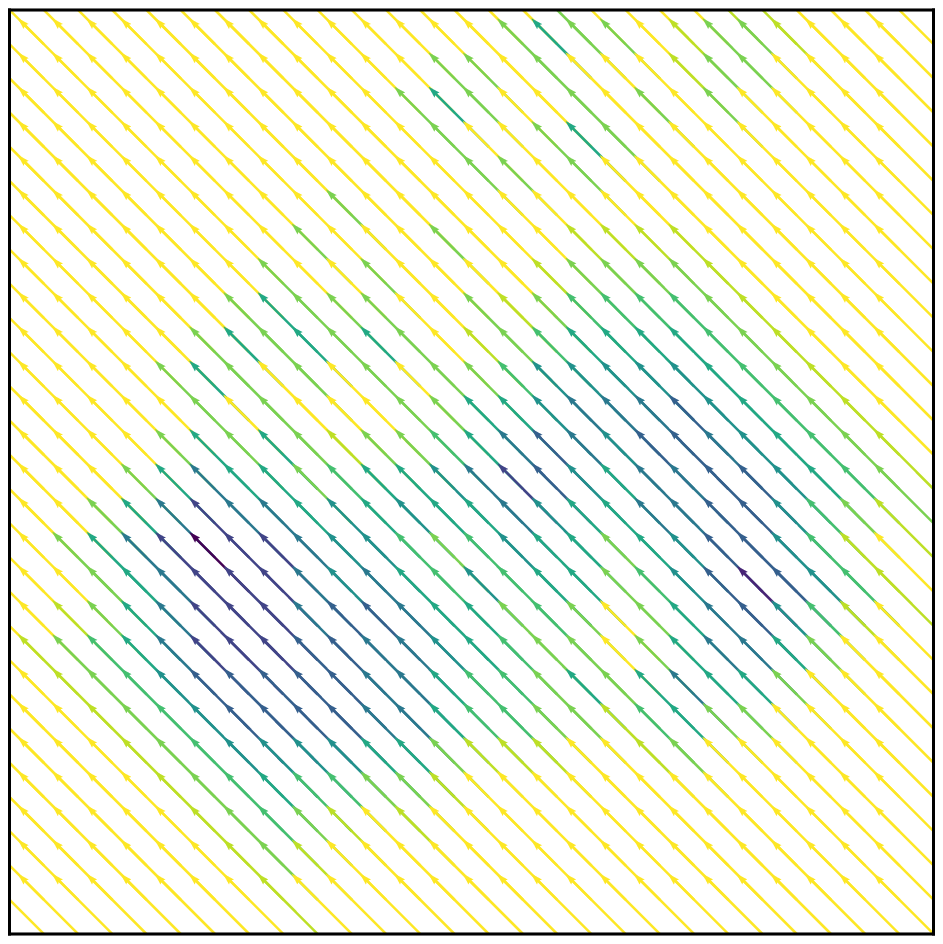}
                    \caption{label 6 (certified)}
                \end{subfigure}
            \end{center}
        \end{subfigure}

    \end{center}
    \caption{
        Image instances and corresponding deforming vector fields $\tau$ with
        displacement magnitude $\| \tau \|_{T_\infty} = \delta$ and flow
        $\gamma$.
    }
    \label{fig:visual-comparison}
\end{figure*}

\begin{table*}
    \begin{center}
        \begin{small}
            \begin{sc}
                \resizebox{0.5\linewidth}{!}{ \begin{tabular}{@{}lrrrrrr@{}}
    \toprule
                & \multicolumn{2}{c}{$T_1$-norm}    & \multicolumn{2}{c}{$T_2$-norm}    & \multicolumn{2}{c}{$T_\infty$-norm} \\
    $\delta$    & DeepPoly  & MILP                  & DeepPoly  & MILP                  & DeepPoly  & MILP \\
    \midrule
    0.3         & 0.1       & 1.4                   & 3.8       & 5.2                   & 0.1       & 1.7 \\
    0.5         & 0.1       & 1.5                   & 4.1       & 6.4                   & 0.1       & 1.9 \\
    0.7         & 0.1       & 2.4                   & 2.6       & 3.6                   & 0.1       & 4.0 \\
    0.9         & 0.1       & 5.6                   & 4.3       & 11.5                  & 0.1       & 40.8 \\
    \bottomrule
\end{tabular}

 }
            \end{sc}
        \end{small}
    \end{center}
    \caption{
        Average running times (in seconds) for $T_p$-norm certification with
        $\gamma = \infty$ of \textsc{ConvSmall DiffAI} on MNIST.
    }
    \label{tab:times-t-norm-comparison}
\end{table*}

\section{Visual Investigation}
\label{sec:visual-investigation}

In this section, we compare original and deformed MNIST images for different
$T_\infty$-norm values $\delta$ and flow-constraints $\gamma$.
To that end, we employ the \textsc{ConvSmall DiffAI} network together with our
convex relaxation with MILP to evaluate whether the deformed images are
adversarial or certified (or neither) and we display the images
in~\cref{fig:visual-comparison}.
Note that since both MILP and our interval bounds are exact for single-channel
images, the fact that~\cref{fig:visual-comparison:certified-but-not-attacked}
is not adversarial is due to failure of the attack, as an adversarial vector
field with $\delta = 0.3$ and $\gamma = \infty$ must exist (else MILP would
certify this image).
On the other hand,
for~\cref{fig:visual-comparison:not-certified-and-not-attacked} we cannot say
whether the attacker fails to find an adversarial image or if our convex
relaxation is not precise enough to certify the image (since we are using an
over-approximation to enforce the flow-constraints).

    }{}


    \message{^^JLASTPAGE \thepage^^J}

\end{document}